\documentclass{article}
\usepackage{geometry}
\usepackage[numbers]{natbib}
\newif\ifarxiv
\arxivtrue
\date{}
\usepackage{times}

\usepackage[utf8]{inputenc} %
\usepackage[T1]{fontenc}    %
\usepackage{hyperref}       %
\usepackage{url}            %
\usepackage{booktabs}       %
\usepackage{amsfonts}       %
\usepackage{nicefrac}       %
\usepackage{microtype}      %
\usepackage{xcolor}         %

\usepackage{arxiv-macros}
\usepackage{enumitem}

\usepackage{todonotes}

\newcommand{\rnote}[1]{}
\newcommand{\mnote}[1]{}
\newcommand{\cnote}[1]{}

\newcommand{\old}[1]{}

\title{Private Algorithms for Stochastic Saddle Points and Variational Inequalities: Beyond Euclidean Geometry}

\author{
\makebox[1.8in]{Raef Bassily \thanks{ 
Department of Computer Science \& Engineering,
Translational Data Analytics Institute (TDAI), 
The Ohio State University, \texttt{bassily.1@osu.edu}}}
\makebox[1.8in]{Crist\'obal Guzm\'an \thanks{
Inst.~for Mathematical and Comput.~Eng.,
Fac.~de Matem\'aticas and Esc.~de Ingenier\'ia,
Pontificia Universidad Cat\'olica de Chile,
\texttt{crguzmanp@uc.cl} }}
\makebox[1.8in]{Michael Menart  \thanks{Department of Computer Science \& Engineering 
The Ohio State University, 
Department of Computer Science,
University of Toronto, 
Vector Institute, 
\texttt{menart.2@osu.edu}} \thanks{This work was done while M. Menart was at The Ohio State University.}}
}

\begin{document}

\maketitle

\begin{abstract}%
In this work, we conduct a systematic study of stochastic saddle point problems (SSP) and stochastic variational inequalities (SVI) under the constraint of $(\epsilon,\delta)$-differential privacy (DP) in both Euclidean and non-Euclidean setups. We first consider Lipschitz convex-concave SSPs in the $\ell_p/\ell_q$ setup, $p,q\in[1,2]$. That is, we consider the case where the primal problem has an $\ell_p$-setup (i.e., the primal parameter is constrained to an $\ell_p$ bounded domain and the loss is $\ell_p$-Lipschitz with respect to the primal parameter) and the dual problem has an $\ell_q$ setup. Here, we obtain a bound of $\tilde{O}\big(\frac{1}{\sqrt{n}} + \frac{\sqrt{d}}{n\epsilon}\big)$ on the strong SP-gap, where $n$ is the number of samples and $d$ is the dimension. This rate is nearly optimal for any $p,q\in[1,2]$. Without additional assumptions, such as smoothness or linearity requirements, prior work under DP has only obtained this rate when $p=q=2$ (i.e., only in the Euclidean setup). Further, existing algorithms have each only been shown to work for specific settings of $p$ and $q$ and under certain assumptions on the loss and the feasible set, whereas we provide a general algorithm for DP SSPs whenever $p,q\in[1,2]$. Our result is obtained via a novel analysis of the recursive regularization algorithm. In particular, we develop new tools for analyzing generalization, which may be of independent interest. Next, we turn our attention towards SVIs with a monotone, bounded and Lipschitz operator and consider $\ell_p$-setups, $p\in[1,2]$. Here, we provide the first analysis which obtains a bound on the strong VI-gap of $\tilde{O}\big(\frac{1}{\sqrt{n}} + \frac{\sqrt{d}}{n\epsilon}\big)$. For $p-1=\Omega(1)$, this rate is near optimal due to existing lower bounds. To obtain this result, we develop a modified version of recursive regularization. Our analysis builds on the techniques we develop for SSPs as well as employing additional novel components which handle difficulties arising from adapting the recursive regularization framework to SVIs.
\end{abstract}

\section{Introduction}
Stochastic saddle point problems (SSP), are an increasingly important part of the machine learning toolkit. These problems model optimization settings with an inherent min-max structure, and for this reason are also referred to as stochastic min-max optimization problems. Concretely, the goal is to find an approximate solution of the following problem defined over a convex-concave loss,
\begin{equation}\label{eqn:SSP}
\min_{w\in\cW} \max_{\theta\in\Theta} \Big\{ F_{\cal D}(w,\theta):= \mathbb{E}_{x\sim{\cal D}}[f(w,\theta;x)] \Big\}, 
\end{equation}
where ${\cal D}$ is an unknown distribution for which we have access to an i.i.d.~sample $S$. 
Problems of this kind have important applications in stochastic optimization \citep{NJLS09,Juditsky:2011,Zhang:2015stochastic}, federated learning \cite{mohri-agnostic-fl}, distributionally robust learning  \citep{Yu:2021, zhang2024stochastic,zhou2024differentially}, reinforcement learning \cite{dai-reinforcement18}, and algorithmic fairness \citep{agarwal18a, Williamson:2019}.

Closely related to saddle point problems are stochastic variational inequalities (SVIs). Given a monotone operator, $G_{\cD}(z):= \ex{x\sim\cD}{g(z;x)}$, the objective is to approximate the point $z^*\in\cZ$, where
\begin{align} \label{eq:svi}
    \ip{G_{\cD}(z^*)}{z^*-z} \leq 0, ~~ \forall z\in\cZ.
\end{align}
Stochastic saddle point problems can be easily related to variational inequalities by observing that the \textit{saddle operator} (an operator closely related to the gradient) of a convex-concave function is monotone. 
While SSPs and SVIs are closely related, it can be the case that a problem which can be formulated as a monotone SVI is not easily cast as a convex-concave SSP \cite{juditsky:hal-03185479}. 

Parallel to the above, the problem of \textit{privacy} has become increasingly important in the big data era. In this regard, the notion of differential privacy has arisen as the premier standard. Stochastic optimization problems are a natural target for privacy concerns due to the fact that they are frequently formulated using a dataset of (potentially sensitive) individual data records. For many such problems, the constraint of differential privacy necessitates fundamentally new rates and techniques, and as such the formal characterization of these problems is an important task.

Thus far, work on differentially private SSPs and SVIs has focused primarily on Euclidean settings. However, a number of important, including many of those referenced at the start of this section, are naturally formulated in other geometries. Prior to this work, the optimal utility rate for DP SSPs was  known only in Euclidean and polyhedral settings. For SVIs, the best achievable utility was unknown in any geometry (including Euclidean), at least under canonical utility measures. In this work, we provide the first systematic study of SSPs and SVIs in general geometries. The new analysis tools we develop lead to optimal rates for a number of these important setups. 

\subsection{Contributions}
In this work, we provide the first systematic study of stochastic saddle point problems and variational inequalities in both Euclidean and non-Euclidean geometries. 
Our first results pertain to stochastic saddle point problems where the primal problem has an $\ell_p$-setup and the dual problem has an $\ell_q$-setup, where $p,q\in[1,2]$. Here we assume the convex-concave loss is Lipschitz. We generalize the recursive regularization framework developed in previous works to more handle non-Euclidean geometries \cite{allen2018make,BGM23}. 
At the heart of this extension is a fundamentally new analysis of the generalization properties of this algorithm. The issue of generalization has in fact been a key issue at the heart of many other works studying SSPs 
\cite{lei-stability-generalization-minimax, OPZZ22,BGM23}, as the presence of a supremum in the strong SP-gap accuracy measure (see Eqn. \eqref{eq:strong-sp-gap}) breaks more traditional generalization techniques. In contrast to prior work, our generalization technique works by avoiding entirely any generalization bound for the strong gap itself. Rather, we introduce a new accuracy measure measure which, when used in conjunction with the recursive regularization algorithm, eventually translates into a strong gap guarantee. Our technique stands in particular contrast to \cite{BGM23}, which is thus far the only work in the DP literature 
to obtain optimal strong SP-gap rates in the Euclidean setting and also uses recursive regularization. However, their technique fundamentally relies on a McDiarmids style concentration bound that is worse by a $poly(d)$ factor in non-Euclidean setups such as the $\ell_1$ setting \cite{Paninski2008ACT}. 
Using these new techniques, we provide the first analysis which obtains the near optimal rate of $\tilde{O}\br{\frac{1}{\sqrt{n}} + \frac{\sqrt{d}}{n\epsilon}}$ on the strong SP-gap for any $p,q\in[1,2]$. 
Our algorithm achieves this rate in $\tilde{O}\big(\min\big\{\frac{n^2\epsilon^{1.5}}{\sqrt{d}}, n^{3/2}\big\}\big)$ number of gradient evaluations.
We note that the near optimality of this rate is established by lower bounds for DP stochastic convex optimization, which is a special case of DP SSPs \cite{bassily2021non,AFKT21}. 
Previously, comparable rates on the strong gap had only been obtained in the case where $p=q=2$ or under strong additional assumptions \cite{BGM23,zhou2024differentially,gonzález2024mirror}.

Next, we consider DP stochastic variational inequalities with a monotone, bounded, and Lipschitz operator. We adapt the recursive regularization framework even further and again leverage a novel generalization analysis. Here, we obtain the rate $\tilde{O}\br{\frac{1}{\sqrt{n}} + \frac{\sqrt{d}}{n\epsilon}}$ on the strong VI-gap (see Eqn. \eqref{eqn:vi_gap}) in the $\ell_p$-setting, $p\in[1,2]$.
Our algorithm again achieves this rate in $\tilde{O}\big(\min\big\{\frac{n^2\epsilon^{1.5}}{\sqrt{d}}, n^{3/2}\big\}\big)$ number of gradient evaluations.
This is the first result to obtain the near optimal convergence rate on the strong VI-gap for $p-1 = \Omega(1)$, which notably includes the Euclidean case. The corresponding lower bound for $p=2$ was established in \cite{BG22}, and we provide a simple extension of their technique to the case where $p-1=\Omega(1)$. See Appendix \ref{app:svi-lower-bound}.
Finally, for the setting $p=2$, we show that our rate can be achieved in a near linear number of gradient evaluations by leveraging acceleration techniques for Lipschitz and strongly monotone variational inequalities.

\subsection{Related Work}
Differentially private stochastic optimization now has a broad body of work spanning over a decade \citep{jain2012differentially, BST14,JTOpt13, TTZ15a, BFTT19, feldman2020private, AFKT21}.
Such work has rigorously characterized the problem of stochastic convex optimization in a variety of geometries. In $\ell_p$ setups, for $p\in[1,2]$, it is now known that the optimal rate is $\tilde{O}(\frac{1}{\sqrt{n}} + \frac{\sqrt{d}}{n\epsilon})$ for such problems \citep{AFKT21,bassily2021non}. It has also been shown that additional improvements are possible in the $\ell_1$ setting under smoothness assumptions.
The study of stochastic saddle point problems under differential privacy is much less developed, but has nonetheless attracted a surge of recent interest \cite{yang-dp-sgda, ZTOH22, BGM23,gonzález2024mirror}. 
Without privacy, optimal $O(1/\sqrt n)$ guarantees on the strong SP-gap have long been known \cite{Nemirovski:1978}. 
With privacy, the (near) optimal $\tilde{O}\big(\frac{1}{\sqrt{n}} + \frac{\sqrt{d}}{n\epsilon}\big)$ rate was obtained only recently, and then only in the Euclidean setting \cite{BGM23}. In fact, work on DP-SSPs has focused largely on the case where $p=q=2$, despite the fact that important problems are naturally formulated in other geometries. In particular, the case where $q=1$ has important applications in distributionally robust optimization, federated learning, and algorithmic fairness. 
In this regard, the works \cite{gonzález2024mirror,zhou2024differentially} have recently studied DP SSPs with $q=1$. 
The work \cite{gonzález2024mirror} studies the $\ell_1/\ell_1$ setting when the loss is additionally assumed to be smooth and the constraint set is polyhedral; they achieved the rate $\tilde{O}\big(\frac{1}{\sqrt{n}} + \frac{1}{(n\epsilon)^{1/2}}\big)$.  We note that smoothness 
is fundamentally necessary in achieving this dimension independent rate, as otherwise existing lower bounds of $\tilde{\Omega}\big(\frac{1}{\sqrt{n}} + \frac{\sqrt{d}}{n\epsilon}\big)$ hold for such problems \cite{AFKT21}. The work \cite{zhou2024differentially} studied the problem of differentially private worst-group risk minimization, which is closely related to DP-SSPs in the $\ell_1/\ell_2$ setting, but requires the loss to have a specific linear structure with respect to the dual parameter. For $\ell_1/\ell_2$ saddle point problems having this structure, their result implies a rate of $O\big(\frac{1}{\sqrt{n}} + \frac{\sqrt{d}}{n\epsilon}\big)$ on the strong gap. 

Work on SVIs is less developed. Non-privately, the optimal strong VI-gap rate of $O(\frac{1}{\sqrt{n}})$ was established in \cite{Juditsky:2011} (although related techniques trace back to \cite{Nemirovski:1978}). Work on differentially private variational inequalities is limited to the work \cite{BG22}. In the Euclidean setup, this work achieved a rate of $\big(\frac{1}{n^{1/3}} + \frac{\sqrt{d}}{n^{2/3}\epsilon}\big)$ on the strong VI-gap under DP. 

\section{Preliminaries} \label{sec:prelims}
In this section, we detail preliminaries for stochastic saddle point problems and differential privacy. 
Both SSPs and SVIs share a similar structure, which we detail first. Throughout, we use $[w,\theta]$ to denote the concatenation of the vectors $w$ and $\theta$. For a function $f$, we let $\nabla f$ denote an arbitrary subgradient selection of $f$. Finally, we let $\mathsf{Unif}(U)$ denote the uniform distribution over the set $U$.  

\paragraph{Stochastic Monotone Operators.}
Let $\cX$ be some abstract data domain and let $S \sim \cD^n$ for $n>0$ and $\cD$ some unknown distribution over $\cX$.
Let $\|\cdot\|$ be some norm and $\|\cdot\|_*$ its dual. We consider some compact convex parameter space $\cZ \subseteq \re^d$ of diameter $\rad$ with respect to $\|\cdot\|$. 
Let $\ball^d_{\|\cdot\|}(r)$ denote the $d$-dimensional ball of radius $r>0$ w.r.t. $\|\cdot\|$ centered on zero. 
We assume there exists $\lip > 0$ such that $g:\cZ\times\cX\mapsto \ball^d_{\dualns{\cdot}}(\lip)$ is a bounded operator and that for any $x\in\cX$, $g(\cdot;x)$ is monotone. That is, $\forall z_1,z_2\in\cZ$ it holds that $\ip{g(z_1;x)-g(z_2;x)}{z_1-z_2} \geq 0$. 
We define the empirical and population operators as $G_S(z) = \frac{1}{n}\sum_{i=1}^n g(z;x_i)$ and $G_{\cD}(z) = \ex{x\sim\cD}{g(z;x)}$. 

\paragraph{Stochastic Saddle Point Problems.}
SSPs have the following structure in addition to the above.
Let $d_w,d_\theta\geq 0$ such that $d_w+d_\theta=d$. 
We assume $\cZ$ is the product of the convex compact sets $\cW \subseteq \re^{d_w}$ and $\Theta \subseteq \re^{d_\theta}$ equipped with norms $\|\cdot\|_w$ and $\|\cdot\|_\theta$ and having diameters $\rad_w$ and $\rad_\theta$ respectively.
Then $\cZ = \cW \times \Theta$. We let $\|[w,\theta]\| = \sqrt{\wnorm{w}^2 + \tnorm{\theta}^2}$, and thus the diameter of $\cZ$ satisfies $\rad \leq \sqrt{\rad_w^2+\rad_\theta^2}$; note the geometric mean of two norms is always a norm. 

In SSPs, we consider the case where the monotone operator is the \textit{saddle operator} of a convex-concave loss function $f:\cW\times\Theta\times\cX\mapsto \re$. The saddle operator is defined as $g(w,\theta;x) = [\nabla_w f(w,\theta;x), \, -\nabla_\theta f(w,\theta;x)]$ and is always monotone if $f$ is convex-concave.
We also define the corresponding population loss and empirical loss functions as $F_{\cD}(w,\theta)=\ex{x\sim\cD}{f(w,\theta;x)}$ and $F_S(w,\theta)=\frac{1}{n}\sum_{x\in S} f(w,\theta;x)$ respectively.  
The boundedness assumption on $g$ means $f$ is $\lip$-Lipschitz. Concretely, $\forall w_1,w_2\in\cW$ and $\forall\theta_1,\theta_2\in\Theta$:
\begin{align} \label{eq:ssp-lip}
    \mbox{Lipschitzness:}&&|f(w_1,\theta_1;x)- f(w_2,\theta_2;x)| \leq L\norm{[w_1,\theta_1] - [w_2,\theta_2]} 
\end{align}
Under such assumptions, a solution for problem \eqref{eqn:SSP} always exists \citep{Sion:1958}, and is referred to as the {\em saddle point}. 
Further, given an SSP \eqref{eqn:SSP}, we will denote a saddle point as $[w^{\ast},\theta^{\ast}]$. 

The utility of an approximation to the saddle point is characterized by the strong SP-gap.
Given a (randomized) algorithm ${\cal A}$ with output $[{\cal A}_w(S),{\cal A}_{\theta}(S)] \in \cW\times\Theta$, this is defined as
\begin{eqnarray} \label{eq:strong-sp-gap}
    \sspgap(\cA) &=& \ex{\cA,S}{\max_{\theta\in\Theta}\bc{F_{\cD}(\cA_w(S),\theta)} - \min_{w\in\cW}\bc{{F_{\cD}(w,\cA_{\theta}(S))}} }. \label{eqn:ssp-strong_gap} 
\end{eqnarray}
For notational convenience, we define the following function which is closely related to the SP-gap, 
$\sspgapfunc(\bar{w},\bar{\theta}) = \max_{\theta\in\Theta}\bc{F_{\cD}(\bar w,\theta)} - \min_{w\in\cW}\bc{{F_{\cD}(w,\bar \theta)}}$. Usefully, this function is known to be Lipschitz whenever $f$ is Lipschitz.
\begin{fact}(\cite{BGM23}) \label{fact:ssp-gap-lipschitz}
If $f$ is $\lip$-Lipschitz then $\sspgapfunc$ is $\sqrt{2}\lip$-Lipschitz. 
\end{fact}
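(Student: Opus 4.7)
The plan is to decompose $\sspgapfunc$ into its two natural pieces and bound the Lipschitz constant of each in the appropriate block coordinate, then reassemble using Cauchy–Schwarz to match the combined norm $\|[w,\theta]\| = \sqrt{\|w\|_w^2+\|\theta\|_\theta^2}$.

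First I would observe that $F_{\cD}$ inherits $L$-Lipschitzness from $f$ under the joint norm $\|\cdot\|$, by Jensen's inequality applied to the expectation in the definition $F_{\cD}(w,\theta)=\mathbb{E}_{x\sim\cD}[f(w,\theta;x)]$. Define the two partial envelopes $\phi(w) := \max_{\theta\in\Theta} F_{\cD}(w,\theta)$ and $\psi(\theta) := \min_{w\in\cW} F_{\cD}(w,\theta)$, so that $\sspgapfunc(\bar w,\bar\theta) = \phi(\bar w) - \psi(\bar\theta)$.

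Next I would argue $\phi$ is $L$-Lipschitz on $(\cW,\|\cdot\|_w)$ and $\psi$ is $L$-Lipschitz on $(\Theta,\|\cdot\|_\theta)$. For $\phi$, fix $w_1,w_2\in\cW$; for each $\theta$, since $\|[w_1,\theta]-[w_2,\theta]\|=\|w_1-w_2\|_w$, we have $F_{\cD}(w_1,\theta) - F_{\cD}(w_2,\theta) \leq L\|w_1-w_2\|_w$. Taking $\sup_\theta$ on both sides (and using that the max is attained on the compact set $\Theta$) gives $\phi(w_1) - \phi(w_2) \leq L\|w_1-w_2\|_w$, and symmetry yields the Lipschitz bound. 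The same argument with a sign flip works for $\psi$.

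Finally I would combine: for any $(\bar w_1,\bar\theta_1),(\bar w_2,\bar\theta_2)\in\cW\times\Theta$,
\begin{align*}
|\sspgapfunc(\bar w_1,\bar\theta_1) - \sspgapfunc(\bar w_2,\bar\theta_2)|
&\leq |\phi(\bar w_1)-\phi(\bar w_2)| + |\psi(\bar\theta_1)-\psi(\bar\theta_2)| \\
&\leq L\|\bar w_1-\bar w_2\|_w + L\|\bar\theta_1-\bar\theta_2\|_\theta \\
&\leq \sqrt{2}\,L\,\sqrt{\|\bar w_1-\bar w_2\|_w^2 + \|\bar\theta_1-\bar\theta_2\|_\theta^2},
\end{align*}
where the last step is Cauchy–Schwarz applied to the vector $(1,1)$ and $(\|\bar w_1-\bar w_2\|_w,\|\bar\theta_1-\bar\theta_2\|_\theta)$. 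The right-hand side equals $\sqrt{2}L\,\|[\bar w_1,\bar\theta_1]-[\bar w_2,\bar\theta_2]\|$, which is the claim. There is no real obstacle here; the only subtlety is making sure the envelope-Lipschitz step uses the fact that $\theta$ (resp.\ $w$) is held fixed when varying the other block, so the joint norm collapses to a single-block norm, and that the $\sqrt{2}$ factor comes exactly from relating the $\ell_1$ combination of block-norm distances to their $\ell_2$ combination.
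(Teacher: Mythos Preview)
Your argument is correct: the envelope functions $\phi$ and $\psi$ inherit $L$-Lipschitzness in their respective block norms, and Cauchy--Schwarz converts the $\ell_1$ sum of block distances into the $\ell_2$ combined norm with the $\sqrt{2}$ factor. The paper does not actually prove this fact---it is simply cited from \cite{BGM23}---so there is nothing to compare against; your proof is the standard one and would be exactly what is expected.
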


Finally, we define $\ell_p/\ell_q$ saddle point problems as those which, in addition to the above, also have the following structure. Assume $\|\cdot\|_w=\|\cdot\|_p$ and $\|\cdot\|_\theta=\|\cdot\|_q$ and that $\cW$ and $\Theta$ have diameters bounded by $\rad_w$ and $\rad_\theta$ with respect to $\|\cdot\|_p$ and $\|\cdot\|_q$ respectively. 
We assume that for any $x\in\cX$ that $f(\cdot,\cdot;x)$ is $\lip_w$ Lipschitz in its first parameter w.r.t. $\|\cdot\|_p$ and $\lip_\theta$ Lipschitz in its second parameter w.r.t. $\|\cdot\|_q$. Note this implies an overall Lipschitz parameter, as per Eqn. \eqref{eq:ssp-lip}, of $\lip \leq \sqrt{\lip_w^2 + \lip_\theta^2}$.

\paragraph{Stochastic Variational Inequalities.} 
For SVIs, in addition to the assumption that the monotone operator is $\lip$-bounded, we will also assume it is $\smooth$-Lipschitz.
The objective for stochastic variational inequalities is to find an approximation of the (population) equilibrium point $z^*$, where $z^*$ is characterized by Eqn. \eqref{eq:svi}.
We refer to such a solution as an equilibrium point. For approximate solutions, the quality of the approximation is characterized by the strong VI-gap:
\begin{eqnarray}
    \vigap(\cA) &=& \ex{\cA,S}{\max_{z\in\cZ}\bc{\ip{G_{\cD}(z)}{\cA(S)-z}}}. \label{eqn:vi_gap}
\end{eqnarray}
Note that it is not true in general that the VI-gap bounds the SP-gap even when the monotone operator in question is the saddle operator of some convex-concave loss (see Fact \ref{fact:vi-gap-doesnt-bound-sp-gap} in Appendix \ref{app:prelim}). However, in such a case it is true that the equilibrium point of the of the SVI is the saddle point of the corresponding SSP. 

We say that an operator $g$ is $\mu$-strongly monotone if for any $z_1,z_2\in\cZ$ that \ifarxiv the following holds:  \else  \fi $\ip{g(z_1;x)-g(z_2;x)}{z_1-z_2} \geq \frac{\mu}{2}\|z_1-z_2\|$.  %

\paragraph{Stability.}
Important to our analysis will be the notion of uniform stability \citep{bousquet2002stability}.%
\begin{definition}\label{def:uas}
A randomized algorithm $\cA:\cX^n\mapsto\cW\times\Theta$ satisfies $\Delta$-uniform argument stability (UAS) if for any pair of adjacent datasets $S,S'\in\cX^n$ it holds that  
$\ex{\cA}{\norm{\cA(S)-\cA(S')}} \leq \Delta$. 
\end{definition}

The notion of strong convexity is often important in analyzing stability. A function $\psi:\cZ\mapsto\re$ is $\mu$-strongly convex w.r.t. $\|\cdot\|$ if for any $z,z'\in\cZ$ one has $\psi(z) - \psi(z') \geq \ip{\nabla \psi(z')}{z-z'} + \frac{\mu}{2}\|z-z'\|^2$. We call a function $F:\cW\times\Theta\mapsto\re$ $\mu$-strongly-convex/strongly-concave (SC/SC) if for any $\theta\in\Theta$ and $w\in\Theta$ the functions $F(\cdot,\theta)$ and $-F(w,\cdot)$ are $\mu$-strongly-convex.

Notably, adding a SC/SC regularizer leads to stability properties. The following fact results from a more general result for regularized SVIs; see Lemma \ref{lem:sc-op-stability} in Appendix \ref{app:prelim}.
\begin{lemma}\label{lem:sc-ssp-stability}
Let $\psi:\cW\times\Theta\mapsto\re$ be $\mu$-strongly-convex/strongly-concave w.r.t. $\|\cdot\|_w$ and $\|\cdot\|_\theta$. Then the algorithm which returns the saddle point of 
$(w,\theta) \mapsto \frac{1}{n}\sum_{z\in S} f(w,\theta;z) + \psi(w,\theta)$ is $(\frac{2\lip}{\mu n})$-UAS.
\end{lemma}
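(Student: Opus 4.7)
My plan is to prove the lemma directly from the saddle-point optimality conditions, using the strong convexity-concavity of the regularized empirical objective, rather than invoking the general SVI result. The SVI-based proof (via Lemma \ref{lem:sc-op-stability}) would proceed through exactly the same algebraic identities, just dressed up in variational-inequality notation; I will describe the direct argument since the computations are transparent.

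Let $S, S' \in \cX^n$ be adjacent, differing only at the $i$-th sample, which is $x$ in $S$ and $x'$ in $S'$. Let $\tilde F_S := F_S + \psi$ and $\tilde F_{S'} := F_{S'} + \psi$; these are both $\mu$-SC/SC with respect to $\|\cdot\|_w, \|\cdot\|_\theta$ because $F_S, F_{S'}$ are convex-concave and $\psi$ is $\mu$-SC/SC. Denote their saddle points by $z_S = (w_S,\theta_S)$ and $z_{S'} = (w_{S'},\theta_{S'})$.

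The key step is to invoke the optimality conditions together with strong convexity/strong concavity. Since $w_S$ minimizes $\tilde F_S(\,\cdot\,,\theta_S)$ and $\theta_S$ maximizes $\tilde F_S(w_S,\,\cdot\,)$, $\mu$-SC in the primal slot and $\mu$-strong concavity in the dual slot give
\begin{align*}
\tilde F_S(w_{S'},\theta_S) - \tilde F_S(w_S,\theta_S) &\geq \tfrac{\mu}{2}\|w_S - w_{S'}\|_w^2,\\
\tilde F_S(w_S,\theta_S) - \tilde F_S(w_S,\theta_{S'}) &\geq \tfrac{\mu}{2}\|\theta_S - \theta_{S'}\|_\theta^2.
\end{align*}
Summing and using $\|z_S-z_{S'}\|^2 = \|w_S-w_{S'}\|_w^2 + \|\theta_S-\theta_{S'}\|_\theta^2$,
\[
\tilde F_S(w_{S'},\theta_S) - \tilde F_S(w_S,\theta_{S'}) \;\geq\; \tfrac{\mu}{2}\|z_S-z_{S'}\|^2.
\]
The symmetric argument applied at $z_{S'}$ yields $\tilde F_{S'}(w_S,\theta_{S'}) - \tilde F_{S'}(w_{S'},\theta_S) \geq \tfrac{\mu}{2}\|z_S-z_{S'}\|^2$, and adding these two inequalities makes the $\psi$ contributions cancel, leaving
\[
\bigl(F_S - F_{S'}\bigr)(w_{S'},\theta_S) + \bigl(F_{S'} - F_S\bigr)(w_S,\theta_{S'}) \;\geq\; \mu\|z_S-z_{S'}\|^2.
\]

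From here the finish is routine. Since $S$ and $S'$ differ in one point, $F_S - F_{S'} = \tfrac{1}{n}\bigl(f(\,\cdot\,;x) - f(\,\cdot\,;x')\bigr)$, so the left-hand side equals
\[
\tfrac{1}{n}\bigl[(f(w_{S'},\theta_S;x) - f(w_S,\theta_{S'};x)) - (f(w_{S'},\theta_S;x') - f(w_S,\theta_{S'};x'))\bigr].
\]
Applying the $\lip$-Lipschitz property of $f$ to each bracketed difference, each is bounded in absolute value by $\lip\|[w_{S'},\theta_S] - [w_S,\theta_{S'}]\| = \lip\|z_S-z_{S'}\|$. Hence the left-hand side is at most $\tfrac{2\lip}{n}\|z_S-z_{S'}\|$, yielding
\[
\mu\|z_S - z_{S'}\|^2 \;\leq\; \tfrac{2\lip}{n}\|z_S - z_{S'}\|,
\]
from which $\|z_S-z_{S'}\| \leq \tfrac{2\lip}{\mu n}$. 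Taking expectations (trivial, since saddle points are uniquely determined here) gives the desired UAS bound.

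I do not anticipate a serious obstacle: the only mildly subtle point is making sure the Lipschitz argument uses the \emph{joint} norm on $\cW\times\Theta$, which matches the definition $\|[w,\theta]\| = \sqrt{\|w\|_w^2+\|\theta\|_\theta^2}$ given earlier and pairs correctly with the joint-norm expansion of $\|z_S-z_{S'}\|^2$. One might worry that strong convexity of $\tilde F_S(\,\cdot\,,\theta_S)$ requires $\psi(\,\cdot\,,\theta_S)$ to be $\mu$-SC for every fixed $\theta_S$ (and likewise on the dual side); this is exactly the hypothesis in the definition of $\mu$-SC/SC stated in the preliminaries, so no separability assumption on $\psi$ is needed.
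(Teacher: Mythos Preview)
Your proof is correct. The paper, however, does not argue directly at the function level: it deduces Lemma~\ref{lem:sc-ssp-stability} as a corollary of the more general SVI stability result (Lemma~\ref{lem:sc-op-stability}), by noting that the saddle operator of a $\mu$-SC/SC function is $\mu$-strongly monotone and that SSP saddle points coincide with SVI equilibria. That proof works purely with operator inequalities---combining the two equilibrium conditions, using strong monotonicity of the regularizing operator, then monotonicity of $G_{S'}$ and H\"older---and never passes through function values.

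Your route is more elementary and self-contained for the SSP case: by working with the SC/SC function values directly, you avoid the detour through monotone operators and get a clean cancellation of $\psi$ when the two symmetric inequalities are added. The price is that your argument is specific to saddle-point structure and does not immediately yield the SVI version, whereas the paper's approach proves the general operator statement once and gets both lemmas. Both proofs arrive at the same final inequality $\mu\|z_S-z_{S'}\|^2 \le \tfrac{2\lip}{n}\|z_S-z_{S'}\|$ with the same constant.
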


\paragraph{Differential Privacy (DP)~\cite{dwork2006calibrating}.}
An algorithm $\cA$ is $(\epsilon,\delta)$-differentially private if for all datasets $S$ and $S'$ differing in one data point and all events $\cE$ in the range of the $\cA$, we have, $\mathbb{P}\br{\cA(S)\in \cE} \leq     e^\epsilon \mathbb{P}\br{\cA(S')\in \cE}  +\delta$. 

\subsection{Example of Non-Euclidean SSP}

One important example of non-Euclidean SSPs arises from the problem of minimizing the worst case risk over multiple populations. This problem has arisen in 
group distributionally robust optimization to name just one of many applications \cite{Soma:2022optimal,Zhang:2024stochastic,
nguyen2024minimaxratesgroupdistributionally}.
Let $f:\cW\times\cX\mapsto\re$ and $\cW$ have a standard $\ell_p$ setup. Consider $k$ distributions, $\cD_1,\dots,\cD_k$, and the goal of selecting a model $w\in\cW$ which guarantees the lowest worst-case risk for the $k$ distributions above:
\[ 
\min_{w\in\cW} \max_{j\in[k]} \mathbb{E}_{x_j\sim\cD_j}[f(w;x_j)] =
\min_{w\in\cW} \max_{\theta \in\Delta} \sum_{j=1}^k \theta^{(j)}\ex{x_j\sim\cD_j}{f(w;x_j)},
\]
where here $\Delta$ denotes the standard $k$-dimensional simplex, and the equality above holds by the maximum principle for convex functions \cite{Bauer:1958}. Given that the feasible set for $\theta$ is a simplex, it is natural to endow this space with the $\ell_1$-geometry. We thus end up with a SSP problem in  $\ell_p/\ell_1$ setup.

\section{A New Analysis for Recursive Regularization} 
\label{sec:recursive-regularization}

\begin{algorithm}[h]
\caption{Recursive Regularization: $\cR_{\text{SSP}}$}
\label{alg:recursive-regularization}
\begin{algorithmic}[1]
\REQUIRE Dataset $S\in \cX^n$, 
loss function $f$,
subroutine $\weakalg$, %
regularization parameter $\lambda \geq \frac{\lip\kappa}{\rad\sqrt{n}}$ (where $\|\cdot\|_w^2$ and $\|\cdot\|_\theta^2$ are $\frac{1}{\kappa}$ strongly convex),  %
constraint set diameter $\rad$, Lipschitz constant $\lip$.

\STATE Let $n' = n/\log_2(n),$ and $T = \log_2(\frac{\lip}{\rad\lambda}).$ %
\STATE Let $S_1,...,S_T$ be a disjoint partition of $S$ with each $S_t$ of size $n'$ \textit{(which is always possible due to the condition on $\lambda$)}%
\STATE Let $[\bar{w}_0,\bar{\theta}_0]$ be any point in $\cW\times\Theta$

\STATE Define function $(w,\theta,x)\mapsto f^{(1)}(w,\theta;x) = f(w,\theta;x) + 2\lambda\norm{w-\bar{w}_0}_w^2 - 2\lambda\norm{\theta-\bar{\theta}_0}_\theta^2$
\FOR{$t=1$ to $T$} %

\STATE $[\bar{w}_t,\bar{\theta}_t] = \weakalg\br{S_t,f^{(t)},[\bar{w}_{t-1},\bar{\theta}_{t-1}],\frac{\rad}{2^t}}$

\STATE Define  $(w,\theta,x)\mapsto f^{(t+1)}(w,\theta;x) = f^{(t)}(w,\theta;x) + 2^{t+1}\lambda\norm{w-\out_t}_w^2 - 2^{t+1}\lambda\norm{\theta-\bar{\theta}_t}_\theta^2$
\ENDFOR
\STATE \textbf{Output:} {$[\bar{w}_{T},\bar{\theta}_T]$}
\end{algorithmic}
\end{algorithm}

In this section, we present our modified recursive regularization algorithm, first developed in \cite{allen2018make} and extended to Euclidean SSPs in \cite{BGM23}. We then discuss the key components of our analysis needed to obtain our results for non-Euclidean geometries. We conclude the section by applying our general result for recursive regularization to DP $\ell_p/\ell_q$-SSPs. 

\paragraph{Algorithm Overview.}
As in \cite{BGM23}, our recursive regularization implementation, Algorithm \ref{alg:recursive-regularization}, solves a series of regularized saddle point problems defined by $f^{(1)},...,f^{(T)}$. 
The saddle point problem defined in each round of Algorithm 1 is solved using some empirical subroutine, $\mathcal{A}_{emp}$. This subroutine takes as input a subset of the dataset, $S_t$, the regularized loss function for that round, $f^{(t)}$, a starting point, $[\bar{w}_{t-1},\bar{\theta}_{t-1}]$, and an upper bound on the expected distance to the empirical saddle point of the problem defined by $S_t$ and $f^{(t)}$. The exact implementation of $\mathcal{A}_{emp}$, Algorithm 3, will be discussed in the next section. Here, we focus on the guarantees of Recursive Regularization given that $\mathcal{A}_{emp}$ satisfies a certain accuracy condition.
At each round, the empirical subroutine $\weakalg$ is required to find a point, $[\bar{w}_t,\bar{\theta}_t]$, which is close (under $\|\cdot\|)$ to the \textit{empirical} saddle point. Because the scale of regularization doubles each round, this task becomes easier each round. Specifically, \cite{BGM23} observed that implementations of $\weakalg$ which satisfy the notion of relative accuracy succeed at finding such points.
\begin{definition}[$\alphad$-relative accuracy] \label{asm:weak-alg}
Given a dataset $S'\in\cX^{n'}$, loss function $f'$, and an initial point $[w',\theta'],$ we say that $\weakalg$ satisfies $\alphad$-relative accuracy w.r.t.~the empirical saddle point $[w_{S'}^*, \theta_{S'}^*]$ of $F'_{S'}(w,\theta)=\frac{1}{n'}\sum_{x\in {S'}}f'(w,\theta;x)$ 
if, $\forall \hat{D}>0$, whenever $\ex{}{\norm{[w',\theta'] - [w^*_{S'},\theta^*_{S'}]}} \leq \hat{D}$, the output $[\bar{w},\bar{\theta}]$ of $\weakalg$ satisfies $\ex{}{F'_{S'}(\bar{w},\theta^*_{S'})- F'_{S'}(w^*_{S'},\bar{\theta})} \leq \hat{D}\alphad$. 
\end{definition}

In contrast to \cite{BGM23}, our algorithm uses more general regularization to ensure strong-convexity/strong-concavity with respect to the appropriate norm.

\paragraph{Guarantees of Recursive Regularization.}
Our general result for recursive regularization is stated as follows, and its full proof is given in Appendix \ref{app:recursive-regularization-proof}.
\begin{theorem} \label{thm:RR-SSP-relacc}
Let $\weakalg$ satisfy $\alphad$-relative accuracy for any 
$(5\lip)$-Lipschitz 
loss function and dataset of size $n'=\frac{n}{\log(n)}$ and assume $\|\cdot\|^2_w$ and $\|\cdot\|^2_\theta$ are  $\frac{1}{\kappa}$-strongly convex under $\|\cdot\|_w$ 
and $\|\cdot\|_\theta$ respectively. 
Then 
Algorithm~\ref{alg:recursive-regularization}, 
run with $\weakalg$ as a subroutine and $\lambda=\frac{48}{\rad}\br{\alphad\kappa^2 + \frac{\lip\kappa^{3/2}}{\sqrt{n'}}}$,
satisfies 
\begin{align*}
    \sspgap(\cR_{\text{SSP}}) = O\Big(\rad\alphad\kappa^2\log(n) + \frac{\rad\lip\kappa^{3/2}\log^{3/2}(n)}{\sqrt{n}}\Big).
\end{align*}
\end{theorem}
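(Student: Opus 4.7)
The plan is a two-stage argument. Stage one is a round-by-round induction that maintains the invariant $\mathbb{E}\|[\bar{w}_{t-1},\bar{\theta}_{t-1}] - [w^{*}_t,\theta^{*}_t]\| \leq \rad/2^t$, where $[w^{*}_t,\theta^{*}_t]$ denotes the empirical saddle point of $F^{(t)}_{S_t}$; this is precisely the precondition required by $\weakalg$ at round $t$. Stage two converts the round-$T$ output into a population strong SP-gap bound via a stability-compatible generalization device that deliberately avoids any concentration argument on the strong-gap functional itself.

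For the induction, the cumulative regularization after round $t$ makes $F^{(t)}_{S_t}$ strongly convex/strongly concave with parameter $\mu_t = \Theta(2^{t+1}\lambda/\kappa)$, using $\sum_{s=1}^t 2^s = O(2^t)$ together with the hypothesis that $\|\cdot\|_w^2$ and $\|\cdot\|_\theta^2$ are $(1/\kappa)$-strongly convex. Invoking $\alphad$-relative accuracy with input bound $\rad/2^t$ gives an expected empirical SP-gap of $\alphad\rad/2^t$, and the standard SC/SC-to-distance conversion then yields $\mathbb{E}\|[\bar{w}_t,\bar{\theta}_t] - [w^{*}_t,\theta^{*}_t]\| = O(\sqrt{\kappa\alphad\rad/(2^t\mu_t)})$. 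To close the step I bound $\mathbb{E}\|[w^{*}_t,\theta^{*}_t] - [w^{*}_{t+1},\theta^{*}_{t+1}]\|$ by combining uniform argument stability of the SC/SC problem (Lemma~\ref{lem:sc-ssp-stability}, which accounts for the fresh sample $S_{t+1}$) with a deterministic perturbation bound describing how adding the new $2^{t+1}\lambda$-regularizer centered at $(\bar{w}_t,\bar{\theta}_t)$ shifts the saddle point. Both contributions contract by a factor of two per round precisely when $\lambda = \Theta((\alphad\kappa^2 + \lip\kappa^{3/2}/\sqrt{n'})/\rad)$, which matches the setting in the theorem. A secondary check verifies that each $f^{(t)}$ is $(5\lip)$-Lipschitz on $\cZ$ under this calibration (the regularizer gradient contributes at most $\sum_{s\leq t} 2^{s+1}\lambda\rad \lesssim \lip$), so the hypothesis on $\weakalg$ applies throughout.

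At the terminal round I decompose $\sspgap(\bar{w}_T,\bar{\theta}_T)$ into (i) the empirical SSP-gap of $F^{(T)}_{S_T}$ at the output, controlled by the induction; (ii) a regularizer remainder $\sum_{s=1}^T 2^s\lambda(\|w^{\ast}-\bar{w}_{s-1}\|_w^2 + \|\bar{\theta}_{s-1}-\theta^{\ast}\|_\theta^2)$ evaluated at the population saddle, which telescopes to $O(\rad^2\lambda\log n)$ because Stage~1 forces $\bar{w}_s,\bar{\theta}_s$ to be a geometrically contracting sequence around $[w^{\ast},\theta^{\ast}]$; and (iii) a generalization term. The main obstacle I expect is (iii): in non-Euclidean geometries the uniform deviation $\sup_\theta |F_D - F_S|$ pays $\mathrm{poly}(d)$ via McDiarmid-type tools, so any direct generalization argument on the strong gap fails. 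The strategy is therefore to track a weaker, pointwise, stability-friendly accuracy measure along the recursion rather than the strong gap itself; each round is $O(\lip/(\mu_t n'))$-UAS by Lemma~\ref{lem:sc-ssp-stability}, so this weaker measure remains controlled across all $T$ rounds, and the strong regularization in force at the terminal round finally upgrades it back into a strong SP-gap guarantee at a cost of only $\kappa$ factors. Summing the three contributions and substituting the prescribed $\lambda$ reproduces exactly the rate claimed in Theorem~\ref{thm:RR-SSP-relacc}, with the $\kappa^2$ and $\kappa^{3/2}$ factors tracking the two pieces of $\lambda$ through the regularizer remainder and the $\log^{3/2}(n)$ arising from combining the round-by-round $\log n$ generalization overhead with the $T = O(\log n)$ rounds.
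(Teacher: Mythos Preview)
Your Stage~1 has a genuine gap. You want to bound the drift $\mathbb{E}\|[w^{*}_t,\theta^{*}_t] - [w^{*}_{t+1},\theta^{*}_{t+1}]\|$ between \emph{empirical} saddle points on the disjoint batches $S_t$ and $S_{t+1}$, and you invoke Lemma~\ref{lem:sc-ssp-stability} to ``account for the fresh sample.'' But that lemma gives only a one-record stability bound $2\lip/(\mu n')$ for \emph{adjacent} datasets; $S_t$ and $S_{t+1}$ differ in all $n'$ records, so naively chaining gives a useless $O(\lip/\mu)$ bound. The only way to bridge two independent samples is via the population saddle, e.g.\ $\|z^*_{S_t}-z^*_{S_{t+1}}\|\le\|z^*_{S_t}-z^*_{\cD}\|+\|z^*_{\cD}-z^*_{S_{t+1}}\|$, and controlling $\|z^*_S-z^*_{\cD}\|$ is itself a generalization statement. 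Attempting this with an Efron--Stein/McDiarmid variance bound is precisely the step that picks up the $\mathrm{poly}(d)$ factor in non-Euclidean geometry you correctly flag as fatal in (iii). So the per-round generalization cannot be deferred to Stage~2; it must live \emph{inside} the induction.

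The paper resolves this by running the entire induction with respect to the \emph{population} saddle points $z^*_t$ of the regularized problems $F_{\cD}^{(t)}$ rather than the empirical ones. Property~P.2 then becomes the drift $\|z^*_t-z^*_{t-1}\|$ between consecutive population saddles, which is a purely deterministic perturbation bound (only the regularizer changes) and falls out cleanly from $z^*_{t-1}$ being optimal for $F_{\cD}^{(t-1)}$. The link from the iterates back to the population saddle is made by introducing, at each round, the function $H_{\cD}^{(t)}([w,\theta])=F_{\cD}^{(t)}(w,\theta^*_t)-F_{\cD}^{(t)}(w^*_t,\theta)$ anchored at the \emph{population} saddle $z^*_t$. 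This is the ``weaker, pointwise, stability-friendly measure'' you gesture at in (iii), but the specific anchoring at $z^*_t$ (rather than at $z^*_{S,t}$ or at the maximizer/minimizer) is exactly what makes Lemma~\ref{lem:ra-stab-gen} apply: since $z^*_t$ is data-independent, $H_{\cD}^{(t)}$ and its empirical counterpart $H_S^{(t)}$ are honest averages of a Lipschitz per-sample function, so stability-implies-generalization gives $\mathbb{E}[H_{\cD}^{(t)}(z^*_{S,t})]\lesssim \lip^2/(\mu_t n')$ with no dimension factor. This is the term $F_t$ in the paper's induction, and together with the empirical distance $E_t$ and SC/SC (Fact~\ref{fact:rel-acc-bounds-distance}) it closes the loop $\|\bar z_t - z^*_t\|^2\le 2(E_t+F_t)$.

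A second, related issue: your Stage~2(ii) asserts that $\bar z_s$ contracts geometrically around the \emph{original} population saddle $[w^*,\theta^*]$, but Stage~1 as written only gives proximity to empirical saddles of the regularized problems. The paper instead bounds $\sspgapfunc$ at the final \emph{regularized population} saddle $z^*_T$ and telescopes $\|z^*_T-\bar z_t\|$ through the chain $z^*_T,z^*_{T-1},\ldots,z^*_t,\bar z_t$ using P.1 and P.2; the step from $\bar z_T$ to $z^*_T$ costs only $\lip\cdot\mathbb{E}\|\bar z_T-z^*_T\|$ by Lipschitzness of $\sspgapfunc$.
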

The similarity of this result to \citep[Theorem 5]{BGM23} and our exposition thus far belies the difficulty of adapting their result to non-Euclidean setups. 
The key challenge addressed by the analysis of \cite{BGM23} was that of \textit{generalization}. In this regard, their key insight was to use McDiarmid style concentration bounds to show that the \textit{empirical} saddle point obtains non-trivial guarantees on the strong gap. However, such concentration results critically rely on the fact that the underlying norm is Euclidean. A generalization of this concentration to, for example, the $\ell_1$ setup, necessarily incurs an additional $\sqrt{d}$ factor \cite{Paninski2008ACT}. 
Thus, a fundamentally new analysis is needed. One should also note that the squared norms $\|\cdot\|^2_w$ may \textit{not} be strongly convex for certain norms, such as $\|\cdot\|_1$. Regardless, in some such cases, we can still leverage this result by modifying the underlying problem, as we will show in Section \ref{sec:dp-rates}.

\paragraph{Key Proof Ideas.} We circumvent the above issues by providing a fundamentally new generalization analysis for the intermediate iterates of recursive regularization. 
Specifically, our analysis avoids entirely any analysis of the strong gap at intermediate stages of the algorithm. Instead, we introduce two new functions, which are similar in nature to the quantity used in the definition of relative accuracy, but are taken with respect to the \textit{population} saddle point. For $t\in[T]$, define $F_{\cD}^{(t)}(w,\theta;x) := \ex{x\sim\cD}{f^{(t)}(w,\theta;x)}$ and let $[w^*_t,\theta^*_t]$ be its saddle point; define $F_{S}^{(t)} := \frac{1}{n'}\sum_{x\in S_t}f^{(t)}(w,\theta;x)$. %
We are interested in the functions, 
\begin{align} \label{eq:ssp-generalization-funcs}
    \relacc^{(t)}_\cD([w,\theta]) := F_\cD^{(t)}(w,\theta^*_t) - F_\cD^{(t)}(w^*_t, \theta)  && \text{and} &&
    \erelacc^{(t)}(w,\theta) :=F_S^{(t)}(w,\theta^*_t) - F_S^{(t)}(w^*_t, \theta).
\end{align}
Notably, the strong-convexity/strong-concavity of $F_{\cD}^{(t)}$ means that a bound on $H_{\cD}^{(t)}([w,\theta])$ yields a bound on $\|[w,\theta]-[w_t^*,\theta_t^*]\|$. Ultimately, finding a point sufficiently close to $[w_t^*,\theta_t^*]$ at each round is all recursive regularization needs to succeed. The question then, is how to obtain guarantees on $\relacc_{\cD}^{(t)}$. We accomplish this via a stability-implies-generalization argument.
\begin{lemma} \label{lem:ra-stab-gen}
Let $f:\cZ\times\cX\mapsto \re$ be $\lip$-Lipschitz. Let $[w^*,\theta^*]\in\cZ$ be the population saddle point. For any $x\in\cX$ define $h([w,\theta];x)=f(w,\theta^*;x) - f(w^*,\theta;x)$. For $S\sim \cD^n$, let $H_S(z) = \frac{1}{n}\sum_{x\in S} h(z;x)$ and $H_\cD(z) = \ex{x\sim\cD}{h(z;x)}$. Then for any $\Delta$-UAS algorithm, $\cA$, one has
$\ex{S,A}{H_\cD(\cA(S))-H_S(\cA(S))} \leq 2\Delta \lip$.
\end{lemma}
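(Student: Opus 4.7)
The plan is a direct stability-implies-generalization argument, where the key observation is that $h(\cdot;x)$ is a \emph{fixed}, data-independent function of $z$: the anchor $[w^*,\theta^*]$ is the \emph{population} saddle point, and therefore does not depend on the sample $S$.

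First I would verify that $h$ is $2L$-Lipschitz with respect to the joint norm $\|\cdot\|$. For $z=[w,\theta]$ and $z'=[w',\theta']$, the triangle inequality together with the separate Lipschitzness of $f$ in each argument gives
$$|h(z;x) - h(z';x)| \leq L\|w-w'\|_w + L\|\theta-\theta'\|_\theta \leq 2L\max\{\|w-w'\|_w,\|\theta-\theta'\|_\theta\} \leq 2L\|z-z'\|,$$
using that $\|z-z'\| = \sqrt{\|w-w'\|_w^2+\|\theta-\theta'\|_\theta^2}$ dominates either coordinate norm.

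Next I would run the standard symmetrization/swap argument. Let $S' = (x_1',\ldots,x_n') \sim \cD^n$ be an i.i.d.\ copy of $S$, and let $S^{(i)}$ denote $S$ with its $i$-th entry replaced by $x_i'$. Because $x_i'$ is independent of $(S,\cA)$ and the pair $(x_i,x_i')$ is exchangeable, $\ex{S,S',\cA}{h(\cA(S);x_i')} = \ex{S,S',\cA}{h(\cA(S^{(i)});x_i)}$; this identity is the sole place where the data-independence of $h$ is used. Summing over $i$ and subtracting the empirical-risk expression yields
$$\ex{S,\cA}{H_\cD(\cA(S)) - H_S(\cA(S))} = \frac{1}{n}\sum_{i=1}^n \ex{S,S',\cA}{h(\cA(S^{(i)});x_i) - h(\cA(S);x_i)}.$$
Applying the Lipschitz bound on $h$ pointwise and then invoking $\Delta$-uniform argument stability gives
$$\ex{S,\cA}{H_\cD(\cA(S)) - H_S(\cA(S))} \leq \frac{2L}{n}\sum_{i=1}^n \ex{S,S',\cA}{\|\cA(S^{(i)}) - \cA(S)\|} \leq 2L\Delta,$$
which is the claim.

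I do not anticipate a real obstacle. The only subtle point — and essentially the whole reason the lemma is useful in the recursive regularization analysis — is that anchoring $h$ at the \emph{population} saddle point (rather than at the empirical one) decouples $h$ from the data and permits the swap step above. If the anchor were the empirical saddle point, swapping $x_i \leftrightarrow x_i'$ would also change $h$ itself, and this clean reduction would break down; the rest of the argument is the textbook stability-to-generalization calculation.
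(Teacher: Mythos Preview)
Your proposal is correct and follows essentially the same route as the paper: establish that $h(\cdot;x)$ is $2L$-Lipschitz in $z$ (using that the anchor $[w^*,\theta^*]$ is data-independent), then apply the standard swap/symmetrization stability-implies-generalization argument. The paper's proof is identical in structure, with only cosmetic differences in how the Lipschitz constant is bounded.
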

The proof relies on two main observations. First, it is easy to see that because $f$ is Lipschitz, then $h$ is also Lipschitz.
Then, because $H_\cD$ and $H_S$ can be written as the expectation of $f$ w.r.t. $z\sim\cD$ and $z\sim\mathsf{Unif}(S)$ respectively, we can apply standard stability-implies-generalization results to $h$ to obtain the claimed result \cite{bousquet2002stability}.
We provide a full proof in Appendix \ref{app:generalization-lemma}. 
This analysis bypasses difficulties of working directly with $\sspgapfunc$ experienced by \cite{OPZZ22,BGM23} and other works since, 
in general, there is \textit{no} function $h$ such that $\sspgapfunc(z) = \ex{x\sim\cD}{h(z;x)}$. %
Note also it is important that we have defined $h(z;x)$ w.r.t. to the data independent point $[w_t^*,\theta_t^*]$. Were $[w_t^*,\theta_t^*]$ to depend on $S$, this result would not hold. 

Because $\erelacc^{(t)}$ uses the \textit{population} saddle point in its definition, it may not be immediately clear how one could first minimize $\erelacc^{(t)}$. Direct access to $H_{S}^{(t)}$ is in fact not possible without knowledge of $[w_t^*,\theta_t^*]$, which the algorithm does not have. 
In this regard, we observe that algorithms which minimize the \textit{empirical} gap are powerful enough to minimize $\erelacc^{(t)}$, even without knowledge of $[w_t^*,\theta_t^*]$, since
\begin{align*}
    \erelacc^{(t)}(w,\theta) = F_S^{(t)}(w,\theta^*_t) - F_S^{(t)}(w^*_t, \theta) \leq \max_{w',\theta'}\bc{F_S^{(t)}(w,\theta') - F_S^{(t)}(w', \theta)}. %
\end{align*}
Particular to our analysis, we will leverage the fact that the exact empirical saddle point is $(\frac{\lip}{\lambda n})$-stable and has an empirical gap of $0$.

\section{Optimal Rates for Private $\ell_p/\ell_q$ Saddle Point Problems} \label{sec:dp-rates}
\paragraph{Setup.} In this section, we apply Theorem \ref{thm:RR-SSP-relacc} to obtain results for $\ell_p/\ell_q$ saddle point problems. 
In order to do this, we will apply recursive regularization using norms slightly different than the $\ell_p$ and $\ell_q$ norms. Specifically, to solve an $\ell_p/\ell_q$ SSP, we define $\bar p = \max\bc{p,1+\frac{1}{\log(d)}}$ and $\bar{q} = \max\bc{q,1+\frac{1}{\log(d)}}$ and will apply recursive regularization with $\|\cdot\|_w = \frac{1}{\rad_w}\|\cdot\|_{\bar{p}}$ and $\|\cdot\|_\theta = \frac{1}{\rad_w}\|\cdot\|_{\bar{p}}$. 
We also have $\|\cdot\|_{\bar{p}} \leq \|\cdot\|_1 \leq d^{1-1/{\bar{p}}}\|\cdot\|_{\bar{p}} \leq 2\|\cdot\|_{\bar{p}}$.
Thus, under these norms we have diameter bound $\rad^2 = 1$ and Lipschitz constant $\lip^2 \leq 4\rad_w^2\lip_w^2 + 4\rad_\theta^2\lip_\theta^2$ \citep{NJLS09}.
Further the strong convexity assumption needed by Theorem \ref{thm:RR-SSP-relacc} is satisfied with $\kappa = \max\bc{\frac{1}{\bar{p}-1},\frac{1}{\bar{q}-1}}$. This is because for any $p>1$, $\frac{1}{2}\|\cdot\|_p^2$ is $(p-1)$-strongly convex w.r.t. $\|\cdot\|_p$ \cite{beck-optimization}. 

\paragraph{Private Algorithm Satisfying Relative Accuracy.} To apply Theorem \ref{thm:RR-SSP-relacc},
we must construct an algorithm satisfying relative accuracy and $(\epsilon,\delta)$-DP. 
For this, we use the stochastic mirror prox algorithm of \cite{Juditsky:2011}, Algorithm \ref{alg:mirror-prox}. This algorithm will also have application in our analysis of SVIs later on. 

\begin{algorithm}[ht]
\caption{Stochastic Mirror Prox}
\label{alg:mirror-prox}
\begin{algorithmic}[1]
\REQUIRE Learning rate $\eta$, Operator oracle $\cO$, Initial point $z_0\in\cZ$, Regularization function $\psi$ minimized at $z_0$, Iterations $T$

\FOR{$t=1 \ldots T$}

\STATE $\tilde{z}_{t} =  \argmin_{u\in\cZ}\bc{\psi(u) + \ip{\eta\cO(z_{t-1})-\nabla\psi(z_{t-1})}{u}}$ 

\STATE $z_{t} = \argmin_{u\in\cZ}\bc{\psi(u) + \ip{\eta\cO(\tilde{z}_t)-\nabla\psi(z_{t-1})}{u}}$ 

\ENDFOR
\STATE \textbf{SSP output:} $\frac{1}{T}\sum_{t=1}^T z_t$
\STATE \textbf{SVI output: $z_{t^*}$ for $t^*\sim\mathsf{Unif}([T])$}
\end{algorithmic}
\end{algorithm}

This algorithm takes as input a stochastic oracle for the saddle operator of $f$, $\cO$, and a strongly convex regularizer, $\psi$.  
We leverage this algorithm by constructing a differentially private version of the operator oracle $\cO$, and taking as output the average iterate $\bar{z} = \frac{1}{T}\sum_{t=1}^T z_t$. We make the oracle private by adding Gaussian noise to minibatch estimates of the saddle operator. It is then easy to show the whole algorithm is private by composition results and the post processing properties of differential privacy. We defer these details to Appendix \ref{app:dp-mirror-prox-rel-acc}, and here state the final relative accuracy bound.
\begin{lemma} \label{lem:dp-mirror-prox-rel-acc}
Under the setup described above, there exists an $(\epsilon,\delta)$-DP algorithm which 
satisfies $\alphad$-relative accuracy
with parameter $\alphad=O\Big(\sqrt{\kappa(\rad_w^2L_w^2+\rad_\theta^2L_\theta^2)}\big(\frac{\sqrt{d\log(1/\delta)\tilde{\kappa}}}{n\epsilon} + \frac{1}{\sqrt{n}}\big)\Big)$ and runs in
\ifarxiv at most \linebreak \else \fi $O\big(\min\big\{\frac{\sqrt{\kappa}n^2\epsilon^{1.5}}{\log^2(n)\sqrt{d\log(1/\delta)\tilde{\kappa}}}, \frac{\sqrt{\kappa}n^{3/2}}{\log^{3/2}(n)}\big\}\big)$ number of gradient evaluations, where $\tilde{\kappa}=1+\mathbf{1}\bc{p<2 \lor q < 2}\cdot\log(d)$. %
\end{lemma}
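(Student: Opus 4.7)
The plan is to instantiate $\weakalg$ by running stochastic mirror prox (Algorithm~\ref{alg:mirror-prox}) on the empirical operator $G_{S'}$ of the (regularized) loss $f'$, using the regularizer $\psi(w,\theta)=\frac{1}{2(\bar{p}-1)\rad_w^2}\|w\|_{\bar{p}}^2+\frac{1}{2(\bar{q}-1)\rad_\theta^2}\|\theta\|_{\bar{q}}^2$, which is $1$-strongly convex with respect to the norms defined in the setup. The oracle $\cO$ is made private by sampling at each step a minibatch $B_t\subseteq S'$ of size $b$ uniformly and returning $\cO(z_t)=\frac{1}{b}\sum_{x\in B_t}g(z_t;x)+\xi_t$ with $\xi_t\sim\mathcal{N}(0,\sigma^2 I_d)$. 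End-to-end $(\epsilon,\delta)$-DP follows from post-processing of the mirror prox updates together with the R\'enyi/moments-accountant analysis of the subsampled Gaussian mechanism, provided $\sigma=\Theta(\Delta_2\sqrt{T\log(1/\delta)}/(n\epsilon))$, where $\Delta_2$ is the worst-case $\ell_2$-norm of a single-sample operator evaluation.

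The non-Euclidean structure enters through $\Delta_2$ and through the dual-norm variance of the oracle. The operator bound in the (renormalized) dual norm is $\|g(\cdot;x)\|_*\leq L$; converting this to $\ell_2$ via H\"older yields $\Delta_2=O(L)$ when $p=q=2$ and $\Delta_2=O(\sqrt{d}\,L)$ as soon as $p<2$ or $q<2$, which is the source of the factor $d$ in the noise term of $\alphad$. For the Gaussian contribution itself, a standard moment computation in $\ell_{\bar{p}^*}$ using $\bar{p}^*,\bar{q}^*\leq 1+\log d$ gives $\mathbb{E}\|\xi_t\|_*^2 = O(\tilde{\kappa}\, d^{2/\bar{p}^*}\sigma^2)$, which together with the $\sqrt{d}$-inflation of $\Delta_2$ combines into an overall noise contribution of order $\tilde{\kappa}\, d\, L^2 T\log(1/\delta)/(n\epsilon)^2$ uniformly in the subcase. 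Adding the minibatch sampling variance $O(L^2\tilde{\kappa}/b)$ yields the total dual-norm variance $\sigma_*^2$ of $\cO$.

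The stochastic mirror prox guarantee for bounded monotone operators \cite{Juditsky:2011}, applied with initial Bregman distance $V_{\psi}(z^*_{S'},z')\leq \hat{D}^2\kappa$ and an optimally tuned step size, yields expected strong gap $\mathbb{E}[\max_{z\in\cZ}\langle G_{S'}(z),\bar{z}-z\rangle] = O(\hat{D}\sqrt{\kappa(L^2+\sigma_*^2)/T})$, which by convex-concavity of $f'$ upper bounds the relative-accuracy quantity $F'_{S'}(\bar{w},\theta^*_{S'})-F'_{S'}(w^*_{S'},\bar{\theta})$. Thus $\weakalg$ satisfies $\alphad$-relative accuracy with $\alphad = O\big(\sqrt{\kappa}\,L\,(1/\sqrt{T} + \sqrt{\tilde{\kappa}/(bT)} + \sqrt{\tilde{\kappa}\,d\log(1/\delta)}/(n\epsilon))\big)$. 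Substituting $L\leq 2\sqrt{\rad_w^2 L_w^2+\rad_\theta^2 L_\theta^2}$ and taking $T=\Theta(n/\log n)$ folds the first term into the $1/\sqrt{n}$ contribution; $b$ is then chosen as small as possible subject to the sampling-variance term being dominated by the maximum of the other two, which produces the two-branch $\min$ in the stated gradient-evaluation count (one branch dominated by the DP noise, the other by the statistical term).

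The main obstacle is the non-Euclidean noise accounting: the $\sqrt{d}$ blowup of $\ell_2$ sensitivity when $p$ or $q$ is below $2$ and the $\ell_{\bar{p}^*}$ moment bound on Gaussians must be tracked jointly, and only after combining them does the claimed $\sqrt{d\tilde{\kappa}}$ factor emerge instead of a worse $d$ or $d\log d$ factor. This is also where the analysis diverges from the purely Euclidean treatment of \cite{BGM23}: both the sensitivity conversion and the dual-norm moment bound become trivial when $p=q=2$, but must be carried carefully in tandem for any $p,q\in[1,2]$.
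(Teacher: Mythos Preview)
Your overall approach matches the paper's: instantiate $\weakalg$ as stochastic mirror prox with the regularizer $\psi$, build a private oracle via minibatch subsampling plus Gaussian noise, establish $(\epsilon,\delta)$-DP by the moments accountant, bound the running average via the Juditsky--Nemirovski convergence lemma, and pass to relative accuracy by convex--concavity.

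The noise accounting, however, has a gap that breaks the bound for intermediate $p\in(1,2)$. You take $\Delta_2=O(\sqrt{d}\,L)$ whenever $p<2$, and separately $\mathbb{E}\|\xi_t\|_*^2=O(\tilde\kappa\,d^{2/\bar p^*}\sigma^2)$, then assert these ``combine'' to $\tilde\kappa\,d\,L^2 T\log(1/\delta)/(n\epsilon)^2$. But since $\sigma^2\asymp\Delta_2^2 T\log(1/\delta)/(n\epsilon)^2$, your combination is actually $d^{2/\bar p^*}\cdot\Delta_2^2=d^{1+2/\bar p^*}L^2$, not $dL^2$. The inequality $\bar p^*\leq 1+\log d$ does not make $d^{2/\bar p^*}$ bounded: for $p=3/2$ one has $\bar p^*=3$, hence $d^{2/\bar p^*}=d^{2/3}$, and your dual-norm noise is off by a polynomial factor. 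The correct step---and this is precisely what the paper exploits---is the \emph{sharp} H\"older estimate $\|u\|_2\leq d^{1/2-1/\bar p^*}\|u\|_{\bar p^*}$, so that $\Delta_2^2=O(d^{1-2/\bar p^*}L^2)$ and the product $d^{2/\bar p^*}\cdot d^{1-2/\bar p^*}=d$ holds identically for every $p\in[1,2]$. The paper carries this out block-wise, with separate noise scales $\sigma_w\propto d_w^{1/2-1/p^*}\rad_w L_w$ and $\sigma_\theta\propto d_\theta^{1/2-1/q^*}\rad_\theta L_\theta$.

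A secondary point: fixing $T=\Theta(n/\log n)$ and tuning only $b$ does not yield the first branch of the gradient-evaluation $\min$. The moments accountant additionally imposes $b\geq n\sqrt{\epsilon/T}$ (see the paper's Lemma~\ref{lem:moment-accountant}), which with $T\asymp n$ pins $Tb\asymp n^{3/2}\sqrt{\epsilon}$ independently of $d$. To recover the $n^2\epsilon^{1.5}/\sqrt{d\log(1/\delta)\tilde\kappa}$ branch when the privacy term dominates, $T$ itself must shrink; the paper takes $T=\kappa\min\{n,\,n^2\epsilon^2/(d\log(1/\delta)\tilde\kappa)\}$ and $m=\max\{n\sqrt{\epsilon/T},1\}$.
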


\paragraph{Main Result for DP $\ell_p/\ell_q$ SSPs.} Applying now the result of recursive regularization, Theorem \ref{thm:RR-SSP-relacc}, we obtain the optimal rate for $\ell_p/\ell_q$ SSPs (up to logarithmic factors). Recall under our chosen norm that $\rad\leq 1$ and $\lip^2 \leq 4\rad_w^2\lip_w^2 + 4\rad_\theta^2\lip_\theta^2$. Further, the privacy of Algorithm \ref{alg:recursive-regularization} follows from the privacy of $\weakalg$ and the parallel composition and post processing properties of differential privacy.
\begin{corollary}
There exists an Algorithm, $\cR$, which is $(\epsilon,\delta)$-DP, has 
number of gradient evaluations bounded by \ifarxiv \linebreak \fi $O\big(\min\big\{\frac{\sqrt{\kappa}n^2\epsilon^{1.5}}{\log(n)\sqrt{d\log(1/\delta)\tilde{\kappa}}}, \frac{\sqrt{\kappa}n^{3/2}}{\sqrt{\log(n)}}\big\}\big)$, %
and satisfies (up to $\log(n)$ factors),
\begin{align*}
    \sspgap(\cR) = \tilde{O}\br{\kappa^{2.5}\sqrt{\rad_w^2\lip_w^2 + \rad_\theta^2 \lip_\theta^2}\br{\frac{\sqrt{d\log(1/\delta)\tilde{\kappa}}}{n\epsilon} + \frac{1}{\sqrt{n}}}}.
\end{align*}
($\kappa$ is at most $\log(d)$.)
\end{corollary}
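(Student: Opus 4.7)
The plan is to apply Theorem~\ref{thm:RR-SSP-relacc} with the $(\epsilon,\delta)$-DP subroutine $\weakalg$ from Lemma~\ref{lem:dp-mirror-prox-rel-acc}, after rescaling the norms to accommodate the case when $p$ or $q$ equals $1$.

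First, I would instantiate the geometry exactly as in the setup preceding the corollary: set $\bar{p}=\max\{p,1+1/\log d\}$, $\bar{q}=\max\{q,1+1/\log d\}$, and take $\|\cdot\|_w = \frac{1}{\rad_w}\|\cdot\|_{\bar{p}}$, $\|\cdot\|_\theta = \frac{1}{\rad_\theta}\|\cdot\|_{\bar{q}}$. Under this rescaling, $\cW\times\Theta$ has diameter $\rad\le 1$; since $d^{1-1/\bar p}\le 2$ (and similarly for $\bar q$) the dual norms are equivalent to $\ell_{\bar p^*}$ and $\ell_{\bar q^*}$ up to a constant, yielding a Lipschitz constant $\lip^2 \le 4(\rad_w^2\lip_w^2+\rad_\theta^2\lip_\theta^2)$. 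Moreover, because $\tfrac{1}{2}\|\cdot\|_{\bar p}^2$ is $(\bar p -1)$-strongly convex with respect to $\|\cdot\|_{\bar p}$ (analogously for $\bar q$), the assumption ``$\|\cdot\|_w^2$ and $\|\cdot\|_\theta^2$ are $1/\kappa$-strongly convex'' required by Theorem~\ref{thm:RR-SSP-relacc} holds with $\kappa=\max\{1/(\bar p-1),1/(\bar q-1)\}=O(\log d)$.

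Second, I would take $\weakalg$ to be the algorithm from Lemma~\ref{lem:dp-mirror-prox-rel-acc}, which is $(\epsilon,\delta)$-DP on its input dataset and satisfies $\alphad$-relative accuracy with
\[
\alphad = O\!\left(\sqrt{\kappa(\rad_w^2\lip_w^2+\rad_\theta^2\lip_\theta^2)}\Big(\frac{\sqrt{d\log(1/\delta)\tilde\kappa}}{n\epsilon}+\frac{1}{\sqrt n}\Big)\right).
\]
Plugging this $\alphad$, together with $\rad\le 1$ and $\lip=O(\sqrt{\rad_w^2\lip_w^2+\rad_\theta^2\lip_\theta^2})$, into the SP-gap bound
$O\bigl(\rad\,\alphad\,\kappa^2\log n + \rad\,\lip\,\kappa^{3/2}\log^{3/2}(n)/\sqrt n\bigr)$
of Theorem~\ref{thm:RR-SSP-relacc}, the two contributions both fit under the envelope $\kappa^{2.5}\sqrt{\rad_w^2\lip_w^2+\rad_\theta^2\lip_\theta^2}(\sqrt{d\log(1/\delta)\tilde\kappa}/(n\epsilon)+1/\sqrt n)$, up to logarithmic factors in $n$, giving the claimed strong SP-gap.

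Third, the privacy of the overall algorithm $\cR$ follows directly from two standard facts: Algorithm~\ref{alg:recursive-regularization} calls $\weakalg$ on \emph{disjoint} subsets $S_1,\dots,S_T$ of $S$, so parallel composition of DP preserves the $(\epsilon,\delta)$ guarantee, and each $\bar w_t,\bar\theta_t$ is used only through post-processing in later regularizers. For the gradient complexity, $T=O(\log n)$ calls each costing the bound in Lemma~\ref{lem:dp-mirror-prox-rel-acc} (with $n'=n/\log n$) multiply out to the stated $\tilde O\bigl(\min\{\sqrt\kappa\, n^2\epsilon^{1.5}/(\log(n)\sqrt{d\log(1/\delta)\tilde\kappa}),\, \sqrt\kappa\, n^{3/2}/\sqrt{\log n}\}\bigr)$.

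The main obstacle is really just careful bookkeeping: verifying that the regularizer $2^{t+1}\lambda\|\cdot-\out_t\|_w^2-2^{t+1}\lambda\|\cdot-\bar\theta_t\|_\theta^2$ keeps the effective loss within the $5\lip$-Lipschitz regime demanded by Theorem~\ref{thm:RR-SSP-relacc}, checking the range condition $\lambda\ge \lip\kappa/(\rad\sqrt n)$ is automatically met by the choice $\lambda=\tfrac{48}{\rad}(\alphad\kappa^2 + \lip\kappa^{3/2}/\sqrt{n'})$, and confirming that the various uses of $\bar p$ vs.\ $p$ only incur constant losses thanks to $d^{1/\log d}=e$. None of these are substantive difficulties, but they determine where the $\kappa^{2.5}$ and $\tilde\kappa$ factors appear.
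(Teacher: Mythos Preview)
Your proposal is correct and follows essentially the same approach as the paper: the paper simply states that the corollary follows by applying Theorem~\ref{thm:RR-SSP-relacc} under the rescaled $\bar p/\bar q$ geometry (so $\rad\le 1$, $\lip^2\le 4(\rad_w^2\lip_w^2+\rad_\theta^2\lip_\theta^2)$, $\kappa=\max\{1/(\bar p-1),1/(\bar q-1)\}$) with the DP subroutine of Lemma~\ref{lem:dp-mirror-prox-rel-acc}, and obtains privacy via parallel composition and post-processing over the disjoint blocks $S_1,\dots,S_T$. Your additional bookkeeping remarks (the $5\lip$-Lipschitz check, the condition on $\lambda$, the $p$ vs.\ $\bar p$ equivalence) are exactly the details the paper handles inside the proof of Theorem~\ref{thm:RR-SSP-relacc} or absorbs into constants, so nothing is missing.
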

Note that in the $\ell_2/\ell_2$-setting $\kappa=1$ 
and the above exactly recovers the result of \cite{BGM23}. We further recall that the near optimality of this result is established by existing lower bounds for stochastic minimization, which is a special case of SSPs \cite{BFTT19,bassily2021non}.

\section{Extension to Variational Inequalities}
 \label{sec:vi-extension}
In this section, we start by discussing the modifications that must be made to the recursive regularization algorithm to handle the more general structure of SVIs. We then discuss key ideas in the analysis and how to apply the algorithm to SVIs in the $\ell_p$ setting. We recall that we here assume the operator $g$ is monotone, $\lip$-bounded and $\smooth$-Lipschitz.

\subsection{Recursive Regularization Algorithm for SVIs}

Algorithm \ref{alg:recursive-regularization-general} bears many similarities to Algorithm \ref{alg:recursive-regularization}. Most notable among the differences is that we here regularize with a strongly monotone operator $\rho$ instead of a strongly-convex/strongly-concave function. In our eventual application we will use $\rho = \nabla(\frac{1}{2}\|\cdot\|^2)$, but strictly
we only require that $\rho$ satisfies the following. 
\begin{assumption} \label{assm:operator}
For $\kappa > 0$, 
$\rho:\cZ\mapsto\re^d$ is $\frac{1}{\kappa}$-strongly monotone w.r.t. $\|\cdot\|$ and satisfy
$\dualns{\rho(z)} \leq \normns{z}$ for all $z\in\cZ$.
\end{assumption}

\begin{algorithm}[t]
\caption{Recursive Regularization for SVIs: $\cR_{\text{SVI}}$}
\label{alg:recursive-regularization-general}
\begin{algorithmic}[1]
\REQUIRE Dataset $S$, 
monotone operator $g$,
Subroutine $\weakalg$, regularity parameter $\kappa > 0$, 
regularization parameter $\lambda \geq \frac{\lip\sqrt{\kappa}}{\rad\sqrt{n}}$, constraint set diameter $\rad$, Strongly monotone operator $\rho$

\STATE Let $n' = n/\log_2(n),$ and $T = \log_2(\frac{\lip}{\kappa \rad\lambda}).$ 
\STATE Let $S_1,...,S_T$ be a disjoint partition of $S$ with each $S_t$ of size $n'$ \textit{(always possible due to the condition on $\lambda$)}

\STATE $\bar{z}_0$ be any point in $\cZ$

\STATE Define function $(z,x)\mapsto g^{(1)}(z;x) = g(z;x) + 2\lambda\cdot\rho(z - \bar{z}_0)$
\FOR{$t=1$ to $T$}

\STATE $\bar{z}_t = \weakalg\br{S_t,g^{(t)},\bar{z}_{t-1},\frac{\rad}{2^t}}$ %

\STATE Define function $(z,x) \mapsto g^{(t+1)}(z;x) = g^{(t)}(z;x) + 2^{t+1}\lambda\cdot\rho(z-\bar{z}_t)$

\ENDFOR
\STATE \textbf{Output:} {$\bar{z}_{T}$}
\end{algorithmic}
\end{algorithm}

When $\rho = \nabla(\frac{1}{2}\|\cdot\|^2)$, the second half of the condition is always guaranteed by the properties of the dual norm (see Fact \ref{fact:grad-dual-relation}).
When $\rho$ satisfies Assumption \ref{assm:operator}, we obtain the following guarantee. 
\begin{theorem} \label{thm:recursive-regularization-vi}
Let $\weakalg$ satisfy $\alphad$-relative stationarity for any 
$(5\lip)$-bounded monotone operator and dataset of size $n'=\frac{n}{\log(n)}$. Let $\rho$ satisfy Assumption \ref{assm:operator}.
Then 
Algorithm~\ref{alg:recursive-regularization}, 
run with $\weakalg$ as a subroutine and $\lambda=\frac{48}{\rad}\br{\alphad\kappa^3 + \frac{(\smooth\rad+\lip)\kappa^2}{\sqrt{n'}}}$, %
satisfies 
\begin{align*}
    \vigap(\cR_{\text{SVI}}) = O\Big(\log(n)\rad\alphad\kappa^3 + \frac{\log^{3/2}(n)\rad(\smooth\rad+\lip)\kappa^2}{\sqrt{n}}\Big).
\end{align*}
\end{theorem}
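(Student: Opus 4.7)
The plan is to mirror the proof of Theorem \ref{thm:RR-SSP-relacc}, adapting each piece from the convex-concave setting to monotone operators. For each round $t$ let $G_\cD^{(t)}(z) := \ex{x \sim \cD}{g^{(t)}(z;x)}$ and $G_{S_t}^{(t)}(z) := \frac{1}{n'}\sum_{x \in S_t} g^{(t)}(z;x)$, and let $z_t^*$ denote the unique equilibrium point of $G_\cD^{(t)}$ on $\cZ$. By Assumption \ref{assm:operator} the cumulative regularization makes $G_\cD^{(t)}$ an $\Omega(2^{t+1}\lambda/\kappa)$-strongly monotone operator, so $z_t^*$ is well defined. The goal is to prove inductively that $\ex{}{\norm{\bar z_t - z_t^*}} \le \rad/2^t$. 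Combining this bound at $t = T$ with a bias bound on $\norm{z_T^* - z^*}$ and a conversion from distance-to-equilibrium to VI-gap will then yield the claim.

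The main new ingredient is a variational analog of Lemma \ref{lem:ra-stab-gen}. Define the Minty-type proxies
\[
H^{(t)}_S(z) := \ip{G_{S_t}^{(t)}(z)}{z - z_t^*}, \qquad H^{(t)}_\cD(z) := \ip{G_\cD^{(t)}(z)}{z - z_t^*}.
\]
Because $z_t^*$ is data independent, $H^{(t)}_\cD$ is the expectation over $x \sim \cD$ of $h^{(t)}(z;x) := \ip{g^{(t)}(z;x)}{z - z_t^*}$, and each $h^{(t)}(\cdot;x)$ is Lipschitz in $z$ with parameter $O(\smooth\rad + \lip)$ (combining $\smooth$-Lipschitzness and $\lip$-boundedness of $g$ with the diameter of $\cZ$ and Assumption \ref{assm:operator}). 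The stability-implies-generalization step from the proof of Lemma \ref{lem:ra-stab-gen} then gives $\ex{}{H^{(t)}_\cD(\bar z_t) - H^{(t)}_S(\bar z_t)} = O((\smooth\rad + \lip)\Delta)$ for any $\Delta$-UAS routine producing $\bar z_t$. Moreover, combining strong monotonicity of $G_\cD^{(t)}$ with the Minty property $\ip{G_\cD^{(t)}(z_t^*)}{z_t^* - \bar z_t} \le 0$ of the equilibrium yields $H^{(t)}_\cD(\bar z_t) \ge (2^{t+1}\lambda/\kappa) \norm{\bar z_t - z_t^*}^2$, so controlling the proxy directly controls the target distance.

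With this in place, the induction proceeds as in the proof of Theorem \ref{thm:RR-SSP-relacc}. Assuming $\ex{}{\norm{\bar z_{t-1} - z_{t-1}^*}} \le \rad/2^{t-1}$, the extra $2^t\lambda$-regularization at round $t$, anchored at $\bar z_{t-1}$, pulls the new equilibrium $z_t^*$ to within $O(\rad/2^t)$ of $\bar z_{t-1}$, verifying the input-distance precondition of $\weakalg$. The $\alphad$-relative stationarity of $\weakalg$ then bounds $\ex{}{H^{(t)}_S(\bar z_t)}$ via comparison against the exact empirical regularized equilibrium, which has zero empirical Minty value and is $O(\lip\kappa / (2^t\lambda n'))$-UAS by Lemma \ref{lem:sc-op-stability}. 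Feeding this through the generalization bound above and then the strong-monotonicity conversion closes the induction under the stated choice of $\lambda$.

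The main obstacle, absent in the SSP analysis, is the final conversion from distance to VI-gap: the map $z \mapsto \sup_{u\in\cZ}\ip{G_\cD(u)}{z-u}$ cannot be written as an expectation over $x$ of a Lipschitz function of $z$, so Fact \ref{fact:ssp-gap-lipschitz} has no direct analog. I would instead invoke monotonicity of $G_\cD$ to reduce $\sup_u\ip{G_\cD(u)}{\bar z_T - u}$ to $\sup_u \ip{G_\cD(\bar z_T)}{\bar z_T - u}$, then insert $G_\cD(z^*)$ and use $\smooth$-Lipschitzness of $g$ together with the Minty property of $z^*$ (which makes $\sup_u\ip{G_\cD(z^*)}{z^* - u} = 0$) to obtain $\sup_{u\in\cZ}\ip{G_\cD(u)}{\bar z_T - u} \le (\smooth\rad + \lip)\norm{\bar z_T - z^*}$. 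Combining with $\norm{\bar z_T - z^*} \le \norm{\bar z_T - z_T^*} + \norm{z_T^* - z^*}$, the inductive bound at $t=T$, a parallel bias bound on $\norm{z_T^* - z^*}$ obtained from strong monotonicity of the total regularization and Assumption \ref{assm:operator}, and the $O(\log n)$ factors arising from $T = \log_2(\lip/(\kappa\rad\lambda))$, yields the claimed rate.
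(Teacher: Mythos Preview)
Your inductive structure mirrors the paper's, but the final conversion step has a genuine gap, rooted in a misconception. You assert that the VI-gap function $z\mapsto\sup_{u\in\cZ}\ip{G_\cD(u)}{z-u}$ has no Lipschitz analog to Fact~\ref{fact:ssp-gap-lipschitz}; in fact it is $\lip$-Lipschitz directly, since for each fixed $u$ the map is linear in $z$ with slope bounded by $\dualns{G_\cD(u)}\le\lip$, and a pointwise supremum of $\lip$-Lipschitz functions is $\lip$-Lipschitz (this is Fact~\ref{fact:vigap-lipschitz} in the paper). More importantly, your proposed workaround---bounding the gap by $(\smooth\rad+\lip)\norm{\bar z_T-z^*}$ and then controlling $\norm{z_T^*-z^*}$---cannot yield the claimed rate. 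Consider $G_\cD\equiv 0$: every point of $\cZ$ is an equilibrium and the true VI-gap is identically zero, yet $z_T^*$ is determined by the regularization centers $\bar z_0,\dots,\bar z_{T-1}$ and can sit $\Theta(\rad)$ away from any particular choice of $z^*$. Telescoping $\sum_t\norm{z_t^*-z_{t-1}^*}\le\sum_t\rad/2^{t-1}=O(\rad)$ confirms this is all your induction delivers, so your bound degrades to the trivial $(\smooth\rad+\lip)\rad$. The paper instead uses Lipschitzness of the VI-gap function to reduce to bounding the gap at $z_T^*$, and then controls that quantity \emph{directly}: the Minty property of $z_T^*$ with respect to $G_\cD^{(T)}$ kills the $G_\cD^{(T)}$ contribution, monotonicity of $\rho$ allows swapping $\rho(z'-\bar z_t)$ for $\rho(z_T^*-\bar z_t)$, and Assumption~\ref{assm:operator} leaves $O\big(\rad\lambda\sum_t 2^t\norm{z_T^*-\bar z_t}\big)$, which the induction shows is $O(T\rad^2\lambda)$.

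Two smaller points. First, your generalization step applies stability-implies-generalization to $\bar z_t$, but $\weakalg$ is not assumed to be UAS; the paper instead applies Lemma~\ref{lem:ra-stab-gen-vi} to the exact empirical equilibrium $z^*_{S,t}$ (which is UAS by Lemma~\ref{lem:sc-op-stability}) to control $\norm{z^*_{S,t}-z_t^*}$, and separately bounds $\norm{\bar z_t-z^*_{S,t}}$ via relative stationarity plus strong monotonicity. Second, your claim that $h^{(t)}(\cdot;x)$ is $O(\smooth\rad+\lip)$-Lipschitz glosses over the fact that the regularizer $\rho$ need not be Lipschitz; Lemma~\ref{lem:ra-stab-gen-vi} handles this by observing that the data-independent regularization term cancels in the generalization gap, so only the Lipschitzness and boundedness of the original $g$ enter the constant.
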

The full proof is in Appendix \ref{app:recursive-regularization-proof}. We discuss the key ideas in the following subsection.

\subsection{Analysis Idea} \label{sec:vi-analysis-idea}
Unfortunately, the analysis used for stochastic saddle point problems does not easily extend to variational inequalities due to the fact that the VI-gap and SP-gap behave in fundamentally different ways. Even though SSPs are a special case of SVIs (when the operator in question is the saddle operator of the loss function), a bound on the VI-gap does not imply a bound on the SSP-gap. 
Further, a natural attempt to extend the notion of relative accuracy (Definition \ref{def:rel-acc}) to SVIs by asking $\weakalg$ to bound $\ip{G_{\cD}(z^*_S)}{\cA(S)-z^*_S}$ does not work, because such a term does not yield an upper bound on the distance $\|\cA(S)-z^*_S\|$. A similar problem holds for generalization measures in Eqn. \eqref{eq:ssp-generalization-funcs}.

\paragraph{A New Empirical Accuracy Measure.}
Motivated by the above issues, we introduce a new relative accuracy measure for our analysis of SVIs. Importantly, this notion will allow us to bound the distance between the output of $\weakalg$ and the empirical equilibrium point of the strongly monotone operator created at each round of the recursive regularzation algorithm.
\begin{definition}[$\alphad$-relative stationarity] \label{def:rel-acc}
Given a dataset $S'\in\cX^{n'}$, operator $g'$, and an initial point $z'$, we say that $\weakalg$ satisfies $\alphad$-relative stationarity w.r.t. to the empirical equilibrium $z^*_{S'}$ of $G_{S'}(z)=\frac{1}{n'}\sum_{x\in S'}g'(z;x)$, %
if, $\forall \hat{D}>0$, whenever $\ex{}{\norm{z' - z^*_{S'}}} \leq \hat{D}$, the output $\bar{z}$ of $\weakalg$ satisfies \ifarxiv \linebreak \else \fi $\ex{}{\ip{G(\bar z)}{\bar z-z^*_{S'}}} \leq \hat{D}\alphad$. 
\end{definition}

\paragraph{Modified Generalization Measure.} 
The generalization measure we use can be modified in a similar fashion. 
\begin{lemma} \label{lem:ra-stab-gen-vi}
Let $g(z;x) = g_1(z;x) + g_2(z)$ such that $g_1:\cZ\times\cX\mapsto\re^d$ is $\lip$-bounded and $\smooth$-Lipschitz with respect to $z$ and $g_2:\cZ\mapsto\re^d$ is any (data indepedent) operator. 
Let $z^*\in\cZ$ be its population equilibrium point. For any $x\in\cX$, define $h(z;x)=\ip{g(z;x)}{z-z^*}$. For $S\sim \cD^n$, let $H_S(z) = \frac{1}{n}\sum_{x\in S} h(z;x)$ and $H_\cD(z) = \ex{x\sim\cD}{h(z;x)}$. Then for any $\Delta$-UAS algorithm, $\cA$, one has
$\ex{S,A}{H_\cD(\cA(S))-H_S(\cA(S))} \leq \Delta(\smooth\rad+\lip)$.
\end{lemma}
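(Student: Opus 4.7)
The plan is to reduce Lemma~\ref{lem:ra-stab-gen-vi} to a standard stability-implies-generalization argument, in the same spirit as Lemma~\ref{lem:ra-stab-gen}, after first peeling off the data-independent piece $g_2$. Concretely, I decompose $h(z;x) = h_1(z;x) + h_2(z)$ where $h_1(z;x) = \ip{g_1(z;x)}{z - z^*}$ and $h_2(z) = \ip{g_2(z)}{z - z^*}$. Since $h_2$ is independent of $x$, one has $\mathbb{E}_{x\sim\cD}[h_2(z)] = h_2(z) = \frac{1}{n}\sum_{x\in S} h_2(z)$, so $h_2$ contributes nothing to $H_\cD - H_S$. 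Defining $H_\cD^{(1)}$ and $H_S^{(1)}$ analogously using $h_1$, it suffices to bound $\mathbb{E}[H_\cD^{(1)}(\cA(S)) - H_S^{(1)}(\cA(S))]$.

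Next I would verify that $h_1(\cdot; x)$ is $(\lip + \smooth\rad)$-Lipschitz on $\cZ$ for every $x$. Writing
\[
h_1(z_1; x) - h_1(z_2; x) = \ip{g_1(z_1; x)}{z_1 - z_2} + \ip{g_1(z_1;x) - g_1(z_2;x)}{z_2 - z^*},
\]
and applying the duality inequality $|\ip{a}{b}| \leq \|a\|_*\|b\|$ together with (i)~$\|g_1(z_1;x)\|_* \leq \lip$, (ii)~$\|g_1(z_1;x)-g_1(z_2;x)\|_* \leq \smooth\|z_1-z_2\|$, and (iii)~$\|z_2 - z^*\| \leq \rad$, yields $|h_1(z_1;x) - h_1(z_2;x)| \leq (\lip + \smooth\rad)\|z_1 - z_2\|$.

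Finally, because $z^*$ depends only on $\cD$ (not on $S$), $h_1$ is a bona fide data-dependent loss on $\cZ\times\cX$, so I may apply the classical Bousquet--Elisseeff swap argument. Letting $S^{(i)}$ denote $S$ with its $i$-th entry replaced by an independent draw, exchangeability gives
\[
\mathbb{E}\left[H_\cD^{(1)}(\cA(S)) - H_S^{(1)}(\cA(S))\right] = \frac{1}{n}\sum_{i=1}^n \mathbb{E}\left[h_1(\cA(S^{(i)}); x_i) - h_1(\cA(S); x_i)\right],
\]
and each summand is at most $(\lip + \smooth\rad)\,\mathbb{E}\|\cA(S^{(i)}) - \cA(S)\| \leq (\lip + \smooth\rad)\Delta$ by the Lipschitz bound above together with $\Delta$-UAS of $\cA$. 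The main subtlety---compared with the SSP analogue---is twofold: one must recognize that the arbitrary operator $g_2$ commutes with $\mathbb{E}_x$ and so never enters the final bound, and that the bilinear appearance of $z$ in $h_1(z;x)$ forces an additional $\smooth\rad$ term in the Lipschitz constant, which is precisely why the operator-Lipschitz constant $\smooth$ appears in the SVI generalization bound but is absent from its SSP counterpart.
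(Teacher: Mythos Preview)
Your proposal is correct and follows essentially the same approach as the paper: peel off the data-independent term $g_2$ (which cancels in $H_\cD - H_S$), show that $h_1(z;x) = \ip{g_1(z;x)}{z-z^*}$ is $(\smooth\rad+\lip)$-Lipschitz, and then run the standard Bousquet--Elisseeff swap argument. The only cosmetic difference is in the algebraic decomposition used for the Lipschitz bound---the paper writes $h_1(z;x)-h_1(z';x) = \ip{g_1(z;x)-g_1(z';x)}{z-z^*} + \ip{g_1(z';x)}{z-z'}$ whereas you swap the roles of the two endpoints---but both yield the same bound.
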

The proof is similar to that of Lemma \ref{lem:ra-stab-gen}, but must account for additional complications. First, the function $h$ may not be Lipschitz if the regularizer, represented by $g_2$ above, is not Lipschitz. This happens, for example, in the $\ell_1$ setting. Thus, we must decompose $h$ in the stability-implies-generalization analysis and handle the non-Lipschitz, but data-independent, term $g_2$ separately. Then, the Lipschitzness of the remainder is established 
using that fact that $g_1$ is both bounded and Lipschitz. 
The full proof is in Appendix \ref{app:ra-stab-gen-vi}.

\subsection{Application to DP variational inequalities in the $\ell_p$ setting}
In order to apply Theorem~\ref{thm:recursive-regularization-vi} 
to SVIs with an $\ell_p$ setup, we pick $\|\cdot\|=\frac{1}{\rad}\|z\|_{\bar{p}}$ where $\bar p = \max\{p,1+\frac{1}{\log(d)}\}$ \footnote{In contrast to our results on SSPs, rescaling by $\frac{1}{\rad}$ is not necessary in this case, but we do so to maintain consistency.}. 
We will use the regularizer,
\begin{align} \label{eq:rho-choice}
  \rho(z) = \nabla(\frac{1}{2\rad}\|z\|_{\bar p}^2). %
\end{align}
Note the above operator is uniquely defined since $\bar{p} > 1$, and thus $\|\cdot\|_{\bar{p}}^2$ is differentiable \cite{borwein2009uniformly}. 
This choice of $\rho$ satisfies Assumption \ref{assm:operator} with parameter $\kappa = \frac{1}{\bar{p}-1}$. To see this, first observe that for any $\bar{p}>1$, $\frac{1}{2}\|\cdot\|_{\bar{p}}^2$ is $(\bar{p}-1)$-strongly convex 
w.r.t. $\|\cdot\|_{\bar{p}}$ and the gradient operator of a differentiable $\mu$-strongly convex function is $\mu$-strongly monotone. 
Second, for any norm and it's dual $\dualns{\nabla (\frac{1}{2}\|z\|^2)} \leq \|z\|$ for all $z$ (see Fact \ref{fact:grad-dual-relation} in Appendix \ref{app:prelim}).

To obtain relative stationarity guarantees, we again apply Algorithm \ref{alg:mirror-prox} with a differentially private oracle for the empirical operator $G_S$. The proof falls out of our existing analysis for stochastic mirror prox given in Appendix \ref{app:dp-mirror-prox-rel-acc}. Specifically, Theorem \ref{thm:noisy-mirror-prox} in the case where $\Theta$ is the empty set implies there exists an $(\epsilon,\delta)$-DP implementation of mirror prox which satisfies $\alphad$-relative stationarity with
\begin{align*}
    \alphad = O\Big(\frac{\sqrt{\kappa}\rad\lip\sqrt{d\log(1/\delta)(1+\mathbf{1}\bc{p<2}\cdot\log(d))}}{n\epsilon} + \frac{\sqrt{\kappa}\rad\lip}{\sqrt{n}}\Big).
\end{align*}
We here highlight one notable difference that arises with the algorithm. Typically, after running an algorithm like stochastic mirror prox for $t$ iterations, one obtains a bound on the quantity $\ex{}{\frac{1}{T}\sum_{t=1}^T\ip{G_{S}(z_t)}{z_t-z}}$. Analysis then proceeds to bound the (empirical) VI-gap of the average iterate by leveraging monotonicity of the operator. However, this application of monotonicity does not help for the purposes of relative stationarity. For this reason, we instead select an iterate uniformly at random. We note this step is nonstandard as such a selection does not necessarily yield any bound on the (empirical) VI-gap. 

\paragraph{Main Result for $\ell_p$ DP SVIs.} Using Algorithm \ref{alg:recursive-regularization-general} and the implementation of $\weakalg$ described above, we ultimately obtain the following result as a corollary of Theorem \ref{thm:recursive-regularization-vi} and Theorem \ref{thm:noisy-mirror-prox}. Recall under our choice of norm we have diameter bound $1$ and operator bound $\rad\lip$. Further, we can leverage existing lower bounds to show that this rate is near optimal; more details are available in Appendix \ref{app:svi-lower-bound}.
\begin{corollary}
There exists an Algorithm, $\cR$, which is $(\epsilon,\delta)$-DP, has gradient evaluations bounded by \ifarxiv \linebreak \fi $O\big(\min\big\{\frac{\sqrt{\kappa}n^2\epsilon^{1.5}}{\log(n)\sqrt{d\log(1/\delta)\tilde{\kappa}}}, \frac{\sqrt{\kappa}n^{3/2}}{\sqrt{\log(n)}}\big\}\big)$, %
and satisfies (up to $\log(n)$ factors)
\begin{align} \label{eq:main-svi-result}
    \vigap(\cR) = \tilde{O}\Big(\kappa^{3.5}\rad\lip\big(\frac{\sqrt{d\log(1/\delta)\tilde{\kappa}}}{n\epsilon} + \frac{1}{\sqrt{n}}\big)\Big).
\end{align}
where $\kappa = \frac{1}{\max\bc{p,1+\frac{1}{\log(d)}}-1}$ is at most $\log(d)$ and $\tilde{\kappa} = 1+\mathbf{1}\bc{p<2}\cdot\log(d)$.
\end{corollary}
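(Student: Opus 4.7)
The corollary is an assembly of the pieces already established in Section~\ref{sec:vi-extension}: the recursive regularization meta-theorem (Theorem~\ref{thm:recursive-regularization-vi}) combined with a DP subroutine satisfying relative stationarity. The algorithm $\cR$ will be Algorithm~\ref{alg:recursive-regularization-general} instantiated with $\|\cdot\|=\frac{1}{\rad}\|\cdot\|_{\bar p}$ (where $\bar p=\max\{p,1+1/\log d\}$), with the regularizer $\rho$ from~\eqref{eq:rho-choice}, and with $\weakalg$ being the DP stochastic mirror prox described right before the corollary (returning a uniformly random iterate rather than the running average).

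\textbf{Step 1: verifying the structural hypotheses.} First I would check that $\rho$ satisfies Assumption~\ref{assm:operator} with parameter $\kappa=\frac{1}{\bar p-1}$. Strong monotonicity of $\rho$ follows because $\tfrac{1}{2}\|\cdot\|_{\bar p}^2$ is $(\bar p-1)$-strongly convex w.r.t.\ $\|\cdot\|_{\bar p}$ (hence w.r.t.\ the rescaled norm the strong-convexity modulus becomes $1/\kappa$ under the rescaling), and the gradient of a strongly convex function is strongly monotone with the same parameter. The dual-norm bound $\dualns{\rho(z)}\le\|z\|$ is exactly Fact~\ref{fact:grad-dual-relation}. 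Under the chosen norm the diameter satisfies $\rad\le 1$, and since $\|\cdot\|_{\bar p}$ and $\|\cdot\|_p$ are comparable up to constants (via $\|\cdot\|_{\bar p}\le\|\cdot\|_1\le d^{1-1/\bar p}\|\cdot\|_{\bar p}\le 2\|\cdot\|_{\bar p}$), the boundedness of $g$ in the original $\ell_p$ setup translates to an operator bound $O(\rad L)$ in the rescaled geometry; the Lipschitz constant $\beta$ of $g$ transforms analogously.

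\textbf{Step 2: privacy.} The datasets $S_1,\ldots,S_T$ in Algorithm~\ref{alg:recursive-regularization-general} are disjoint, so by parallel composition the overall DP guarantee reduces to the DP guarantee of $\weakalg$ on any single $S_t$. The DP mirror prox subroutine is $(\epsilon,\delta)$-DP by construction (Gaussian noise on minibatch operator estimates, composition across iterations, plus post-processing of the final iterate selection). This yields $(\epsilon,\delta)$-DP for $\cR$.

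\textbf{Step 3: utility.} From the discussion preceding the corollary (a direct consequence of Theorem~\ref{thm:noisy-mirror-prox} with $\Theta=\emptyset$), $\weakalg$ satisfies $\alphad$-relative stationarity with
\[
\alphad = O\!\left(\tfrac{\sqrt{\kappa}\,\rad L\sqrt{d\log(1/\delta)\tilde\kappa}}{n\epsilon}+\tfrac{\sqrt{\kappa}\,\rad L}{\sqrt{n}}\right).
\]
Plugging this into Theorem~\ref{thm:recursive-regularization-vi}, the leading $\log(n)\cdot\rad\alphad\kappa^3$ term becomes $\tilde O\!\bigl(\kappa^{3.5}\rad L\bigl(\tfrac{\sqrt{d\log(1/\delta)\tilde\kappa}}{n\epsilon}+\tfrac{1}{\sqrt{n}}\bigr)\bigr)$, which matches the claimed bound. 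The residual $\frac{\log^{3/2}(n)\rad(\smooth\rad+L)\kappa^2}{\sqrt{n}}$ term from Theorem~\ref{thm:recursive-regularization-vi} is absorbed into the $\tilde O(\cdot)$ (it is dominated by the $\kappa^{3.5}\rad L/\sqrt n$ contribution after factoring in the implicit constants and noting the $L$ piece already appears, while the $\smooth\rad$ piece is of the same statistical order $1/\sqrt n$).

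\textbf{Step 4: gradient complexity.} Algorithm~\ref{alg:recursive-regularization-general} executes $T=O(\log n)$ rounds (after choosing $\lambda$ as in Theorem~\ref{thm:recursive-regularization-vi}), and each round invokes $\weakalg$ on a dataset of size $n'=n/\log n$. The iteration/oracle count per round follows from the mirror prox analysis of Theorem~\ref{thm:noisy-mirror-prox}, and the total count is $O\!\bigl(\min\bigl\{\tfrac{\sqrt\kappa\,n^2\epsilon^{1.5}}{\log(n)\sqrt{d\log(1/\delta)\tilde\kappa}},\tfrac{\sqrt\kappa\,n^{3/2}}{\sqrt{\log n}}\bigr\}\bigr)$, as claimed (the $\log n$-in-the-denominator improvement over the per-round bound in Lemma~\ref{lem:dp-mirror-prox-rel-acc} comes from $n'=n/\log n$ versus $n$).

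\textbf{Main obstacle.} The bookkeeping above is essentially mechanical; the only subtle point is that the DP mirror prox output must be a \emph{uniformly random iterate} rather than the average, because relative stationarity requires a pointwise bound $\ip{G(\bar z)}{\bar z-z^*_{S'}}\le\hat D\alphad$, which is not given by the usual average-iterate VI-gap analysis. Thus the non-trivial content to check carefully is that Theorem~\ref{thm:noisy-mirror-prox}'s proof indeed goes through with the random iterate selection, yielding the claimed $\alphad$ without incurring any extra factors. Once this is confirmed, the rest of the proof is a substitution into Theorem~\ref{thm:recursive-regularization-vi}.
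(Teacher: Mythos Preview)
Your proposal is correct and follows essentially the same route as the paper: the corollary is obtained by instantiating Algorithm~\ref{alg:recursive-regularization-general} with the norm $\|\cdot\|=\tfrac{1}{\rad}\|\cdot\|_{\bar p}$ and regularizer~\eqref{eq:rho-choice}, invoking the DP mirror-prox subroutine (random iterate output) whose relative-stationarity bound comes from Theorem~\ref{thm:noisy-mirror-prox} with $\Theta=\emptyset$, and plugging the resulting $\alphad$ into Theorem~\ref{thm:recursive-regularization-vi}; privacy is by parallel composition and the gradient count follows from multiplying the per-round complexity on $n'=n/\log n$ by $T=O(\log n)$. Your observation about the residual $\smooth\rad/\sqrt{n}$ term being absorbed into the $\tilde O(\cdot)$ is the one place where both you and the paper gloss over the exact $\smooth$-dependence, but this matches the paper's level of precision for this corollary.
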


\paragraph{Near Linear Time Algorithm for the $\ell_2$ Setting.}
Because we assume the operator is Lipschitz, in the $\ell_2$ setting, we can leverage existing accelerated optimization techniques to achieve a near linear time version of $\weakalg$, in a similar fashion to \cite{ZTOH22, BGM23}. Specifically, using the accelerated SVRG algorithm of \cite{PB16} and Gaussian noise it is possible to obtain the rate in Eqn. \eqref{eq:main-svi-result} (with $\kappa=\tilde{\kappa}=1$) in $O(n + \smooth n\log(n/\delta))$ gradient evaluations. We provide full details in Appendix \ref{app:linear-time-l2-svi-result}.

\section*{Acknowledgments}
R. Bassily's and M. Menart's research was supported by NSF CAREER Award 2144532 and, in part, by NSF Award 2112471. C. Guzmán’s research was partially supported by INRIA Associate Teams project, ANID FONDECYT 1210362
grant, ANID Anillo ACT210005 grant, and National Center for Artificial Intelligence CENIA
FB210017, Basal ANID. 

\bibliographystyle{alpha}
\bibliography{refs}

\appendix

\appendix

\section{Supplementary Lemmas} \label{app:prelim}
\begin{fact} \label{fact:grad-dual-relation}
For any $z$, it holds that $\dualns{\nabla (\frac{1}{2}\|z\|^2)} \leq \|z\|$.
\end{fact}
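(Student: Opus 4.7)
The plan is to prove the inequality directly from the variational definition of the dual norm together with the subgradient inequality for the convex function $\phi(z) := \frac{1}{2}\|z\|^2$. Recall $\|u\|_* = \sup_{\|v\| \leq 1}\langle u, v\rangle$, so it suffices to bound $\langle \nabla \phi(z), v\rangle$ by $\|z\|$ uniformly over all $v$ with $\|v\| \leq 1$.

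First I would fix $z$ and an arbitrary $v$ with $\|v\| \leq 1$. Since $\phi$ is convex, any (sub)gradient $\nabla \phi(z)$ satisfies $\phi(z + tv) \geq \phi(z) + t\langle \nabla \phi(z), v\rangle$ for every $t > 0$, which rearranges to
\begin{equation*}
    \langle \nabla \phi(z), v\rangle \;\leq\; \frac{\|z+tv\|^2 - \|z\|^2}{2t}.
\end{equation*}
Next I would use the triangle inequality $\|z+tv\| \leq \|z\| + t\|v\| \leq \|z\| + t$ to expand the numerator: $\|z+tv\|^2 \leq \|z\|^2 + 2t\|z\| + t^2$. Substituting yields $\langle \nabla \phi(z), v\rangle \leq \|z\| + t/2$. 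Letting $t \downarrow 0$ gives $\langle \nabla \phi(z), v\rangle \leq \|z\|$, and taking the supremum over $\|v\| \leq 1$ delivers the stated bound $\|\nabla \phi(z)\|_* \leq \|z\|$.

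There is essentially no serious obstacle here; the only subtlety is that for some norms (e.g.\ $\ell_1$) the squared norm is not differentiable everywhere, so $\nabla \phi(z)$ must be understood as an arbitrary element of the subdifferential $\partial \phi(z)$. The same subgradient inequality argument applies verbatim in that case, so the bound holds for every subgradient selection, which is the convention fixed in the preliminaries. An alternative one-line proof via Fenchel–Young, using $\phi^*(u) = \frac{1}{2}\|u\|_*^2$ and the equality case $\langle z, \nabla \phi(z)\rangle = \phi(z) + \phi^*(\nabla \phi(z))$, actually yields $\|\nabla \phi(z)\|_* = \|z\|$; but this requires invoking conjugate duality for squared norms, whereas the elementary argument above needs nothing beyond convexity and the triangle inequality.
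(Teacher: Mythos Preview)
Your proof is correct and uses the same core ingredients as the paper---the variational definition of the dual norm, a subgradient inequality, and the triangle inequality---but arranges them slightly differently. The paper first applies the chain rule $\nabla(\tfrac{1}{2}\|z\|^2)=\|z\|\cdot\nabla(\|z\|)$ to reduce the claim to $\|\nabla(\|z\|)\|_*\le 1$, and then bounds $\langle\nabla(\|z\|),v\rangle$ via the subgradient inequality for $\|\cdot\|$ itself (no limiting argument needed, since the resulting bound $\|v\|\le 1$ is already exact). You instead apply the subgradient inequality directly to $\phi=\tfrac{1}{2}\|\cdot\|^2$ and take $t\downarrow 0$. Your route has the mild advantage of avoiding the chain-rule step, which, as you note, requires some care when $\|\cdot\|$ is not differentiable; the paper's route avoids the limit and is one line shorter once the chain rule is granted. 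Your remark that Fenchel--Young actually gives equality $\|\nabla\phi(z)\|_*=\|z\|$ is a nice sharpening neither proof needs but which the paper does not mention.
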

\begin{proof}
By the chain rule we have 
$\nabla (\frac{1}{2}\|z\|^2) = \|z\|\cdot \nabla(\|z\|)$ 
and so 
$\dualns{\nabla (\frac{1}{2}\|z\|^2)} = \|z\|\cdot \dualns{\nabla(\|z\|)}$. 
Thus, it only remains to show that $\dualns{\nabla(\|z\|)} \leq 1$.
By the definition of the dual norm we have $\dualns{\nabla(\|z\|)} = \max\limits_{v:\|v\|\leq1}\bc{\ip{\nabla(\|z\|)}{v}}$. Further, by the definition of the subgradient we have for any $z'$ that 
$\|z'\| - \|z\| \geq \ip{\nabla(\|z\|)}{z'-z}$.
Substituting $v=z'-z$, we have
\begin{align*}
    \ip{\nabla(\|z\|)}{v} \leq \|v+z\|-\|z\| 
    \leq \|v\| + \|z\| - \|z\| = \|v\| \leq 1,
\end{align*}
as desired.
\end{proof}

\begin{lemma} \label{lem:sc-op-stability}
Let $\mu,\lambda > 0$ and $\rho$ be $\mu$-strongly monotone w.r.t. $\|\cdot\|$. Then for any point $z_0\in\cZ$, the algorithm which outputs the unique equilibrium of $G_S(z) + \frac{\lambda}{\mu} (\rho(z)-\rho(z_0))$ is $\big(\frac{2\lip}{\mu \lambda n}\big)$-uniform argument stable w.r.t. $S$.
\end{lemma}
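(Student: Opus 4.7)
The plan is to use the standard ``strong monotonicity implies stability'' argument adapted to the variational inequality setting. For adjacent datasets $S, S' \in \cX^n$ differing in a single record, let $\hat{G}_S(z) := G_S(z) + \tfrac{\lambda}{\mu}(\rho(z)-\rho(z_0))$ and $\hat{G}_{S'}$ be defined analogously; since $\tfrac{\lambda}{\mu}\rho$ is $\lambda$-strongly monotone w.r.t.~$\|\cdot\|$ and $G_S$ is monotone (as an average of monotone operators), the sum $\hat{G}_S$ is strongly monotone, hence admits a unique equilibrium $z^*_S \in \cZ$ (and analogously $z^*_{S'}$). In particular the algorithm is a deterministic function of its input, so proving UAS reduces to bounding $\|z^*_S - z^*_{S'}\|$.

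Next, I would write down the variational inequality characterization of the equilibria: for every $z \in \cZ$,
\[
\bigl\langle \hat{G}_S(z^*_S),\, z^*_S - z\bigr\rangle \leq 0
\quad\text{and}\quad
\bigl\langle \hat{G}_{S'}(z^*_{S'}),\, z^*_{S'} - z\bigr\rangle \leq 0.
\]
Testing the first at $z = z^*_{S'}$ and the second at $z = z^*_S$ and summing gives
\[
\bigl\langle \hat{G}_S(z^*_S) - \hat{G}_{S'}(z^*_{S'}),\, z^*_S - z^*_{S'}\bigr\rangle \leq 0.
\]
I would then split the left-hand difference into three pieces: $G_S(z^*_S) - G_S(z^*_{S'})$, $G_S(z^*_{S'}) - G_{S'}(z^*_{S'})$, and $\tfrac{\lambda}{\mu}(\rho(z^*_S) - \rho(z^*_{S'}))$. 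The first piece has nonnegative inner product with $z^*_S - z^*_{S'}$ by monotonicity of $G_S$ and can be discarded. The third contributes at least a strong-monotonicity lower bound proportional to $\lambda \|z^*_S - z^*_{S'}\|^2$.

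For the middle piece I would use that $S$ and $S'$ differ in exactly one sample, so
\[
G_S(z^*_{S'}) - G_{S'}(z^*_{S'}) = \tfrac{1}{n}\bigl(g(z^*_{S'}; x) - g(z^*_{S'}; x')\bigr),
\]
whose dual norm is at most $\tfrac{2\lip}{n}$ by the $\lip$-boundedness assumption $g:\cZ\times\cX \to \ball^d_{\dualns{\cdot}}(\lip)$. Applying Hölder's inequality to pair this dual-norm bound with $\|z^*_S - z^*_{S'}\|$, rearranging against the strong-monotonicity term, and dividing by $\|z^*_S - z^*_{S'}\|$ (handling the degenerate case $z^*_S = z^*_{S'}$ trivially) yields the claimed stability bound.

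The main subtlety, rather than an outright obstacle, is book-keeping: one must carry the scaling factor $\tfrac{\lambda}{\mu}$ through the strong-monotonicity inequality correctly so that the $\lambda$-dependence in the bound matches, and keep straight which inner products are nonnegative versus which require a Hölder bound via $\|\cdot\|_*$. A minor point worth noting is that $\rho$ need only be strongly monotone (not a gradient), so I avoid any appeal to convexity of a potential and work directly with the variational-inequality definition throughout.
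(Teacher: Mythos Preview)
Your proposal is correct and follows essentially the same route as the paper: combine the two variational-inequality optimality conditions, peel off the strong-monotonicity contribution from the $\rho$ term, use monotonicity of the empirical operator to drop one cross term, and bound the remaining single-sample difference via the $\lip$-boundedness of $g$ together with H\"older. The only cosmetic difference is that the paper inserts $G_{S'}(z^*_{S})$ (using monotonicity of $G_{S'}$) whereas you insert $G_S(z^*_{S'})$ (using monotonicity of $G_S$); these are symmetric and lead to the same inequality.
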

Note without loss of generality we can always choose $z_0$ to be the point such that $\rho(z_0)=\mathbf{0}$, which is guaranteed to exist since $\rho$ is strongly monotone. Thus this result also holds for regularization of the form $G_S(z) + \lambda\cdot \rho(z)$. Further, since the saddle operator of a strongly-convex/strongly-concave loss is strongly monotone, and the resulting SSP shares its equilibrium point with the corresponding SVI, Lemma \ref{lem:sc-ssp-stability} given in the preliminaries is also established through this result.
\begin{proof}
Let $z_\lambda$ and $z_\lambda'$ denote the equilibrium points of the regularized operators w.r.t. to adjacent datasets $S$ and $S'$ respectively.
By the equilibrium condition we have for any $z\in\cZ$
\begin{align*}
    \ip{G_S(z_\lambda) + \lambda[\rho(z_\lambda) - \rho(z_0)]}{z-z_\lambda} &\geq 0 \\
    \ip{G_{S'}(z_\lambda') + \lambda[\rho(z_\lambda') - \rho(z_0)]}{z-z_\lambda'} &\geq 0 \\
    \implies \ip{G_S(z_\lambda) - G_{S'}(z_\lambda') + \lambda[\rho(z_\lambda) - \rho(z_\lambda')]}{z_\lambda' - z_\lambda} &\geq 0.
\end{align*}
Since $\rho$ is a $1$-strongly monotone operator we have
\begin{align*}
    \ip{G_S(z_\lambda) - G_{S'}(z_\lambda')}{z_\lambda' - z_\lambda} &\geq \lambda \ip{\rho(z_\lambda) - \rho(z_\lambda')}{z_\lambda - z_\lambda'} \geq \mu\lambda \|z_\lambda - z_\lambda'\|^2.
\end{align*}
Now we can use the monotonicity of $G$ to derive,
\begin{align*}
    \mu\lambda \|z_\lambda - z_\lambda'\|^2 &\leq \ip{G_S(z_\lambda) - G_{S'}(z_\lambda')}{z_\lambda' - z_\lambda} \\
    &= \ip{G_S(z_\lambda) - G_{S'}(z_\lambda)}{z_\lambda' - z_\lambda} + \ip{G_{S'}(z_\lambda) - G_{S'}(z_\lambda')}{z_\lambda' - z_\lambda} \\
    &\overset{(i)}{\leq} \ip{G_S(z_\lambda) - G_{S'}(z_\lambda)}{z_\lambda' - z_\lambda} \\
    &\overset{(ii)}{\leq} \|G_{S}(z_\lambda) - G_{S'}(z_\lambda)\|_\ast\cdot\|z_\lambda' - z_\lambda\| \\
    &\overset{(iii)}{\leq} \frac{2\lip}{n}\|z_\lambda' - z_\lambda\|.
\end{align*}
Above $(i)$ comes from the monotonicity of $G$, step $(ii)$ comes from H\"older's inequality, and step $(iii)$ comes from the fact that $S$ and $S'$ differ in at most one point.
Simple algebra now obtains $\|z_\lambda-z_\lambda'\| \leq \frac{2\lip}{\mu \lambda n}$.
\end{proof}

\begin{fact} \label{fact:vi-gap-doesnt-bound-sp-gap}
There exists a differentiable convex-concave loss, distribution $\cD$, and point $z'$ such that when the operator is the saddle operator of the loss, $\vigap(z) < \sspgap(z)$. 
\end{fact}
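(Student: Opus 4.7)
The plan is to exhibit an explicit separating example in dimension two with a trivial (point-mass) distribution, for which both gap quantities can be computed in closed form. Concretely, I would take $\cX = \{x_0\}$ so that $\cD$ is a point mass, $\cW = \Theta = [-1,1]$, and the quadratic convex--concave loss
\[
f(w,\theta;x_0) = w^2 - \theta^2,
\]
which is smooth and has saddle operator $g(w,\theta) = [2w,\,2\theta]$. This operator is monotone (in fact strongly monotone), and the saddle point is $(0,0)$.

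Next I would pick the candidate point $z' = (w',\theta') = (1,0)$ and compute the two gap values directly. For the SP-gap,
\[
\sspgap(z') = \max_{\theta\in[-1,1]} \bigl( (w')^2 - \theta^2\bigr) \;-\; \min_{w\in[-1,1]} \bigl( w^2 - (\theta')^2\bigr) = 1 - 0 = 1.
\]
For the VI-gap, using the definition with the operator evaluated at the maximizing $z$,
\[
\vigap(z') = \max_{(w,\theta)\in[-1,1]^2} \bigl[ 2w(w'-w) + 2\theta(\theta'-\theta)\bigr] = \max_{w\in[-1,1]}\bigl(2w - 2w^2\bigr) + \max_{\theta\in[-1,1]}\bigl(-2\theta^2\bigr),
\]
whose interior optimum is $w = 1/2$, $\theta = 0$, yielding $\vigap(z') = 1/2$. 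Thus $\vigap(z') = 1/2 < 1 = \sspgap(z')$, which proves the claim after identifying the algorithm $\cA$ with the deterministic output $z'$.

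The only subtlety is interpreting the gap functionals, defined in the excerpt on (randomized) algorithms, at a single deterministic point; I would handle this in one sentence by letting $\cA$ be the constant algorithm that ignores its input and returns $z'$. There is no real obstacle beyond selecting a loss whose min-max structure is ``strictly larger'' than its variational structure: bilinear examples do not separate the two quantities (both gaps come out equal), which is why a quadratic with nonzero curvature in both variables is the right choice; the factor-of-two discrepancy between maximizing $2w(w'-w)$ (whose maximum is $(w')^2/2$) and maximizing $w'\theta - (\cdot)$ in the SP-gap (whose maximum is $|w'|$) is precisely what drives the strict inequality.
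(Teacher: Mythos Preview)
Your proposal is correct and takes essentially the same approach as the paper: both use a quadratic loss with a point-mass distribution and evaluate the two gaps at a boundary point to obtain $\vigap=1/2<1=\sspgap$. The paper's version is slightly more minimal in that it takes $\Theta$ to be a singleton (so the SP-gap reduces to excess risk of $z^2$ on $[0,1]$), whereas your added $-\theta^2$ term and the choice $\theta'=0$ leave the dual component inactive; the computations and the separating mechanism are otherwise identical.
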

\begin{proof}
We show that the VI-gap is upper bounded by the excess risk of some convex function with operator $g(z)=\nabla f(z)$. Since stochastic convex optimization is a special case of SSPs, this shows there exist scenarios where $\sspgap \nleq \vigap$. Specifically, when the dual player space $\Theta$ is singleton, the SP-gap becomes the excess risk.

Let $f(z;x)=z^2$ be defined over $z\in[0,1]$, and so it does not matter what the distribution or data domain is. Note that $\vigap$ becomes
\begin{align*}
    \vigap(z) = \max_{u\in[0,1]}\bc{\ip{\nabla f(u)}{z-u}} = \max_{u\in[0,1]}\bc{2uz - 2u^2}.
\end{align*}
Note that in this case the SP-gap is just the excess risk, $\sspgap(z)=z^2-\min_{z\in[0,1]}\bc{z^2}=z^2$. 

Consider the point $z=1$. It is easy to show that $\vigap(1)=0.5$ but $\sspgap(1)=1$, proving the claim.
\end{proof}

\section{Missing Results from Section \ref{sec:recursive-regularization}}
\subsection{Proof of Lemma \ref{lem:ra-stab-gen}} \label{app:generalization-lemma}
\begin{lemma} \label{lem:ra-stab-gen-2} (Restatement of Lemma \ref{lem:ra-stab-gen})
Let $f:\cZ\times\cX\mapsto \re$ be $\lip$-Lipschitz. Let $[w^*,\theta^*]\in\cZ$ be the population saddle point. For any $x\in\cX$ define $h([w,\theta];x)=f(w,\theta^*;x) - f(w^*,\theta;x)$. For $S\sim \cD^n$, let $H_S(z) = \frac{1}{n}\sum_{x\in S} h(z;x)$ and $H_\cD(z) = \ex{x\sim\cD}{h(z;x)}$. Then for any $\Delta$-UAS algorithm, $\cA$, one has
$\ex{S,A}{H_\cD(\cA(S))-H_S(\cA(S))} \leq 2\Delta \lip$.
\end{lemma}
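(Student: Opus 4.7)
The plan is to reduce the statement to the classical stability-implies-generalization bound of Bousquet and Elisseeff, applied to the auxiliary loss $h$. The crucial feature of $h$ is that, unlike the strong-gap function $\sspgapfunc$, it is written as a pointwise expectation $H_\cD(z) = \ex{x\sim\cD}{h(z;x)}$, so it fits the standard template. Moreover, the anchor $[w^*,\theta^*]$ is data-independent (it depends only on $\cD$), which is exactly what is needed for the symmetry argument underlying the stability bound; if the anchor were a function of $S$, the argument would fail.

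First I would verify that $h(\cdot;x)$ is Lipschitz. Writing $z_i = [w_i,\theta_i]$ and splitting,
\begin{align*}
|h(z_1;x)-h(z_2;x)| &\le |f(w_1,\theta^*;x)-f(w_2,\theta^*;x)| + |f(w^*,\theta_1;x)-f(w^*,\theta_2;x)| \\
&\le L\wnorm{w_1-w_2}+L\tnorm{\theta_1-\theta_2},
\end{align*}
and then bounding the sum by $\sqrt{2}L\,\normns{z_1-z_2}$ via Cauchy--Schwarz under the composite norm. This gives a Lipschitz constant of at most $2L$ for $h$, which is what the lemma will consume.

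Next I would run the standard symmetrization/replacement trick. Introduce a ghost sample $S'=(x_1',\dots,x_n')\sim\cD^n$ and let $S^{(i)}$ denote $S$ with its $i$-th entry swapped for $x_i'$. By a renaming of variables,
\begin{align*}
\ex{S,\cA}{H_\cD(\cA(S))-H_S(\cA(S))} = \frac{1}{n}\sum_{i=1}^n \ex{S,x_i',\cA}{h(\cA(S^{(i)});x_i)-h(\cA(S);x_i)}.
\end{align*}
Applying the Lipschitz bound on $h$ pointwise in $x_i$, followed by the $\Delta$-UAS assumption on the adjacent pair $(S,S^{(i)})$, each summand is at most $2L\Delta$, and averaging over $i$ preserves the bound. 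This yields the claim.

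The only subtlety worth flagging is the role of $[w^*,\theta^*]$ being data-independent: this is what guarantees that $h(\cdot;x)$ is a fixed function of $z$ and $x$, so that the swap identity above is valid. There is no serious technical obstacle beyond bookkeeping with the composite norm on $\cW\times\Theta$.
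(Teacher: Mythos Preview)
Your proposal is correct and follows essentially the same route as the paper: first establish that $h(\cdot;x)$ is $O(L)$-Lipschitz (the paper obtains $2L$ directly, you obtain $\sqrt{2}L$ via Cauchy--Schwarz and then relax), then run the standard replace-one-sample symmetrization and apply the $\Delta$-UAS bound. You also correctly highlight, as the paper does, that the argument hinges on $[w^*,\theta^*]$ being data-independent so that $h$ is a fixed function of $(z,x)$.
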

\begin{proof}
This result follows simply from two facts. First, because $f$ is an $\lip$-Lipschitz function for any $x\in\cX$, we can show that $h$ is $\lip$-Lipschitz. For any $[w,\theta],[w',\theta']\in\cZ$ observe,
\begin{align*}
    h([w,\theta]) - h([w',\theta']) &= [f(w,\theta^*;x)-f(w^*,\theta;x)] - [f(w',\theta^*;x)-f(w^*,\theta';x)] \\
    &= f(w,\theta^*;x) - f(w',\theta^*;x) + f(w^*,\theta';x) - f(w^*,\theta;x) \\
    &\leq 2\lip\|[w,\theta]-[w',\theta']\|.
\end{align*}

The rest of the proof essentially follows from the stability implies generalization proof of \cite{bousquet2002stability} since $H_\cD$ and $H_S$ have a statistical form w.r.t. $h$. In more detail, for any $i\in[n]$ denote $S^{(i)}$ as the dataset which replaces the $i$'th datapoint of $S$, $x_i$, with a fresh sample from $\cD$, $x'$. We have the following: 
\begin{align*}
    \ex{S,\cA}{H_{\cD}(\cA(S)) - H_S(\cA(S))} 
    &= \ex{S,\cA}{\ex{x}{h(\cA(S);x)} - \frac{1}{n}\sum_{x\in S}[h(\cA(S);x)]} \\
    &= \ex{S,x'\sim\cD^{n+1},i\sim\mathsf{Unif}([n])}{h(\cA(S^{(i)});x_i) - h(\cA(S);x_i)} \\
    &= \ex{}{h(\cA(S^{(i)};x_i) - h(\cA(S);x_i)} \\
    &\leq \ex{}{\lip\|\cA(S^{(i)}) - \cA(S)\|} \leq 2\lip\Delta.
\end{align*}
The last step follows from the previously established Lipschitzness property of $h$.
\end{proof}

\subsection{Convergence of Recursive Regularization for SSPs} \label{app:recursive-regularization-proof}
We will first prove the following more general version statement of Theorem \ref{thm:RR-SSP-relacc}. 
\begin{theorem}\label{thm:generalized-rr-convergence}
Let $\lambda \geq \frac{48\lip\kappa^{3/2}}{\rad\sqrt{n'}}$
and $\weakalg$ be such that for all $t\in [T]$ it holds that \ifarxiv\else\linebreak\fi $\ex{}{\norm{\bar{z}_t - z^*_{S,t}}^2} \leq \frac{\rad^2}{12\cdot2^{2t}\kappa^{3/2}}$.
Then Recursive Regularization satisfies
\begin{align*}
    \sspgap(\cR_{\text{SSP}}) = O\Big(\log(n)\rad^2\lambda\Big).
\end{align*} 
\end{theorem}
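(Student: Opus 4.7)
My strategy is to decompose the SP-gap of $F_\cD$ at $\bar z_T=[\bar w_T,\bar\theta_T]$ into the SP-gap of the final regularized population objective $F^{(T)}_\cD$ plus the accumulated regularization. Setting $R_w(w):=\sum_{s=0}^{T-1}2^{s+1}\lambda\|w-\bar w_s\|_w^2$ and $R_\theta(\theta)$ analogously, for every $(w,\theta)\in\cW\times\Theta$ we have
\begin{align*}
F_\cD(w,\bar\theta_T)-F_\cD(\bar w_T,\theta) = [F^{(T)}_\cD(w,\bar\theta_T)-F^{(T)}_\cD(\bar w_T,\theta)] + [R_w(\bar w_T)-R_w(w)] + [R_\theta(\bar\theta_T)-R_\theta(\theta)].
\end{align*}
Since $R_w,R_\theta\geq 0$, taking the maximum over $(w,\theta)$ bounds the SP-gap of $F_\cD$ at $\bar z_T$ by the SP-gap of $F^{(T)}_\cD$ at $\bar z_T$ plus $R_w(\bar w_T)+R_\theta(\bar\theta_T)$. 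I will bound both in expectation.

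For the first piece, $f^{(T)}$ is at most $5\lip$-Lipschitz on $\cW\times\Theta$ (this is precisely why the assumption on $\weakalg$ in Theorem~\ref{thm:RR-SSP-relacc} is stated for $5\lip$-Lipschitz losses), so by Fact~\ref{fact:ssp-gap-lipschitz} applied to $f^{(T)}$ the SP-gap of $F^{(T)}_\cD$ at $\bar z_T$ is at most $5\sqrt{2}\lip\cdot\|\bar z_T-z^*_{\cD,T}\|$. I would bound $\ex{}{\|\bar z_T-z^*_{\cD,T}\|^2}$ by $2\ex{}{\|\bar z_T-z^*_{S,T}\|^2}+2\ex{}{\|z^*_{S,T}-z^*_{\cD,T}\|^2}$, where the first term is controlled by the hypothesis. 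For the second, because $F^{(T)}_\cD$ is $\mu_T$-SC/SC with $\mu_T=\Theta(2^T\lambda/\kappa)$, strong convexity yields $\relacc^{(T)}_\cD(z^*_{S,T})\geq \Omega(\mu_T)\cdot\|z^*_{S,T}-z^*_{\cD,T}\|^2$. Lemma~\ref{lem:sc-ssp-stability} shows that the empirical-saddle map is $O(\lip/(\mu_T n'))$-UAS (conditionally on $\bar z_0,\dots,\bar z_{T-1}$), and applying Lemma~\ref{lem:ra-stab-gen} to $h^{(T)}(z;x)=f^{(T)}(w,\theta^*_{\cD,T};x)-f^{(T)}(w^*_{\cD,T},\theta;x)$ then gives $\ex{}{\relacc^{(T)}_\cD(z^*_{S,T})-\erelacc^{(T)}(z^*_{S,T})}=O(\lip^2/(\mu_T n'))$. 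Finally $\erelacc^{(T)}(z^*_{S,T})\leq 0$ by the empirical saddle-point inequality, so plugging everything in and using $\lambda\geq 48\lip\kappa^{3/2}/(\rad\sqrt{n'})$ (which is exactly calibrated for this step) gives $\ex{}{\|\bar z_T-z^*_{\cD,T}\|^2}=O(\rad^2/(4^T\kappa))$. Combined with $2^T=\lip/(\rad\lambda)$, this piece is $O(\rad^2\lambda)$.

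For the regularization piece, the same stability--generalization argument applied at every round $t$ yields $\ex{}{\|\bar z_t-z^*_{\cD,t}\|^2}=O(\rad^2/(4^t\kappa))$. Moreover, the regularizer added at round $t$ is centered at $\bar z_{t-1}$ and contributes a constant fraction of $\mu_t$; a short optimality-condition argument then shows $\ex{}{\|z^*_{\cD,t}-\bar z_{t-1}\|^2}=O(\rad^2/(4^t\kappa))$, whence $\ex{}{\|\bar z_t-\bar z_{t-1}\|^2}=O(\rad^2/(4^t\kappa))$. Cauchy--Schwarz and telescoping across rounds give $\ex{}{\|\bar z_T-\bar z_s\|^2}\leq T\sum_{t=s+1}^T\ex{}{\|\bar z_t-\bar z_{t-1}\|^2}=O(T\rad^2/4^s)$, and summing
\begin{align*}
\ex{}{R_w(\bar w_T)+R_\theta(\bar\theta_T)}=\sum_{s=0}^{T-1}2^{s+1}\lambda\,\ex{}{\|\bar z_T-\bar z_s\|^2}=O(T\rad^2\lambda)=O(\log(n)\,\rad^2\lambda),
\end{align*}
which dominates the first piece and matches the claim.

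The main obstacle is managing the recursive random dependence: $f^{(t)}$, and therefore $z^*_{\cD,t}$ and $z^*_{S,t}$, depend on the previous iterates $\bar z_0,\dots,\bar z_{t-1}$, so Lemmas~\ref{lem:sc-ssp-stability} and~\ref{lem:ra-stab-gen} must be applied conditionally. This is valid because both hold uniformly in the loss; crucially, the generalization functional $\relacc^{(t)}_\cD$ from Eqn.~\eqref{eq:ssp-generalization-funcs} is designed with the data-independent (conditionally) population saddle $z^*_{\cD,t}$ inside it, so Lemma~\ref{lem:ra-stab-gen} applies cleanly. A second delicate step is the inter-round distance estimate: showing that the newly added regularization pulls the new population saddle toward $\bar z_{t-1}$ requires an optimality-condition argument together with the geometric doubling of the regularization weights.
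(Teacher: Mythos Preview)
Your decomposition step contains a sign error that breaks the argument. Writing $F^{(T)}_\cD(w,\theta)=F_\cD(w,\theta)+R_w(w)-R_\theta(\theta)$, the quantity that actually enters the strong gap is
\[
F_\cD(\bar w_T,\theta)-F_\cD(w,\bar\theta_T)
=\big[F^{(T)}_\cD(\bar w_T,\theta)-F^{(T)}_\cD(w,\bar\theta_T)\big]
+\big[R_w(w)-R_w(\bar w_T)\big]+\big[R_\theta(\theta)-R_\theta(\bar\theta_T)\big].
\]
(Your displayed identity has the two sides of the gap swapped.) After dropping the nonpositive terms $-R_w(\bar w_T)$ and $-R_\theta(\bar\theta_T)$ you are left with $R_w(w)+R_\theta(\theta)$ for the \emph{adversarial} $(w,\theta)$, not with $R_w(\bar w_T)+R_\theta(\bar\theta_T)$. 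Since $\max_w R_w(w)=\Theta\big(\sum_s 2^{s+1}\lambda\rad^2\big)=\Theta(2^T\lambda\rad^2)=\Theta(\lip\rad)$, this term cannot be made $O(\log(n)\rad^2\lambda)$. Equivalently, nonnegativity of the regularizers only yields $\sspgapfunc^{(T)}(\bar z_T)\leq \sspgapfunc(\bar z_T)+R_w(\bar w_T)+R_\theta(\bar\theta_T)$, which is the wrong direction for your purpose.

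The paper avoids this by first passing from $\bar z_T$ to the population saddle $z_T^*$ of $F_\cD^{(T)}$ via Lipschitzness of $\sspgapfunc$ (as in your ``first piece''), and then bounding $\sspgapfunc(z_T^*)$ by \emph{linearization} rather than function-value decomposition: convexity--concavity gives $\sspgapfunc(z_T^*)\leq \ip{G_\cD(z_T^*)}{z_T^*-[w',\theta']}$, one writes $G_\cD=G_\cD^{(T)}-\sum_t 2^{t+1}\lambda\nabla(\|\cdot-\bar z_t\|^2)$, the $G_\cD^{(T)}$ piece is $\leq 0$ by first-order optimality of $z_T^*$, and the remainder is at most $2\rad\sum_t 2^{t+1}\lambda\|z_T^*-\bar z_t\|$ via H\"older and Fact~\ref{fact:grad-dual-relation}. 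Only then do the per-round distance bounds (which you sketch correctly via stability and Lemma~\ref{lem:ra-stab-gen}) provide the geometric decay that sums to $O(T\rad^2\lambda)$. The essential point is that linearization replaces the uncontrollable value $R_w(w)$ at an adversarial $w$ by the dual norm of $\nabla R$ at $z_T^*$, and the latter \emph{is} small because $z_T^*$ is close to the regularization centers $\bar z_t$.
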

To prove this result, it will be helpful to first show several intermediate results. 
We start by defining several useful quantities.
Define $\bc{\cF_t}_{t=0}^T$ as the filtration where $\cF_t$ is the sigma algebra induced by all randomness up to $\bar{z}_t$.
For notational convenience we denote $f^{(0)}(w,\theta;x) = f(w,\theta;x)$. Then for every $t\in\bc{0,1,...,T}$ we define
\begin{itemize}
    \item $z^*_t=[w^*_t,\theta^*_t]:$ saddle point of $F_\cD^{(t)}(w,\theta) := \ex{x\sim\cD}{f^{(t)}(w,\theta;x)}$;
    \item $z^*_{S,t}=[w^{*}_{S,t},\theta^*_{S,t}]:$ saddle point of $F_S^{(t)}(w,\theta):=\frac{1}{n'}\sum_{x\in S_t}f^{(t)}(w,\theta;x)$;
    \item $\relacc^{(t)}_\cD([w,\theta]) := F_\cD^{(t)}(w,\theta^*_t) - F_\cD^{(t)}(w^*_t, \theta):$ 
    the relative accuracy function w.r.t. the population loss and population saddle point; and, 
    \item $\erelacc^{(t)}([w,\theta]):=F_S^{(t)}(w,\theta^*_t) - F_S^{(t)}(w^*_t, \theta):$ the relative accuracy function w.r.t. the empirical loss and population saddle point
\end{itemize}

We recall the generalization properties of $H_{S}^{(t)}$ and $H_{\cD}^{(t)}$ shown in Lemma \ref{lem:ra-stab-gen-2}.
Crucially, the power of $H_{\cD}^{(t)}$ is that it bounds the distance of a point to $z^*_t$.
\begin{fact}[\cite{zhang_generalization_saddle}, Theorem 1] \label{fact:rel-acc-bounds-distance}
Let $F:\cZ\mapsto\re^d$ be a $\gamma$-SC/SC function and let $[w^*,\theta^*]$ be the saddle point. Then 
$\|[w,\theta]-[w^*,\theta^*]\|^2 \leq \frac{2\br{F(w,\theta^*)-F(\theta^*,w)}}{\lambda}$.
\end{fact}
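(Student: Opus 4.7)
The plan is to derive the bound directly from two applications of strong convexity/concavity combined with first-order optimality at the saddle point. Write $z = [w,\theta]$ and $z^* = [w^*, \theta^*]$, and recall that $\|z - z^*\|^2 = \|w-w^*\|_w^2 + \|\theta-\theta^*\|_\theta^2$ under the convention adopted in the preliminaries.

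First, I would apply $\gamma$-strong convexity of the map $u \mapsto F(u, \theta^*)$ at $w$ around $w^*$, which yields
\[
F(w, \theta^*) \ \geq\ F(w^*, \theta^*) + \langle \nabla_w F(w^*, \theta^*),\, w - w^*\rangle + \tfrac{\gamma}{2}\|w - w^*\|_w^2.
\]
Symmetrically, applying $\gamma$-strong concavity of $\vartheta \mapsto F(w^*, \vartheta)$ at $\theta$ around $\theta^*$ gives
\[
F(w^*, \theta) \ \leq\ F(w^*, \theta^*) + \langle \nabla_\theta F(w^*, \theta^*),\, \theta - \theta^*\rangle - \tfrac{\gamma}{2}\|\theta - \theta^*\|_\theta^2.
\]
Subtracting the second from the first cancels the common term $F(w^*, \theta^*)$ and aggregates the quadratic lower bounds into $\tfrac{\gamma}{2}\|z - z^*\|^2$, at the cost of carrying along two linear terms involving the partial gradients at $z^*$.

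Next I would eliminate those linear terms using first-order optimality at the saddle point. Because $z^* \in \cW \times \Theta$ is the saddle point of $F$, the variational inequalities $\langle \nabla_w F(w^*,\theta^*), w - w^*\rangle \ge 0$ and $\langle \nabla_\theta F(w^*,\theta^*), \theta - \theta^*\rangle \le 0$ hold for every feasible $w, \theta$. Plugging these into the subtracted inequality, the linear contributions are non-negative, so dropping them and rearranging yields
\[
F(w, \theta^*) - F(w^*, \theta) \ \geq\ \tfrac{\gamma}{2}\bigl(\|w - w^*\|_w^2 + \|\theta - \theta^*\|_\theta^2\bigr) = \tfrac{\gamma}{2}\|z - z^*\|^2,
\]
which is the claimed bound after dividing by $\gamma/2$.

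There is no real obstacle here; the argument is a three-line calculation. The only mild subtleties are (i) being careful that the strong convexity/concavity is defined with respect to the same product norm used on $\cZ$, so that the two squared norms combine correctly, and (ii) if $F$ is merely subdifferentiable, one should invoke the subgradient form of the variational inequality at the saddle point (existence of a subgradient selection satisfying the saddle-point inclusion) rather than a classical gradient; this is standard and does not change the calculation. For completeness, I would also remark that the displayed statement contains a minor typo ($\lambda$ in place of the strong convexity parameter $\gamma$, and $F(\theta^*,w)$ in place of $F(w^*,\theta)$), and the proof above establishes the intended inequality.
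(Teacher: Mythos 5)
Your argument is correct: the two one-sided strong convexity/concavity inequalities at $[w^*,\theta^*]$, combined with the first-order (sub)optimality conditions $\ip{\nabla_w F(w^*,\theta^*)}{w-w^*}\ge 0$ and $\ip{\nabla_\theta F(w^*,\theta^*)}{\theta-\theta^*}\le 0$, give exactly $F(w,\theta^*)-F(w^*,\theta)\ge \tfrac{\gamma}{2}\|[w,\theta]-[w^*,\theta^*]\|^2$, and your reading of the statement's typos ($\lambda$ should be the SC/SC parameter $\gamma$, and $F(\theta^*,w)$ should be $F(w^*,\theta)$) matches how the fact is actually used in the paper, e.g.\ in bounding $\|z^*_{S,t}-z^*_t\|^2$ by $H^{(t)}_\cD(z^*_{S,t})$. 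The paper itself does not prove this fact; it imports it from the cited reference, so there is no in-paper proof to compare against line by line. The closest in-paper analogue is Fact \ref{fact:rel-acc-bounds-distance-vi}, proved at the operator level: strong monotonicity plus the equilibrium condition yield $\|z-z^*\|^2\le \tfrac{2}{\mu}\ip{G(z)}{z-z^*}$. Your function-level argument is the right one here and is not a mere restatement of that operator bound: since convexity-concavity gives $\ip{G(z)}{z-z^*}\ge F(w,\theta^*)-F(w^*,\theta)$ for the saddle operator, the SSP fact (distance controlled by the restricted gap) is strictly stronger than what the operator-level inequality would deliver, which is precisely why the paper needs this version for its relative-accuracy analysis. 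Your remark about using subgradients when $F$ is nonsmooth is the right caveat and is consistent with the paper's convention that $\nabla$ denotes an arbitrary subgradient selection.
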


We now establish two distance inequalities which will be used when analyzing the final gap bound in Theorem \ref{thm:generalized-rr-convergence}. 
The first inequality below bounds the distance of the output of the $t$-th round 
to the equilibrium of $G_{\cD}^{(t)}$. The second inequality bounds how far the population equilibrium moves after another regularization term is added.
\begin{lemma} \label{lem:phase-distance-bound}
Assume the conditions of Theorem \ref{thm:generalized-rr-convergence} hold. Then for every $t\in[T]$, the following holds
\begin{enumerate}[label=\textbf{P.\arabic*}]
    \item $\ex{}{\norm{\bar{z}_t - z_{t}^*}}^2 \leq \ex{}{\norm{\bar{z}_t - z_{t}^*}^2} \leq \frac{\rad^2}{2^{2t}\kappa}$; and, \label{prop:p1}
    \item $\rad_t^2 := \ex{}{\norm{z_{t}^* - z^*_{t-1}}}^2 \leq \ex{}{\norm{z_{t}^* - z^*_{t-1}]}^2} \leq \frac{\rad^2}{2^{2(t-1)}}$. \label{prop:p2}
\end{enumerate}
\end{lemma}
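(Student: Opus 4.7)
The plan is to prove P.1 and P.2 together by induction on $t$; Jensen's inequality gives the first (outer) inequality in each bound for free, so I focus on the second-moment bounds. Throughout, let $\mu_t$ denote the strong-convexity/strong-concavity parameter of $F^{(t)}_\cD$. Since round $s$ adds a regularizer equal to $2^s\lambda$ times a $\frac{1}{\kappa}$-strongly convex squared norm, we have $\mu_t = \frac{\lambda}{\kappa}\sum_{s=1}^t 2^s \geq \frac{2^t\lambda}{\kappa}$ for $t\geq 1$. A similar accounting shows that $f^{(t)}$ is $O(\lip)$-Lipschitz on $\cZ$, because the total added regularizer has Lipschitz constant at most $4\lambda\rad\cdot 2^t \leq 4\lip$ by the choice $T=\log_2(\lip/(\rad\lambda))$.

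For \textbf{P.1}, I will use the inequality $\|\bar{z}_t - z^*_t\|^2 \leq 2\|\bar{z}_t - z^*_{S,t}\|^2 + 2\|z^*_{S,t} - z^*_t\|^2$. The first term is bounded directly by the $\weakalg$ hypothesis in Theorem \ref{thm:generalized-rr-convergence}, so the work is to bound the empirical-to-population distance for the saddle point. By Lemma \ref{lem:sc-ssp-stability}, the map $S\mapsto z^*_{S,t}$ is $\Delta$-UAS with $\Delta = O(\lip/(\mu_t n'))$. Applying Lemma \ref{lem:ra-stab-gen} to $h^{(t)}([w,\theta];x) = f^{(t)}(w,\theta^*_t;x) - f^{(t)}(w^*_t,\theta;x)$---whose data-independent regularizer pieces cancel inside $H^{(t)}_\cD - H^{(t)}_S$, so only $f$'s Lipschitz constant $\lip$ enters---yields $\ex{}{H^{(t)}_\cD(z^*_{S,t}) - H^{(t)}_S(z^*_{S,t})} = O(\lip^2/(\mu_t n'))$. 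Because $z^*_{S,t}$ is the empirical saddle of $F^{(t)}_S$, the saddle-point inequality forces $H^{(t)}_S(z^*_{S,t}) \leq 0$, so $\ex{}{H^{(t)}_\cD(z^*_{S,t})} = O(\lip^2/(\mu_t n'))$. Fact \ref{fact:rel-acc-bounds-distance}, applied to the $\mu_t$-SC/SC function $F^{(t)}_\cD$, converts this into $\ex{}{\|z^*_{S,t} - z^*_t\|^2} = O(\lip^2/(\mu_t^2 n'))$, which on substituting $\mu_t\geq 2^t\lambda/\kappa$ and $\lambda\geq 48\lip\kappa^{3/2}/(\rad\sqrt{n'})$ simplifies to $O(\rad^2/(2^{2t}\kappa))$. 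Combining with the $\weakalg$ bound gives P.1.

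For \textbf{P.2}, the SC/SC of $F^{(t)}_\cD$ together with $z^*_t$ being its saddle yields $F^{(t)}_\cD(w^*_{t-1},\theta^*_t) - F^{(t)}_\cD(w^*_t,\theta^*_{t-1}) \geq \frac{\mu_t}{2}\|z^*_{t-1} - z^*_t\|^2$. Writing $F^{(t)}_\cD = F^{(t-1)}_\cD + 2^t\lambda(\|w-\bar{w}_{t-1}\|^2_w - \|\theta-\bar{\theta}_{t-1}\|^2_\theta)$ and using the saddle-point property of $z^*_{t-1}$ for $F^{(t-1)}_\cD$, the $F^{(t-1)}_\cD$ piece contributes non-positively, and the remaining regularizer piece is at most $2^t\lambda\|z^*_{t-1} - \bar{z}_{t-1}\|^2$ after dropping two non-positive norm-squared terms. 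Rearranging and plugging in $\mu_t\geq 2^t\lambda/\kappa$ yields $\|z^*_{t-1} - z^*_t\|^2 \leq O(\kappa)\|z^*_{t-1} - \bar{z}_{t-1}\|^2$. For the base case $t=1$ the right-hand side is at most $\rad^2$ trivially; for $t\geq 2$ the inductive hypothesis P.1 at round $t-1$ gives $\|\bar{z}_{t-1} - z^*_{t-1}\|^2 \leq \rad^2/(2^{2(t-1)}\kappa)$, and substitution finishes the induction.

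I expect the main obstacle to be careful bookkeeping of constants: the $\kappa^{3/2}$ factor in the lower bound on $\lambda$ is precisely what reduces the generalization error $\lip^2/(\mu_t^2 n')$ to $O(\rad^2/(2^{2t}\kappa))$, closing the gap between the $\kappa^{3/2}$ in the $\weakalg$ assumption and the $\kappa$ in the P.1 target. A secondary subtlety is that $f^{(t)}$ itself is not $\lip$-Lipschitz, but since the data-independent regularizer cancels inside the generalization gap the relevant Lipschitz constant in Lemma \ref{lem:ra-stab-gen} is only that of $f$, which is $\lip$.
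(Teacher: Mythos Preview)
Your proposal is correct and follows essentially the same route as the paper: decompose $\|\bar z_t-z^*_t\|^2$ through the empirical saddle $z^*_{S,t}$, control $\|z^*_{S,t}-z^*_t\|^2$ by combining the UAS of $z^*_{S,t}$ (Lemma~\ref{lem:sc-ssp-stability}) with Lemma~\ref{lem:ra-stab-gen} and Fact~\ref{fact:rel-acc-bounds-distance}, and bound the drift $\|z^*_t-z^*_{t-1}\|^2$ by expanding $F^{(t)}_\cD=F^{(t-1)}_\cD+2^t\lambda(\|\cdot\|_w^2-\|\cdot\|_\theta^2)$ and invoking the saddle property of $z^*_{t-1}$. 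One point the paper makes explicit that you should also flag: the stability/generalization step must be applied \emph{conditionally on $\cF_{t-1}$}, since both the regularizer in $f^{(t)}$ and the population saddle $z^*_t$ depend on $\bar z_0,\ldots,\bar z_{t-1}$; the disjointness of the batches $S_1,\ldots,S_T$ is exactly what makes this conditioning legitimate.
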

We note that in contrast to \cite{BGM23} and other analyses of recursive regularization, we define Property \ref{prop:p2} to bound $\ex{}{\norm{z_{t}^* - z^*_{t-1}}}$ instead of $\ex{}{\norm{z_{t}^* - \bar{z}_{t-1}}}$. In \cite{BGM23}, the latter choice led to a need to bound $\ex{}{\sspgapfunc(\bar{z}_{t-1})}$ for all $t\in[T]$, which our analysis avoids. In particular, one can observe how the derivation of Eqn. \eqref{eq:p2-derivation-ssp} in the proof changes when $z^*_{t-1}$ is replaced with $\bar{z}_{t-1}$. 
\begin{proof}[Proof of Lemma \ref{lem:phase-distance-bound}]
We will prove both properties via induction on $\rad_1,...,\rad_T$. Specifically, for each $t\in[T]$ we will introduce two terms $E_t$ and $F_t,$, and show that 
these terms are bounded 
if the bound on $\rad_{t}$ holds and that $\rad_t$ holds if
$E_{t-1}$ and $F_{t-1}$ are bounded. 
Property \ref{prop:p1} is then established as a result of the fact that $\ex{}{\norm{\bar{z}_t - z_{t}^*}^2} \leq 2(E_t+F_t)$.
Note that $\rad_1$ holds as the base case because 
$\ex{}{\norm{z_{1}^* - z_0^*}^2} \leq \rad^2$.

\paragraph{Property \ref{prop:p1}:} 
We here prove that if $\rad_t$ is sufficiently bounded, then $E_t$ and $F_t$ are bounded where 
for $t\in [T]$ we define
\begin{align} \label{eq:EF}
    E_t = \ex{}{\norm{\bar{z}_t - z^*_{S,t}}^2},
    && F_t = \frac{\kappa}{2^t\lambda}\ex{}{\relacc_\cD^{(t)}\br{z^*_{S,t}}}.
\end{align}
Additionally, this will establish property \ref{prop:p1} because for any $t\in[T]$ it holds that, 
\begin{align}
    \ex{}{\norm{\bar{z}_t - z_{t}^*}^2} \nonumber 
    &\leq  2\Bigg(\ex{}{\norm{\bar{z}_t - z^*_{S,t}}^2} + \ex{}{\norm{z^*_{S,t} - z^*_{t}}^2}\Bigg) \nonumber \\
    &\leq  2\Bigg(\underbrace{\ex{}{\norm{\bar{z}_t - z^*_{S,t}}^2}}_{E_t} + \underbrace{\frac{\kappa}{2^t\lambda}\ex{}{\relacc_\cD^{(t)}\br{z^*_{S,t}}}}_{F_t}\Bigg). \label{eq:E-F-G}
\end{align}
The second inequality comes from the strong monotonicity of the operator (see Fact \ref{fact:rel-acc-bounds-distance}). 

Note that $E_t$ is bounded  by the assumption made in the statement of the theorem statement. We thus turn our attention towards bounding $F_t$. 
We have 
\begin{align*}
    \frac{\kappa}{2^t\lambda}\ex{}{\relacc_\cD^{(t)}\br{z^*_{S,t}}}
    &= \frac{\kappa}{2^t\lambda}\,\,\ex{}{\ex{}{\relacc_\cD^{(t)}\br{z^*_{S,t}} \Big| \cF_{t-1}}} \\
    &\overset{(i)}{\leq}  \frac{\kappa}{2^{t}\lambda}\Big(\ex{}{\ex{}{\erelacc^{(t)}\br{z^*_{S,t}}\Big| \cF_{t-1}}}+\frac{\kappa\lip^2}{2^t\lambda n'}\Big) \\
    &= \frac{\kappa}{2^{t}\lambda}\Big(\ex{}{\ex{}{F_S^{(t)}(w_{S,t}^*,\theta_t^*)-F_S^{(t)}(w_t^*,\theta_{S,t}^*)\Big| \cF_{t-1}}}+\frac{2\kappa\lip^2}{2^t\lambda n'}\Big) \\
    &\overset{(ii)}{=} \frac{2\kappa^2\lip^2}{2^{2t}\lambda^2 n'} 
    \leq \frac{\rad^2}{1152 \cdot 2^{2t}\kappa}. 
\end{align*}
Inqeuality $(i)$ comes from the fact that stability implies generalization for $H^{(t)}$, Lemma \ref{lem:ra-stab-gen}. Note the algorithm which outputs this exact equilibrium point is $\frac{\lip^2}{2^t\lambda n'}$ stable (see Lemma \ref{lem:sc-op-stability}/Assumption \ref{assm:operator}).
Step $(ii)$ uses the fact that $z^*_{S,t}$ is the exact saddle point of the regularized objective, and so for any $[w,\theta]\in\cZ$, $F_S^{(t)}(w_{S,t}^*,\theta)-F_S^{(t)}(w,\theta_{S,t}^*) \leq 0$. The final inequality uses the setting of $\lambda$.

We thus have a final bound $2(E_t + F_t) \leq \frac{\rad^2}{2^{2t}}$.

\paragraph{Property \ref{prop:p2}:}
Now assume $B_{t-1}$ holds. 
We have 

\begin{align}
    \ex{}{\norm{z^*_{t} - z^*_{t-1}}^2} 
    &\leq \ex{}{\frac{\kappa}{2^t \lambda}H_{\cD}^{(t)}(z^*_{t-1})} \nonumber \\
    &= \ex{}{\frac{\kappa}{2^t \lambda}\br{F_{\cD}^{(t)}(w_{t-1}^*,\theta_t^*) - F_{\cD}^{(t-1)}(w_{t}^*,\theta_{t-1}^*)}} \nonumber \\
    &= \E\Big[\frac{\kappa}{2^t \lambda}\br{F_{\cD}^{(t-1)}(w_{t-1}^*,\theta_t^*) - F_{\cD}^{(t-1)}(w_{t}^*,\theta_{t-1}^*)} \nonumber \\
    &\quad + \kappa\br{\|w^*_{t-1}-\bar{w}_{t-1}\|_w^2 - \|\theta_t^* - \bar{\theta}_{t-1}\|_\theta^2 - \|w^*_{t}-\bar{w}_{t-1}\|_w^2 + \|\theta_t^* - \bar{\theta}_{t-1}\|_\theta^2}\Big] \nonumber \\
    &\overset{(i)}{\leq} \ex{}{\kappa\|z^*_{t-1}-\bar{z}_{t-1}\|^2 } \label{eq:p2-derivation-ssp}
\end{align} 
Inequality $(i)$ above comes from removing negative terms and the fact that $z^*_{t-1}$ is the saddle point w.r.t. $F_{\cD}^{(t-1)}$. 
Using the induction argument we then complete the bound with the following:
\begin{align*}
 \ex{}{\norm{z^*_{t} - z^*_{t-1}}^2} \leq \ex{}{\kappa\|z^*_{t-1}-\bar{z}_{t-1}\|^2}  
 \leq \kappa(E_{t-1} + F_{t-1}) \leq \frac{\rad^2}{2^{2t}}
\end{align*}
\end{proof}

We now turn to analyzing the utility of the algorithm to complete the proof. 
\begin{proof}[proof of Theorem \ref{thm:generalized-rr-convergence}]
Using the fact that $\sspgapfunc$ is $\sqrt{2}\lip$-Lipschitz and property \ref{prop:p1}, we have 
\begin{align}
    \ex{}{\sspgapfunc(\bar{z}_T) - \sspgapfunc(z_{T}^*)} 
    &\leq  \sqrt{2}\lip\ex{}{\norm{\bar{z}_T - z_{T}^*}} \nonumber \\
    &\leq \frac{\sqrt{2}\rad\lip}{2^T}
    \leq \sqrt{2}\rad^2\lambda. \label{eq:error-to-final-min}
\end{align}

What remains is showing $\ex{}{\sspgapfunc(w_{T}^*,\theta_{T}^*)}$ is $\tilde{O}(\rad\alphad + \frac{\rad\lip}{\sqrt{n'}})$. Let $w' = \argmin\limits_{\theta\in\Theta}{F_{\cD}(w,\theta_T^*)}$ and $\theta' = \argmax\limits_{w\in\cW}{F_{\cD}(w_T^*,\theta})$. 
Using the fact that $F_{\cD}$ is convex-concave we have 
\begin{align}
    \sspgapfunc(w^*_T,\theta^*_T) = F_{\cD}(w_T^*,\theta') - F_{\cD}(w',\theta_T^*)
    &\leq \ip{G_{\cD}(w_T^*,\theta_T^*)}{[w^*_T,\theta^*_T] - [w',\theta']} \label{eq:gap-grad-bound} 
\end{align}
\ifarxiv \vfill \noindent \fi where $G_{\cD}$ is the population loss saddle operator.
Further by the definition of $F^{(T)}$ and denoting $G_{\cD}^{(T)}$ as the saddle operator for $F_{\cD}^{(T)}$ we have
\begin{align*}
    G_{\cD}(w_{T}^*,\theta_{T}^*)
    &= G_{\cD}^{(T)}(w_{T}^*,\theta_{T}^*) - 2\lambda \sum\limits_{t=0}^{T-1} 2^{t+1}\nabla(\|[w_{T}^*,\theta_{T}^*] - [\bar{w}_t,\bar{\theta}_t]\|^2) 
\end{align*}
\ifarxiv \vfill \noindent \fi
Thus plugging the above into Eqn. \eqref{eq:gap-grad-bound} we have
\ifarxiv \vfill \fi
\begin{align*}
   \sspgapfunc(w^*_T,\theta^*_T) &\leq  
   \ip{G_{\cD}^{(T)}(w_{T}^*,\theta_{T}^*)}{[w^*_T,\theta^*_T] - [w',\theta']} \\
   &\textstyle\quad- \ip{2\lambda \sum\limits_{t=0}^{T-1} 2^{t+1}\nabla(\|w_{T}^*,\theta_{T}^*] - [\bar{w}_t,\bar{\theta}_t]\|^2)}{[w^*_T,\theta^*_T] - [w',\theta']} \\
   &\textstyle\leq - \ip{2\lambda \sum\limits_{t=0}^{T-1} 2^{t+1}\nabla(\|w_{T}^*,\theta_{T}^*] - [\bar{w}_t,\bar{\theta}_t]\|^2)}{[w^*_T,\theta^*_T] - [w',\theta']} \\
   &\textstyle\leq 2\rad \lambda \sum\limits_{t=0}^{T-1} 2^{t+1}\norm{\nabla(\|w_{T}^*,\theta_{T}^*] - [\bar{w}_t,\bar{\theta}_t]\|^2)}_* \\
   &\textstyle\leq 2\rad \lambda \sum\limits_{t=0}^{T-1} 2^{t+1}\norm{[w_{T}^*,\theta_{T}^*] - [\bar{w}_t,\bar{\theta}_t]}.
\end{align*}
\ifarxiv \vfill \noindent \fi
Above, the second inequality comes from the first order optimally conditions for $[w_T^*,\theta_T^*]$, the third from Cauchy Schwartz and a triangle inequality. The last inequality comes from the relationship between a norm and its dual, see Fact \ref{fact:grad-dual-relation}.

Taking the expectation on both sides of the above we have the following derivation,
{\small
\begin{align}
    &\ex{}{\sspgapfunc(w_{T}^*,\theta_{T}^*)} 
    \leq 2\rad\ex{}{\lambda \sum\limits_{t=0}^{T-1} 2^{t+1}\norm{[w_{T}^*,\theta_{T}^*] - [\bar{w}_t,\bar{\theta}_t]}} \nonumber \\
    &\overset{(i)}{\leq} 4\rad\ex{}{\lambda \sum\limits_{t=0}^{T-1} 2^{t} \br{\norm{[w_{t}^*,\theta_{t}^*] - [\bar{w}_t,\bar{\theta}_t]} + \sum\limits_{r=t}^{T-1}\norm{[w_{r+1}^*,\theta_{r+1}^*]  - [w_{r}^*,\theta_{r}^*]}}} \nonumber \\
    &= 4\rad\ex{}{\lambda \sum\limits_{t=0}^{T-1} 2^{t} \norm{[w_{t}^*,\theta_{t}^*] - [\bar{w}_t,\bar{\theta}_t]} + \lambda\sum\limits_{t=0}^{T-1} 2^{t}\sum\limits_{r=t}^{T-1} \norm{[w_{r+1}^*,\theta_{r+1}^*]  - [w_{r}^*,\theta_{r}^*]}} \nonumber \\
    &\overset{(ii)}{=} 4\rad\ex{}{\lambda \sum\limits_{t=0}^{T-1} 2^{t} \norm{[w_{t}^*,\theta_{t}^*] - [\bar{w}_t,\bar{\theta}_t]} + \lambda\sum\limits_{r=0}^{T-1}\sum\limits_{t=0}^{r-1} 2^{t} \norm{[w_{r+1}^*,\theta_{r+1}^*]  - [w_{r}^*,\theta_{r}^*]}} \nonumber \\
    &= 4\rad\ex{}{\lambda \sum\limits_{t=0}^{T-1} 2^{t} \norm{[w_{t}^*,\theta_{t}^*] - [\bar{w}_t,\bar{\theta}_t]} + \lambda\sum\limits_{r=0}^{T-1}\norm{[w_{r+1}^*,\theta_{r+1}^*]  - [w_{r}^*,\theta_{r}^*]}\sum\limits_{t=0}^{r-1} 2^{t}} \nonumber \\
    &\overset{(iii)}{\leq} 4\rad\br{\lambda \sum\limits_{t=0}^{T-1} 2^{t}\br{\frac{\rad}{2^{t}}} + \lambda\sum\limits_{r=1}^{T-1}\br{\frac{\rad
    }{2^{r}}}\sum\limits_{t=0}^{r-1} 2^{t}} \nonumber \\
    &\leq 4\rad\br{ \lambda \sum\limits_{t=0}^{T-1} 2^{t}\br{\frac{\rad}{2^{t}}} + \lambda\sum\limits_{r=1}^{T-1}\br{\frac{\rad}{2^{r-1}}  }\cdot (2^{r}-1)} \nonumber \\
    &= 4\lambda \sum\limits_{t=0}^{T-1} \rad^2 + 8\lambda\sum\limits_{r=1}^{T-1} \rad^2 \nonumber \\
    &\leq 12T \lambda \rad^2 
    \label{eq:final-gap-bound}
\end{align}}
Above, $(i)$ and the following inequality both come from the triangle inequality. Equality $(ii)$ is obtained by rearranging the sums. Inequality $(iii)$ %
comes from applying properties \ref{prop:p1} and \ref{prop:p2} proved above. The last equality comes from the setting of $\lambda$ and $T$.

Now using this result in conjunction with Eqn. \eqref{eq:error-to-final-min} 
we have
\begin{align*}
    \sspgap(\cR) = \sqrt{2}\lambda\rad^2 + 12T\lambda\rad^2 = O\br{\log(n)\rad^2\lambda}.
\end{align*}
Above we use the fact that $T=\log(\frac{\lip}{\rad\lambda})$ and $\lambda \geq \frac{\lip}{\rad\sqrt{n'}}$, and thus $T=O(\log(n))$.
\end{proof}

Finally, we prove Theorem \ref{thm:RR-SSP-relacc} leveraging the relative accuracy assumption. 

\begin{proof}[Proof of Theorem \ref{thm:RR-SSP-relacc}]
First, observe that under the setting of $\lambda=\frac{48}{\rad}\br{\alphad\kappa^2 + \frac{\lip \kappa^{3/2}}{\sqrt{n'}}}$ used in the theorem statement that $\log(n)\rad^2\lambda = O\br{\log(n)\rad\alphad\kappa^2 + \frac{\log^{3/2}(n)\rad\lip\kappa^{3/2}}{\sqrt{n}}}$. Thus what remains is to show that the distance condition required by Theorem \ref{thm:generalized-rr-convergence} holds. That is, we now show that if $\weakalg$ satisfies $\alphad$-relative accuracy, then 
for all $t\in [T]$ it holds that $\ex{}{\norm{\bar{z}_t - z^*_{S,t}}^2} \leq \frac{\rad^2}{12\cdot2^{2t} \kappa}$.

To prove this property, we must leverage the induction argument made by Lemma \ref{lem:phase-distance-bound}. Specifically, to prove the condition holds for some $t\in[T]$, assume 
$\rad_t^2 = \ex{}{\norm{z_{t}^* - z^*_{t-1}]}}^2 \leq \frac{\rad^2}{2^{2(t-1)}}$ (recall the base case for $t=1$ trivially holds). As shown in the proof of Lemma \ref{lem:phase-distance-bound}, this implies that the quantities
$E_t,F_t$ (as defined in \ref{eq:EF}) are bounded by $\frac{\rad^2}{1152 \cdot 2^{2t}}$. We thus have
\begin{align} \label{eq:Et-bound-relative-acc}
    \ex{}{\norm{\bar{z}_t - z^*_{S,t}}^2}
    &\overset{(i)}{\leq} \frac{\kappa \ex{}{F_S^{(t)}(\bar{w}_t,\theta_{S,t}^*)-F_S^{(t)}(w_{S,t}^*,\bar{\theta}_t)}}{2^{t}\lambda} 
    \overset{(ii)}{\leq} \frac{2\kappa \alphad\rad}{2^{2t}\lambda} 
    \overset{(iii)}{\leq} \frac{\rad^2}{12 \cdot 2^{2t} \kappa},
\end{align}
where $B_t$ is as defined in property \ref{prop:p2}.
Inequality $(i)$ comes from the strong monotonicity of $G_S^{(t)}$, Fact \ref{fact:rel-acc-bounds-distance}.
Inequality $(iii)$ comes from the setting $\lambda \geq 48\alphad\kappa^2/\rad$.
Inequality $(ii)$ comes from the $\alphad$-relative accuracy assumption on $\weakalg$, which holds so long as the expected distance is sufficiently bounded and each regularized loss is ($5\lip$)-Lipschitz. In this regard, note that
\begin{align*}
    \ex{}{\|z^*_{S,t}-\bar{z}_{t-1}\|} &\leq  \ex{}{\|z^*_{S,t}-z^*_t\|+\|z^*_t-z^*_{t-1}\|+\|z^*_{t-1}-\bar{z}_{t-1}\|} \\
    &\leq (\sqrt{F_t}+\rad_t+\sqrt{E_{t-1}}+\sqrt{F_{t-1}}) \leq \frac{\rad}{2^t}.
\end{align*}
Further, each $f^{(t)}$ is $5\lip$-Lipschitz. We can see that,
\begin{align*}    
\max\limits_{z\in\cZ}\dualns{\nabla f^{(t)}(z,x)} 
\leq \lip + \dualns{\sum_{k=0}^{t-1} 2^{k+1}\lambda \nabla(\|z-\bar{z}_t\|^2)} 
 \leq \lip + \sum_{k=0}^{t-1}\rad 2^{k+1}\lambda\leq \lip + 4\rad 2^{T}\lambda \leq 5\lip.
\end{align*} 
\end{proof}

\section{Missing Results from Section \ref{sec:dp-rates}} 
\label{app:dp-rates}

\subsection{General Guarantee for Stochastic Mirror Prox} \label{app:mirror-prox}
We start with the follow general statement regarding the stochastic mirror prox algorithm applied to monotone operators. Notable for our purpose of solving SSPs is that the saddle operator of a convex-concave function is a monotone operator, but the following holds for any monotone operator.
\begin{lemma}[Implicit in \cite{Juditsky:2011}, Theorem 1] \label{lem:mirror-prox-bound}
Let $\psi:\cZ\mapsto\re$ be any non-negative function which is $1$-strongly convex w.r.t. $\|\cdot\|$.
Assume $\forall t\in[T]$ that $\ex{}{\cO(z_t)}=G(z_t)$ and $\ex{}{\dualns{\cO(z_t) - G(z_t)}^2} \leq \tau^2$. Then for for any $z\in\cZ$ Algorithm \ref{alg:mirror-prox} satisfies
\begin{align*}
    \ex{}{\frac{1}{T}\sum_{t=1}^T\ip{G(z_t)}{z_t-z}} = O\br{\frac{\ex{}{\psi(z)}}{T\eta} + \frac{7\eta}{2}(\lip^2 + 2\tau^2)}.
\end{align*}
\end{lemma}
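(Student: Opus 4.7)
The plan is to execute the standard stochastic mirror-prox analysis of Juditsky--Nemirovski--Tauvel via Bregman divergence telescoping, with the stochastic oracle noise handled by unbiasedness and the variance bound. Let $V_a(b) := \psi(b) - \psi(a) - \ip{\nabla\psi(a)}{b-a}$ denote the Bregman divergence of $\psi$; by $1$-strong convexity we have $V_a(b) \geq \tfrac12\|b-a\|^2$. The first step is to write the first-order optimality conditions for the two prox subproblems in Algorithm \ref{alg:mirror-prox} and convert them, via the standard three-point Bregman identity, into
\[
\eta\ip{\cO(z_{t-1})}{\tilde z_t - u} \leq V_{z_{t-1}}(u) - V_{\tilde z_t}(u) - V_{z_{t-1}}(\tilde z_t),
\]
\[
\eta\ip{\cO(\tilde z_t)}{z_t - u} \leq V_{z_{t-1}}(u) - V_{z_t}(u) - V_{z_{t-1}}(z_t),
\]
valid for every $u \in \cZ$.

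I would then set $u = z$ in the second inequality and $u = z_t$ in the first, add them, and use the algebraic splitting $\ip{\cO(\tilde z_t)}{\tilde z_t - z} = \ip{\cO(\tilde z_t)}{z_t - z} + \ip{\cO(\tilde z_t) - \cO(z_{t-1})}{\tilde z_t - z_t} + \ip{\cO(z_{t-1})}{\tilde z_t - z_t}$ to assemble a per-step bound
\[
\eta\ip{\cO(\tilde z_t)}{\tilde z_t - z} \leq V_{z_{t-1}}(z) - V_{z_t}(z) - V_{z_{t-1}}(\tilde z_t) - V_{\tilde z_t}(z_t) + \eta\ip{\cO(\tilde z_t) - \cO(z_{t-1})}{\tilde z_t - z_t}.
\]
The mismatch inner product is handled via Fenchel--Young, $\eta\ip{a}{b} \leq \tfrac{\eta^2}{2}\dualns{a}^2 + \tfrac12\|b\|^2$, and the resulting $\tfrac12\|\tilde z_t - z_t\|^2$ is absorbed into $V_{\tilde z_t}(z_t)$ by strong convexity. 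Summing over $t \in [T]$ then telescopes the $V_\bullet(z)$ terms down to $V_{z_0}(z) \leq \psi(z)$ (taking $z_0 = \arg\min\psi$, using $\psi \geq 0$), and dividing by $T$ yields
\[
\frac{1}{T}\sum_{t=1}^T\eta\ip{\cO(\tilde z_t)}{\tilde z_t - z} \leq \frac{\psi(z)}{T} + \frac{\eta^2}{2T}\sum_{t=1}^T\dualns{\cO(\tilde z_t) - \cO(z_{t-1})}^2.
\]

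Finally, I would take expectations to pass from $\cO$ to $G$ on the left: conditioning on the history through $\tilde z_t$, unbiasedness gives $\ex{}{\ip{\cO(\tilde z_t) - G(\tilde z_t)}{\tilde z_t - z}} = 0$ (the $\tilde z_t$ piece vanishes by the tower property, the $z$ piece is deterministic). For the drift, I would decompose $\cO(\tilde z_t) - \cO(z_{t-1})$ into the two stochastic deviations (each bounded in expected squared dual norm by $\tau^2$) and the signal $G(\tilde z_t) - G(z_{t-1})$ (bounded in dual norm by $2\lip$ via the operator-boundedness hypothesis), then apply $(a+b+c)^2 \leq 3(a^2+b^2+c^2)$; tracking constants produces the stated $\tfrac{7\eta}{2}(\lip^2 + 2\tau^2)$ factor. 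The resulting bound on $\ex{}{\ip{G(\tilde z_t)}{\tilde z_t - z}}$ transfers to $\ex{}{\ip{G(z_t)}{z_t - z}}$ by the symmetric role $z_t$ plays in the three-point argument (equivalently, by re-centering the analysis on $z_t$). The main obstacle is the bookkeeping: threading Fenchel--Young with the correct $\tfrac{\eta^2}{2}/\tfrac12$ split so that $V_{\tilde z_t}(z_t)$ exactly absorbs the auxiliary $\|\tilde z_t - z_t\|^2$, and invoking the martingale property carefully enough to kill the oracle-noise cross terms without needing $\cO(\tilde z_t)$ and $\cO(z_{t-1})$ to be independent.
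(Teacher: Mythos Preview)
Your proposal is correct and takes essentially the same approach as the paper: the paper does not actually reprove this lemma but simply invokes Eqn.~(80) of Juditsky--Nemirovski--Tauvel and then notes that $V_\psi(z,z_0)\leq\psi(z)$ because $z_0$ minimizes the nonnegative $\psi$; you are carrying out in detail precisely the mirror-prox telescoping argument that the paper cites. One small caveat: the standard analysis (and your derivation) naturally bounds $\tfrac{1}{T}\sum_t\langle G(\tilde z_t),\tilde z_t-z\rangle$, and your final ``symmetric role'' remark does not genuinely justify the switch to $z_t$---but this appears to be a notational slip in the lemma statement rather than a flaw in your argument, and since the paper only cites Juditsky it does not address this point either.
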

The above is slightly different than the statement in \cite{Juditsky:2011}, but can be easily extracted from their proof. Let $V_\psi$ denote the Bregman divergence w.r.t. $\psi$; i.e. $V_\psi(z,z') = \psi(z)-\psi(z') - \ip{\nabla \psi(z')}{z-z'}$. Under our assumptions \citep[Eqn. (80)]{Juditsky:2011} gives 
{\small
\begin{align*}
    \ex{}{\frac{1}{T}\sum_{t=1}^T\ip{G(z_t)}{z_t-z}} = O\br{\frac{\ex{}{V_\psi(z,z_0)}}{T\eta} + \frac{7\eta}{2}(\lip^2 + 2\tau^2)}.
\end{align*}}
Note that since $z_0$ is the minimizer of $\psi$, we have $V_\psi(z,z_0) \leq \psi(z)$.

Before proving the relative accuracy guarantee of stochastic mirror prox, we also restate the following composition result for Gaussian mechanism known as the moments accountant.
\begin{lemma}[\label{lem:moment-accountant}\cite{Abadi16,KLL21}]
Let $\epsilon,\delta \in (0,1]$ and $c$ be a universal constant. Let $D\in\cY^n$ be a dataset over some domain $\cY$, and let $h_1,...,h_T:\cY\mapsto\re^d$ be a series of (possibly adaptive) queries such that for any $y\in\cY$, $t\in[T]$,  $\norm{h_t(y)}_2 \leq \lip$. Let $\sigma \geq \frac{c \lip \sqrt{T\log(1/\delta)}}{n\epsilon}$ and 
$T \geq \frac{n^2\epsilon}{b^2}$.
Then the algorithm which samples batches of size $B_1,..,B_t$ of size $b$ uniformly at random and outputs $\frac{1}{b}\sum_{y\in B_t}h_t(y) + g_t$ for all $t\in[T]$ where $g_t \sim \cN(0,\mathbb{I}_d\sigma^2)$, is $(\epsilon,\delta)$-DP.
\end{lemma}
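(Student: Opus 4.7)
The plan is to prove privacy via the subsampled-Gaussian moments accountant analysis of \cite{Abadi16}, tracking the moments of the privacy loss random variable across the $T$ adaptive rounds using R\'enyi divergence. Although the queries $h_t$ may be chosen adaptively, at each round the fresh randomness of $B_t$ and $g_t$ is independent of the past, so I would argue round-by-round and then compose.

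First, fix neighboring datasets $D, D'$ differing in a single record $y^{\ast}$. At round $t$, if $y^{\ast} \notin B_t$ the output distributions of $\frac{1}{b}\sum_{y \in B_t} h_t(y) + g_t$ on $D$ and $D'$ are identical. If $y^{\ast} \in B_t$ (which happens with probability $q = b/n$ under uniform sampling), the two output distributions are Gaussians $\cN(\mu, \sigma^2 \mathbb{I}_d)$ and $\cN(\mu', \sigma^2 \mathbb{I}_d)$ with $\|\mu - \mu'\|_2 \leq 2\lip/b$ by the uniform bound on $h_t$. So each round realizes a subsampled Gaussian mechanism with subsampling rate $q$ and sensitivity-to-noise ratio of order $\lip/(b\sigma)$.

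Next, I would invoke the core technical lemma of \cite{Abadi16,KLL21}: the R\'enyi divergence of order $\alpha$ of such a subsampled Gaussian mechanism is bounded by $O\bigl(\alpha^2 q^2 \lip^2 / (b^2 \sigma^2)\bigr)$, provided $\alpha q$ stays bounded by a constant. The central advantage of the moments framework is that this R\'enyi divergence is additive under (adaptive) composition, so after $T$ rounds the total bound is $O\bigl(T \alpha^2 \lip^2 / (n^2 \sigma^2)\bigr)$, where the $q^2/b^2 = 1/n^2$ cancellation is what drives the final rate.

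Finally, I would convert back to $(\epsilon,\delta)$-DP via the standard tail-to-concentration argument: the composition is $(\epsilon,\delta)$-DP whenever $\epsilon \geq (\alpha-1)\,\overline{D}_\alpha + \log(1/\delta)/\alpha$ for some $\alpha > 1$. Choosing $\alpha$ of order $\log(1/\delta)/\epsilon$ and plugging in the bound above gives the required condition $\sigma \geq c \lip \sqrt{T \log(1/\delta)}/(n\epsilon)$. The side condition $T \geq n^2 \epsilon / b^2$ is precisely what guarantees that the optimal $\alpha$ at this $\sigma$ still satisfies $\alpha q \lesssim 1$, keeping us inside the regime where the moments bound is tight. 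The main obstacle is the subsampled-Gaussian R\'enyi estimate itself: it does not follow from naive group-privacy plus amplification-by-subsampling, and requires a delicate binomial expansion of the moment generating function of the log-density ratio between two Gaussian mixtures, which is the technical heart of \cite{Abadi16}.
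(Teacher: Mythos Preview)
The paper does not prove this lemma at all: it is merely restated with attribution to \cite{Abadi16,KLL21}, so there is no ``paper's own proof'' to compare against. Your sketch correctly reconstructs the moments-accountant argument from those cited works --- per-round subsampled-Gaussian R\'enyi bound, additive (adaptive) composition of moments, and the final $(\epsilon,\delta)$ conversion via optimizing over the moment order --- and this is exactly the intended route. One minor quibble: the per-round R\'enyi bound in \cite{Abadi16} scales as $O(\alpha\,q^2\lip^2/(b^2\sigma^2))$ rather than $O(\alpha^2\,q^2\lip^2/(b^2\sigma^2))$; the extra factor of $\alpha$ you wrote would not change the final conclusion after optimizing, but it is worth stating the sharper form.
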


\subsection{Proof of Lemma \ref{lem:dp-mirror-prox-rel-acc}} \label{app:dp-mirror-prox-rel-acc}

Lemma \ref{lem:dp-mirror-prox-rel-acc} is easily established from the following theorem which holds more generally for any monotone operator (rather than just the saddle operator of a convex-concave function). This generalization will allow us to use this theorem again in our results on SVIs. %

Recall we consider the norm $\|[w,\theta]\|=\frac{1}{\rad_w^2}\|w\|_{\bar p}^2 + \frac{1}{\rad_\theta^2}\|\theta\|_{\bar q}^2$ and have defined $\kappa = \max\bc{\frac{1}{\bar{p}-1},\frac{1}{\bar{q}-1}}$ and $\tilde{\kappa}=1+\mathbf{1}\bc{p<2 \lor q < 2}\cdot\log(d)$. 
For any $t\in \bc{0,...,T}$ define $[w_t,\theta_t] = z_t$, where $z_t$ is as given in Algorithm \ref{alg:mirror-prox}. We have the following.
\begin{theorem}\label{thm:noisy-mirror-prox}
Let $[w_0,\theta_0],[w,\theta]$ satisfy $\ex{}{\|[w_0,\theta_0]-[w,\theta]\|} \leq \hat{D}$. Let $g:\cW\times\Theta \times \cX \mapsto \ball^{d_w}_{\|\cdot\|_{\bar{p}}}(\rad_w L_w)\times\ball^{d_\theta}_{\|\cdot\|_{\bar{q}}}(\rad_\theta L_\theta)$ be a monotone operator and $G_S(w,\theta)=\frac{1}{n}\sum_{x\in S} g([w,\theta];x)$.
There exists an implementation of 
$\cO$ such that Algorithm \ref{alg:mirror-prox} is $(\epsilon,\delta)$-DP and the following holds
\begin{align*}
    &\ex{}{\ip{G_S([w_{t^*},\theta_{t^*}])}{[w_{t^*},\theta_{t^*}]-[w,\theta]}} = \ex{}{\frac{1}{T}\sum_{t=1}^T\ip{G_S([w_t,\theta_t])}{[w_t,\theta_t]-[w,\theta]}} \\
    &\quad = O\br{\hat{D}\sqrt{\rad_w^2L_w^2+\rad_\theta^2L_\theta^2}\br{\frac{\sqrt{\kappa}\sqrt{d\log(1/\delta)\tilde{\kappa}}}{n\epsilon} + \frac{\sqrt{\kappa}}{\sqrt{n}}}}
\end{align*}
Further, the resulting algorithm makes $O\big(\min\big\{\frac{\sqrt{\kappa}n^2\epsilon^{1.5}}{\log^2(n)\sqrt{d\log(1/\delta)\tilde{\kappa}}}, \frac{\sqrt{\kappa}n^{3/2}}{\log^{3/2}(n)}\big\}\big)$ gradient evaluations.
\end{theorem}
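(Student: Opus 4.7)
The plan is to instantiate stochastic mirror prox (Algorithm~\ref{alg:mirror-prox}) with a privatized minibatch oracle for $G_S$ and then invoke the generic convergence bound in Lemma~\ref{lem:mirror-prox-bound}. Concretely, at each iteration $t$ I would set
\begin{equation*}
\cO(z) = \frac{1}{b}\sum_{x \in B_t} g(z;x) + \xi_t,\qquad \xi_t \sim \cN(0, \sigma^2 \mathbb{I}_d),
\end{equation*}
with $B_t \subseteq S$ a fresh uniform minibatch of size $b$ (and independent fresh noise for both operator evaluations per iteration). Each query has $\ell_2$-sensitivity $O(L_2/b)$, where $L_2$ is the worst-case $\ell_2$ norm of $g$. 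Applying the H\"older inequality $\|v\|_2 \leq d^{1/2 - 1/\bar p^*}\|v\|_{\bar p^*}$ to the dual-norm bound on $g$ gives $L_2 = O(\lip)$ in the Euclidean case and $L_2 = O(\sqrt{d}\,\lip)$ whenever $\bar p$ or $\bar q$ is close to $1$, with $\lip := \sqrt{\rad_w^2 L_w^2 + \rad_\theta^2 L_\theta^2}$. Choosing $\sigma = \Theta(L_2 \sqrt{T\log(1/\delta)}/(n\epsilon))$ and ensuring $T \geq n^2\epsilon/b^2$, Lemma~\ref{lem:moment-accountant} applied to the $2T$ adaptive queries makes the algorithm $(\epsilon,\delta)$-DP; post-processing preserves privacy of the output.

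For the regularizer I would use
\begin{equation*}
\psi([w,\theta]) = \frac{\|w - w_0\|_{\bar p}^2}{2\rad_w^2(\bar p - 1)} + \frac{\|\theta - \theta_0\|_{\bar q}^2}{2\rad_\theta^2(\bar q - 1)},
\end{equation*}
which, using the standard fact that $\tfrac{1}{2}\|\cdot\|_r^2$ is $(r-1)$-strongly convex w.r.t.~$\|\cdot\|_r$, is $1$-strongly convex w.r.t.~the combined rescaled norm $\|[w,\theta]\|$ and satisfies $\psi([w,\theta]) \leq \tfrac{\kappa}{2}\|[w,\theta] - [w_0,\theta_0]\|^2$, so $\ex{}{\psi([w,\theta])} \leq \kappa \hat D^2/2$. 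The oracle's dual-norm second moment decomposes as $\tau^2 = \tau^2_{\text{samp}} + \tau^2_{\text{noise}}$: the subsampling variance is $\tau^2_{\text{samp}} = O(\lip^2/b)$, and a standard Gaussian moment calculation gives $\tau^2_{\text{noise}} = \ex{}{\dualns{\xi}^2} = \Theta(\sigma^2 d)$ when the dual norm is $\ell_2$ and $\tau^2_{\text{noise}} = O(\sigma^2 \log d)$ when the dual norm is close to $\ell_\infty$. Substituting the chosen $\sigma$, both regimes cleanly produce $\tau^2_{\text{noise}} = O(\lip^2 d\, T\log(1/\delta)\,\tilde\kappa/(b^2 n^2 \epsilon^2))$ with $\tilde\kappa$ as in the statement.

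Plugging these into Lemma~\ref{lem:mirror-prox-bound} and optimizing $\eta$ yields $\ex{}{\tfrac{1}{T}\sum_{t=1}^T \ip{G_S(z_t)}{z_t - z}} = O(\hat D\sqrt{\kappa(\lip^2 + \tau^2)/T})$, from which the claimed accuracy bound follows once $T \geq n$ and $b \geq 1$. The equivalent statement for the uniformly random iterate $z_{t^*}$ is immediate from $\ex{t^*}{\ip{G_S(z_{t^*})}{z_{t^*}-z}} = \tfrac{1}{T}\sum_t\ip{G_S(z_t)}{z_t - z}$. To minimize total gradient evaluations $2Tb$, I would do a case analysis on whether $T \geq n$ (needed so the sampling component of the error reaches the target $\lip/\sqrt n$) or $T \geq n^2\epsilon/b^2$ (the moments-accountant composition condition) is the binding constraint; taking $b$ as large as possible in each regime recovers the stated $O(\min\{n^2\epsilon^{3/2}/\sqrt{d\log(1/\delta)\tilde\kappa},\, n^{3/2}\})$ bound, up to $\sqrt\kappa$ factors.

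The subtlest part is the bookkeeping of the two norm conversions that together produce exactly the factor $\tilde\kappa$: the potential $\sqrt{d}$ inflation from H\"older between the dual-norm bound on $g$ and its $\ell_2$-sensitivity (present only in the non-Euclidean regime) and the $\sqrt{\log d}$ inflation from the Gaussian dual-norm moment (also appearing only in the same regime) enter in different terms and do not compound multiplicatively; the clean outcome is a single $\sqrt{d\tilde\kappa}$ factor, which degenerates to the $\sqrt{d}$ Euclidean bound of~\cite{BGM23} when $\tilde\kappa = 1$. Once this accounting is set up, the remaining steps are a routine application of Lemma~\ref{lem:mirror-prox-bound} together with the moments-accountant composition of Lemma~\ref{lem:moment-accountant}.
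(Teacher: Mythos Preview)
Your high-level plan matches the paper's exactly: a Gaussian-noised minibatch oracle plugged into stochastic mirror prox, with privacy via the moments accountant (Lemma~\ref{lem:moment-accountant}) and utility via Lemma~\ref{lem:mirror-prox-bound}. The choice of $\psi$, the step-size optimization, and the $\tilde\kappa$ accounting you describe are all the same ideas the paper uses.

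The gap is in the noise design for the product structure $\cZ=\cW\times\Theta$. The paper adds \emph{block-wise} Gaussian noise with separate scales $\sigma_w\propto \rad_w L_w\, d_w^{1/2-1/p^*}$ and $\sigma_\theta\propto \rad_\theta L_\theta\, d_\theta^{1/2-1/q^*}$, whereas you use a single isotropic $\xi\sim\cN(0,\sigma^2 I_d)$. This matters because the dual of the combined norm is $\|[u,v]\|_*^2=\rad_w^2\|u\|_{p^*}^2+\rad_\theta^2\|v\|_{q^*}^2$; your variance claim $\tau_{\text{noise}}^2=O(\sigma^2 d)$ (resp.\ $O(\sigma^2\log d)$) drops these $\rad_w^2,\rad_\theta^2$ weights. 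With one $\sigma$ calibrated to the joint $\ell_2$ sensitivity $L_2^2=d_w^{1-2/p^*}\rad_w^2L_w^2+d_\theta^{1-2/q^*}\rad_\theta^2L_\theta^2$, the actual variance contains the product $(\rad_w^2\beta_w+\rad_\theta^2\beta_\theta)\cdot L_2^2$ (with $\beta_w,\beta_\theta$ your Gaussian-moment factors), whose cross terms can exceed $\lip^2 d\tilde\kappa$ by a factor polynomial in $\rad_w/\rad_\theta$ or in $d$ when $p\neq q$. Splitting the noise per block keeps the two norm conversions you describe confined within each block, and then the $\tilde\kappa$ bookkeeping goes through exactly as you outline. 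A smaller slip: the $\sigma$ in Lemma~\ref{lem:moment-accountant} has no batch-size dependence (subsampling amplification is already built in), so the $b^2$ in your expression for $\tau_{\text{noise}}^2$ should not appear.
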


Before proving this statement, we first quickly show how to obtain Lemma \ref{lem:dp-mirror-prox-rel-acc}, the relative accuracy guarantee for SSPs, using this result.
\begin{proof}[Proof of Lemma \ref{lem:dp-mirror-prox-rel-acc}]
To obtain Lemma \ref{lem:dp-mirror-prox-rel-acc} from this statement, observe that in the special case where $g$ is the saddle operator of the loss, $f$, convexity-concavity implies 
\begin{align*}
    \ex{}{F(\frac{1}{T}\sum_{t=1}^T w_t,w)-F(\theta,\frac{1}{T}\sum_{t=1}^T \theta_t)} \leq \ex{}{\frac{1}{T}\sum_{t=1}^T\ip{G_{\cD}(z_t)}{z_t-z}},
\end{align*}
and Lemma \ref{lem:dp-mirror-prox-rel-acc} is thus obtained from the bound in Theorem \ref{thm:noisy-mirror-prox}.
\end{proof}

All that remains is to prove the above theorem. Note the following proof leverages the structure $\cZ = \cW\times \Theta$, but does \textit{not} assume that $g$ is the saddle operator of some convex-concave loss.
\begin{proof}[Proof of Theorem \ref{thm:noisy-mirror-prox}]
Let $\lip = \rad_2^2\lip_w^2 + \rad_\theta^2\lip_\theta^2$ and let $p^*=\frac{\bar{p}}{\bar{p}-1}$ and $q^*=\frac{\bar{q}}{\bar{q}-1}$ be the conjugate exponents of $\bar p$ and $\bar q$, respectively. For $t\in[T]$ denote the result of $\cO(z_t)$ as $[\nabla_{w,t},\nabla_{\theta,t}]$ such that $\nabla_{w,t}\in\re^{d_w}$ and $\nabla_{\theta,t}\in\re^{d_\theta}$.

We consider the following construction of the private operator oracle, $\cO$, for $G_S$. Our implementation adds Gaussian noise to minibatch estimates of $G_S$. That is, to evaluate $\cO(z_t)$, we uniformly sample a minibatch of size $m=\max\bc{n\sqrt{\frac{\epsilon}{T}},1}$, call it $M_t$, as well as Gaussian noise vectors $\xi_{w,t} \sim \cN(0,\mathbb{I}_{d_w} \sigma_w^2)$ and $\xi_{\theta,t} \sim \cN(0,\mathbb{I}_{d_\theta} \sigma_\theta^2)$, for some $\sigma_w,\sigma_\theta >0$. We then have that 
\begin{align*}
    [\nabla_{w,t},\nabla_{\theta,t}] = \frac{1}{m}\sum_{x\in M_t} g(w_t,\theta_t;x) + [\xi_{w,t},\xi_{\theta,t}].
\end{align*}

\textit{Privacy Bound:} We first bound the privacy of Algorithm \ref{alg:mirror-prox}. Since for any $u$, $\|u\|_{2} \leq \sqrt{d^{1-{2/p^*}}}\|u\|_{p*}$, we can bound the privacy loss using the guarantees of the moments accountant, Lemma \ref{lem:moment-accountant}. 
Specifically, we set  $T = \kappa\min\bc{n,\frac{n^2\epsilon^2}{d\log(1/\delta)\tilde{\kappa}}}$, $\eta=\frac{\hat{\erad}}{L\sqrt{T}}$, use minibatches of size $m=\max\bc{n\sqrt{\frac{\epsilon}{T}},1}$, and set $\sigma_w = \frac{c\rad_w\lip_w\sqrt{T d_w^{1-2/p^*}\log(1/\delta)}}{n\epsilon}$ and $\sigma_\theta = \frac{c\rad_\theta\lip_\theta\sqrt{T d_\theta^{1-2/p^*}\log(1/\delta)}}{n\epsilon}$ for some universal constant $c$. It can be verified this scale of noise satisfies the conditions of Lemma \ref{lem:moment-accountant} and thus ensures $(\epsilon,\delta)$-DP.

\textit{Utility Bound:} We now establish the convergence guarantee by applying the general convergence guarantee of stochastic mirror prox (Lemma \ref{lem:mirror-prox-bound}, Appendix \ref{app:mirror-prox}) with $\psi([w,\theta]) = \frac{\kappa}{2\rad_w^2}\|w\|_{\bar p}^2 + \frac{\kappa}{2\rad_\theta^2}\|\theta\|_{\bar q}^2$, which is $1$-strongly convex w.r.t. $\|\cdot\|$. Clearly our saddle operator oracle yields unbiased estimates of $G_S(z)$ at each iteration. 
To bound the variance, $\tau$, note when $p\neq 2$ the private estimate of $\nabla_{w,t}$ satisfies
$$\ex{}{\|\nabla_{w,t} - \nabla_w F_S(w_t,\theta_t)\|_{p^*}^2} 
\overset{(i)}{\leq} d_w^{2/p^*}\ex{}{\|\xi_{w,t}\|_{\infty}^2} 
\leq d_w^{2/p^*}\sigma_{w}^2\log(d) \leq \frac{c^2\rad_w^2\lip_w^2T d_w\log(1/\delta)\log(d)}{n^2\epsilon^2}.$$
When $p=2$, then $p^*=2$, and one can replace bound $(i)$ with $\ex{}{\|\xi_{w,t}\|_{2}^2}$. Combining these cases yields a bound of $\big(\frac{c^2\rad_w^2\lip_w^2T d_w\log(1/\delta)\tilde{\kappa}}{n^2\epsilon^2}\Big)$ on the variance of $\nabla_{w,t}$.
A similar analysis holds for $\nabla_{\theta,t}$.
Ultimately, with respect to the norm chosen above, we get $\ex{}{\dualns{\cO(z_t) - G(z_t)}^2} \leq \tau^2$ with
\begin{align*}
    \tau^2 \leq (\rad_w^2\lip_w^2 + \rad_\theta^2\lip_\theta^2)\br{1+\frac{c^2Td\log(1/\delta)\tilde{\kappa}}{n^2\epsilon^2}} = O(\lip^2\kappa).
\end{align*}
The last equality uses the fact that $\kappa \geq 1$.
Recall we choose $\psi([w,\theta]) = \frac{\kappa}{2\rad_w^2}\|w\|_{\bar p}^2 + \frac{\kappa}{2\rad_\theta^2}\|\theta\|_{\bar q}^2$, 
and thus
$\psi([w,\theta]) = \kappa\|[w,\theta]\|^2$.
Now the guarantees of Lemma \ref{lem:mirror-prox-bound} imply
\begin{align*}
    \ex{}{\ip{G_S(z_{t^*})}{z_{t^*}-z}} = \ex{}{\frac{1}{T}\sum_{t=1}^T\ip{G_{\cD}(z_t)}{z_t-z}} = O\br{\frac{\kappa \hat{D}^2}{T\eta} + \frac{7\eta}{2}(\lip^2 + 2\tau^2)}.
\end{align*}
The theorem statement now follows from plugging in the parameter settings of $T$ and $\eta$ and the bound on $\tau$ established above.
\end{proof}

\section{Missing Results from Section \ref{sec:vi-extension}} \label{app:vi-extension}

\subsection{Proof of Lemma \ref{lem:ra-stab-gen-vi}} \label{app:ra-stab-gen-vi}
Let $h_1(z;x) = \ip{g_1(z;x)}{z-z^*}$.
First, we observe that $h_1$ is $(\smooth\rad+\lip)$-Lipschitz. To see this, we have for any $z,z'\in\cZ$ and $x\in$ that
\begin{align*}
    \ip{g_1(z)}{z-z^*} - \ip{g_1(z')}{z'-z^*} &= \ip{g_1(z)-g_1(z')+g_1(z')}{z-z^*} - \ip{g_1(z')}{z'-z^*} \\
    &= \ip{g_1(z)-g_1(z')}{z-z^*} + \ip{g_1(z')}{z-z'} \\
    &\leq (\smooth\rad+\lip)\|z-z'\|
\end{align*}
The rest of the proof is similar to existing stability implies generalization proofs, but with the additional accounting of the regularization term. In more detail, for any $i\in[n]$ denote $S^{(i)}$ as the dataset which replaces the $i$'th datapoint of $S$, $x_i$, with a fresh sample from $\cD$, $x'$. We have the following: %
\begin{align*}
    &\ex{S,\cA}{H_{\cD}(\cA(S)) - H_S(\cA(S))} \\
    &= \ex{S,\cA}{\ip{\ex{x}{g_1(\cA(S);x)}+g_2(z)}{\cA(S)-z^*} - \ip{\frac{1}{n}\sum_{x\in S}[g_1(\cA(S);x)] + g_2(z)}{\cA(S)-z^*}} \\
    &= \ex{S,\cA}{\ip{\ex{x}{g_1(\cA(S);x)}}{\cA(S)-z^*} - \ip{\frac{1}{n}\sum_{x\in S}[g_1(\cA(S);x)]}{\cA(S)-z^*}} \\
    &= \ex{S,x'\sim\cD^{n+1},i\sim\mathsf{Unif}([n])}{\ip{g_1(\cA(S^{(i)});x_i)}{\cA(S^{(i)})-z^*} - \ip{g_1(\cA(S);x_i)}{\cA(S)-z^*}} \\
    &= \ex{}{h_1(\cA(S^{(i)});x_i) - h_1(\cA(S);x_i)} \\
    &\leq \ex{}{(\smooth\rad+\lip)\|\cA(S^{(i)}) - \cA(S)\|} \leq (\smooth\rad+\lip)\Delta.
\end{align*}
The last step follows from the previously established Lipschitzness property of $h_1$.

\subsection{Convergence of Recursive Regularization for SVIs} \label{app:recursive-regularization-proof-vi}

In this section, we define $\vilip = \smooth\rad+\lip$.
Define $\vigap(z) = \max_{z'}\bc{\ip{z'}{z-z'}}$. We have the following fact.
\begin{fact}\label{fact:vigap-lipschitz}
If $g$ is $\lip$-bounded then $\vigapfunc$ is $\lip$-Lipschitz. 
\end{fact}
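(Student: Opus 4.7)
The plan is to use the standard envelope argument for the supremum of a family of Lipschitz functions. For each fixed $z'\in\cZ$, the function $z\mapsto \ip{G_\cD(z')}{z-z'}$ is affine in $z$ with gradient $G_\cD(z')$, and since $g(\cdot;x)$ takes values in $\ball^d_{\dualns{\cdot}}(\lip)$ for every $x$, we have $\dualns{G_\cD(z')} \leq \lip$ by Jensen's inequality. Hence each member of this family is $\lip$-Lipschitz w.r.t.~$\norm{\cdot}$ via H\"older's inequality.

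Next I would pass from the individual Lipschitz bound to the supremum. For any $z_1,z_2\in\cZ$, pick a (nearly) maximizing $z_1^\ast \in \arg\max_{z'\in\cZ}\ip{G_\cD(z')}{z_1-z'}$; this exists by compactness of $\cZ$ (and continuity of $G_\cD$, which is implicit since we will apply this in settings where $g$ is Lipschitz, though even without continuity we could take an $\varepsilon$-maximizer and let $\varepsilon\to 0$). Then
\begin{align*}
\vigapfunc(z_1)-\vigapfunc(z_2)
&\leq \ip{G_\cD(z_1^\ast)}{z_1-z_1^\ast} - \ip{G_\cD(z_1^\ast)}{z_2-z_1^\ast} \\
&= \ip{G_\cD(z_1^\ast)}{z_1-z_2} \\
&\leq \dualns{G_\cD(z_1^\ast)}\cdot\norm{z_1-z_2} \;\leq\; \lip\,\norm{z_1-z_2}.
\end{align*}
Symmetric reasoning (swapping the roles of $z_1$ and $z_2$) yields $|\vigapfunc(z_1)-\vigapfunc(z_2)|\leq \lip\norm{z_1-z_2}$.

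I do not expect any real obstacle: the key ingredients are only that (i) the $\lip$-boundedness of $g$ survives the expectation defining $G_\cD$ (so the ``slope vector'' of each affine function in the family is uniformly bounded in the dual norm), and (ii) the pointwise maximum of a family of $\lip$-Lipschitz functions is itself $\lip$-Lipschitz. Both are routine, and no use of monotonicity, smoothness, or the structure of $\cZ$ beyond compactness is needed.
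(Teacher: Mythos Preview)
Your argument is correct and is essentially the same as the paper's: both bound $\vigapfunc(z_1)-\vigapfunc(z_2)$ by $\max_{z'}\ip{G_{\cD}(z')}{z_1-z_2}$ via the standard ``$\max f - \max g \le \max(f-g)$'' step (you just instantiate this by picking the maximizer $z_1^\ast$), and then apply H\"older together with $\dualns{G_{\cD}(z')}\le \lip$. The additional remarks about Jensen's inequality and existence of the maximizer are fine but not needed beyond what the paper uses.
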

\begin{proof}[Proof of \ref{fact:vigap-lipschitz}]
For any $z_1,z_2\in\cZ$ we have
\begin{align*}
    \vigapfunc(z_1) - \vigapfunc(z_2) &= \max_{z}\bc{\ip{G_{\cD}(z)}{z_1-z}} - \max_{z'}\bc{\ip{G_{\cD}(z')}{z_2-z'}} \\
    &\leq \max_{z}\bc{\ip{G_{\cD}(z)}{z_1-z} - \ip{G_{\cD}(z)}{z_1-z}} \\
    &= \max_{z}\bc{\ip{G_{\cD}(z)}{z_1-z_2}} \\
    &\leq \dualns{G_{\cD}(z)}\normns{z_1-z_2} \leq \lip\|z_1-z_2\|.
\end{align*}
\end{proof}
We recall the assumption made on $\rho$.
\begin{assumption}
For some $\kappa > 0$ let $\rho:\cZ\mapsto\re^d$ be $\frac{1}{\kappa}$-strongly monotone w.r.t. $\|\cdot\|$ and satisfy
$\dualns{\rho(z)} \leq \normns{z}$ for all $z\in\cZ$.
\end{assumption}

We will first prove the following more general version statement of Theorem \ref{thm:recursive-regularization-vi}, which will be useful later. 
\begin{theorem}\label{thm:vi-generalized-rr-convergence}
Let $\lambda \geq \frac{48\vilip\kappa^2}{\rad\sqrt{n'}}$
and $\weakalg$ be such that for all $t\in [T]$ it holds that \ifarxiv\else\linebreak\fi $\ex{}{\norm{\bar{z}_t - z^*_{S,t}}^2} \leq \frac{\rad^2}{12\cdot2^{2t}\kappa^2}$.
Then Recursive Regularization satisfies
\begin{align*}
    \vigap(\cR_{\text{VI}}) = O\Big(\log(n)\rad^2\lambda\Big).
\end{align*} 
\end{theorem}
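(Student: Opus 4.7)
The plan is to mirror the structure of the proof of Theorem~\ref{thm:generalized-rr-convergence} (the SSP analog), replacing strong convexity by strong monotonicity of $\rho$ and replacing the SP-gap based generalization argument by its VI analog, Lemma~\ref{lem:ra-stab-gen-vi}. Let $G_\cD^{(t)}$ and $G_S^{(t)}$ denote the population and empirical operators induced by $g^{(t)}$, let $z_t^\ast$ and $z_{S,t}^\ast$ be their respective equilibria, and note that $G_\cD^{(t)}$ is $(2^t\lambda/\kappa)$-strongly monotone w.r.t.\ $\|\cdot\|$ since $\rho$ is $(1/\kappa)$-strongly monotone and the aggregate regularization strength after round $t$ is at least $2^t\lambda$. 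The main goal is to establish the two induction invariants, for each $t\in[T]$,
\begin{align*}
\text{(P.1)}\quad \ex{}{\|\bar z_t - z_t^\ast\|^2}\le \frac{\rad^2}{2^{2t}\kappa^2}, \qquad
\text{(P.2)}\quad \ex{}{\|z_t^\ast - z_{t-1}^\ast\|^2}\le \frac{\rad^2}{2^{2(t-1)}},
\end{align*}
from which the final gap bound follows.

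For P.1, I will use the triangle-inequality decomposition $\|\bar z_t - z_t^\ast\|^2 \le 2\|\bar z_t - z_{S,t}^\ast\|^2 + 2\|z_{S,t}^\ast - z_t^\ast\|^2$. The first summand is directly bounded by the hypothesis of the theorem. For the second summand I will apply strong monotonicity of $G_\cD^{(t)}$ to pass from distance to a relative-stationarity-type quantity: $\frac{2^t\lambda}{\kappa}\|z_{S,t}^\ast-z_t^\ast\|^2 \le \ip{G_\cD^{(t)}(z_{S,t}^\ast)}{z_{S,t}^\ast - z_t^\ast} = H_\cD^{(t)}(z_{S,t}^\ast)$, where $H^{(t)}$ is defined as in Lemma~\ref{lem:ra-stab-gen-vi} with respect to $z_t^\ast$. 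Since $z_{S,t}^\ast$ is the exact empirical equilibrium, $H_S^{(t)}(z_{S,t}^\ast)\le 0$, so it suffices to bound the generalization error $\ex{}{H_\cD^{(t)}(z_{S,t}^\ast)-H_S^{(t)}(z_{S,t}^\ast)}$. By Lemma~\ref{lem:sc-op-stability}, the map $S_t\mapsto z_{S,t}^\ast$ is $(2\lip\kappa/(2^t\lambda n'))$-UAS, and Lemma~\ref{lem:ra-stab-gen-vi} then yields a generalization bound of order $\vilip\lip\kappa/(2^t\lambda n')$. Plugging in and using $\lambda \ge 48\vilip\kappa^2/(\rad\sqrt{n'})$ (which dominates $\vilip\lip$ up to the constant) gives P.1.

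For P.2, I will use strong monotonicity of $G_\cD^{(t)}$ evaluated at $z_{t-1}^\ast$ and $z_t^\ast$, together with the equilibrium condition at $z_t^\ast$, to obtain $\frac{2^t\lambda}{\kappa}\|z_t^\ast - z_{t-1}^\ast\|^2 \le \ip{G_\cD^{(t)}(z_{t-1}^\ast)}{z_{t-1}^\ast - z_t^\ast}$. Splitting $G_\cD^{(t)} = G_\cD^{(t-1)} + 2^t\lambda\,\rho(\,\cdot\,-\bar z_{t-1})$, the $G_\cD^{(t-1)}$ contribution is nonpositive by the equilibrium property of $z_{t-1}^\ast$, and the $\rho$ contribution is bounded using $\dualns{\rho(z)}\le \|z\|$ and Cauchy--Schwarz, yielding $\|z_t^\ast-z_{t-1}^\ast\|\le \kappa\|\bar z_{t-1} - z_{t-1}^\ast\|$. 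The induction then closes by P.1 at step $t-1$.

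The final step is to bound $\vigap(\cR_{\text{VI}}) = \ex{}{\vigapfunc(\bar z_T)}$. Using Lipschitzness of $\vigapfunc$ (Fact~\ref{fact:vigap-lipschitz}) together with P.1 gives $\ex{}{|\vigapfunc(\bar z_T)-\vigapfunc(z_T^\ast)|} = O(\rad^2\lambda)$ (since $2^T\ge \lip/(\kappa\rad\lambda)$). To control $\ex{}{\vigapfunc(z_T^\ast)} = \ex{}{\max_{z\in\cZ}\ip{G_\cD(z_T^\ast)}{z_T^\ast - z}}$, I will write $G_\cD = G_\cD^{(T)} - \sum_{t=0}^{T-1}2^{t+1}\lambda\,\rho(\cdot - \bar z_t)$. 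The $G_\cD^{(T)}$ term contributes nonpositively by the first-order optimality condition for $z_T^\ast$, and the remaining terms are bounded by $2\rad\lambda\sum_{t=0}^{T-1}2^{t+1}\|z_T^\ast - \bar z_t\|$ using Cauchy--Schwarz and $\dualns{\rho(z)}\le\|z\|$. Exactly as in Eqn.~\eqref{eq:final-gap-bound}, a triangle inequality $\|z_T^\ast - \bar z_t\| \le \|z_t^\ast - \bar z_t\| + \sum_{r=t}^{T-1}\|z_{r+1}^\ast - z_r^\ast\|$, combined with P.1 and P.2 and rearranging the double sum, yields $O(T\rad^2\lambda) = O(\log(n)\rad^2\lambda)$.

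The main obstacle compared to the SSP case is that the generalization tool Lemma~\ref{lem:ra-stab-gen-vi} loses a factor of $\vilip = \smooth\rad+\lip$ rather than just $\lip$, and the strong-monotonicity-to-distance translation (unlike the strong-convexity version used for SSPs) introduces an extra factor of $\kappa$; together these force the tighter $\kappa^2$ and $\vilip$ dependencies in the hypotheses. Carefully tracking these extra factors through the induction is the delicate part, but no fundamentally new idea is needed beyond the SSP template.
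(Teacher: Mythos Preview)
Your overall plan matches the paper's proof closely and the inductive part (P.1 and P.2) is essentially correct. However, there is a genuine gap in the final step where you bound $\vigapfunc(z_T^\ast)$. You write $\vigapfunc(z_T^\ast)=\max_{z\in\cZ}\ip{G_\cD(z_T^\ast)}{z_T^\ast-z}$, but the strong VI-gap evaluates the operator at the \emph{maximizing} point: $\vigapfunc(z_T^\ast)=\max_{z\in\cZ}\ip{G_\cD(z)}{z_T^\ast-z}$. This is exactly where the VI case diverges from the SSP template you are mirroring; in the SSP proof, convexity--concavity lets you upper bound the gap by $\ip{G_\cD(z_T^\ast)}{z_T^\ast-z'}$, but no such reduction is available here.

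With the correct definition, your decomposition gives $\ip{G_\cD^{(T)}(z')}{z_T^\ast-z'}$ and regularization terms $\ip{\rho(z'-\bar z_t)}{z'-z_T^\ast}$, both evaluated at the maximizer $z'$. Two additional monotonicity arguments are then required. First, to show the $G_\cD^{(T)}$ term is nonpositive you need monotonicity of $G_\cD^{(T)}$ together with the equilibrium condition: $\ip{G_\cD^{(T)}(z')}{z'-z_T^\ast}\ge\ip{G_\cD^{(T)}(z_T^\ast)}{z'-z_T^\ast}\ge 0$. Second, applying Cauchy--Schwarz and $\dualns{\rho(\cdot)}\le\|\cdot\|$ to the $\rho$ terms as you propose yields $\|z'-\bar z_t\|$, not $\|z_T^\ast-\bar z_t\|$; the former is only bounded by $\rad$, which makes the geometric sum diverge to $O(2^T\lambda\rad^2)=O(\lip\rad)$. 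The paper fixes this by first using monotonicity of $\rho$ to swap $\ip{\rho(z'-\bar z_t)}{z'-z_T^\ast}\le\ip{\rho(z_T^\ast-\bar z_t)}{z_T^\ast-z'}$, after which Cauchy--Schwarz and Assumption~\ref{assm:operator} give the desired $\|z_T^\ast-\bar z_t\|$ factor. Once you add these two monotonicity steps, the rest of your argument goes through unchanged.
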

To prove this result, it will be helpful to first show several intermediate results. 
We start by defining several useful quantities.
Define $\bc{\cF_t}_{t=0}^T$ as the filtration where $\cF_t$ is the sigma algebra induced by all randomness up to $\bar{z}_t$.
For notational convenience we define $g^{(0)}(z;x) = g(z;x)$. Then for every $t\in\bc{0,1,...,T}$ we define
\begin{itemize}
    \item $z^*_t:$ equilibrium of $G_\cD^{(t)}(z) := \ex{x\sim\cD}{g^{(t)}(z;x)}$;
    \item $z^*_{S,t}:$ equilibrium of $G_S^{(t)}(z):=\frac{1}{n'}\sum_{x\in S_t}g^{(t)}(z;x)$;
    \item $\relacc^{(t)}_\cD(\bar{z}) := \ip{G_{\cD}^{(t)}(\bar{z})}{\bar{z}-z_t^*}:$ %
    the relative stationarity function w.r.t.~$G_{\cD}^{(t)}$ and $z_t^*$; and, 
    \item $\erelacc^{(t)}(\bar{z}):=\ip{G_S^{(t)}(\bar{z})}{\bar{z} - z_t^*}:$ the relative stationarity function w.r.t. $G_{S}^{(t)}$ and $z_t^*$.
\end{itemize}

As discussed in Section \ref{sec:vi-analysis-idea}, the power of $H_{\cD}^{(t)}$ is that it bounds the distance of a point to $z^*_t$.
\begin{fact} \label{fact:rel-acc-bounds-distance-vi}
Let $G:\cZ\mapsto\re^d$ be a $\mu$-strongly monotone operator and let $z^*$ be the equilibrium point. Then 
$\|z-z^*\|^2 \leq \frac{2\ip{G(z)}{z-z^*}}{\mu}$.
\end{fact}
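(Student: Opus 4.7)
The plan is to combine two ingredients: the strong monotonicity of $G$ applied to the pair $(z,z^*)$, and the first-order optimality (equilibrium) condition at $z^*$.

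First, I would invoke $\mu$-strong monotonicity with $z_1 = z$ and $z_2 = z^*$ to obtain
\begin{equation*}
\ip{G(z) - G(z^*)}{z - z^*} \geq \tfrac{\mu}{2}\|z - z^*\|^2.
\end{equation*}
This gives an inequality involving $\ip{G(z)}{z-z^*}$ plus an ``error term'' $-\ip{G(z^*)}{z-z^*}$.

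Second, I would eliminate the error term using the equilibrium condition from Eqn.~\eqref{eq:svi}. By definition, $z^*$ satisfies $\ip{G(z^*)}{z^* - z'} \leq 0$ for every $z' \in \cZ$; equivalently $\ip{G(z^*)}{z' - z^*} \geq 0$. Applied with $z' = z$, this yields $-\ip{G(z^*)}{z - z^*} \leq 0$. Adding this to the strong-monotonicity inequality above gives
\begin{equation*}
\ip{G(z)}{z - z^*} \;\geq\; \ip{G(z) - G(z^*)}{z - z^*} \;\geq\; \tfrac{\mu}{2}\|z - z^*\|^2,
\end{equation*}
and rearranging produces the stated bound $\|z-z^*\|^2 \leq \frac{2\ip{G(z)}{z-z^*}}{\mu}$.

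There is no real obstacle here: the result is a one-line consequence of strong monotonicity combined with the variational inequality defining $z^*$. The only subtle point is being careful about the sign convention in the equilibrium condition \eqref{eq:svi} (which is written as $\ip{G(z^*)}{z^* - z} \leq 0$ rather than the reversed form), but this is a purely notational matter and does not affect the argument.
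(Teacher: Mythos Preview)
Your proposal is correct and matches the paper's proof essentially line for line: the paper also applies strong monotonicity to the pair $(z,z^*)$ and then drops the $-\ip{G(z^*)}{z-z^*}$ term using the equilibrium condition $\ip{G(z^*)}{z^*-z}\leq 0$.
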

\begin{proof}
By strong monotonicity, for any $z\in\cZ$,
\begin{align*}
    \frac{\mu}{2}\|z-z^*\|^2 \leq \ip{G(z)-G(z^*)}{z-z^*} \leq \ip{G(z)}{z-z^*}.
\end{align*}
The last step comes from the fact that $\ip{-G(z^*)}{z-z^*} = \ip{G(z^*)}{z^*-z} \leq 0$ since $z^*$ is the equilibrium.
\end{proof}

We now establish two distance inequalities which will be used when analyzing the final gap bound in Theorem \ref{thm:vi-generalized-rr-convergence}.
The first inequality below bounds the distance of the output of the $t$-th round 
to the equilibrium of $G_{\cD}^{(t)}$. The second inequality bounds how far the population equilibria moves after another regularization term is added. 
\begin{lemma} \label{lem:phase-distance-bound-vi}
Assume the conditions of Theorem \ref{thm:vi-generalized-rr-convergence} hold. Then for every $t\in[T]$, the following holds
\begin{enumerate}[label=\textbf{P.\arabic*}]
    \item $\ex{}{\norm{\bar{z}_t - z_{t}^*}}^2 \leq \ex{}{\norm{\bar{z}_t - z_{t}^*}^2} \leq \frac{\rad^2}{2^{2t}\kappa^2}$; and, \label{prop:p1-vi}
    \item $\rad_t^2 := \ex{}{\norm{z_{t}^* - z^*_{t-1}}}^2 \leq \ex{}{\norm{z_{t}^* - z^*_{t-1}]}^2} \leq \frac{\rad^2}{2^{2(t-1)}}$. \label{prop:p2-vi}
\end{enumerate}
\end{lemma}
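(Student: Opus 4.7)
The argument parallels the SSP induction in Lemma \ref{lem:phase-distance-bound}: I alternate the two properties at each level $t$, but replace strong-convexity/strong-concavity with strong monotonicity (Fact \ref{fact:rel-acc-bounds-distance-vi}) and swap the SSP generalization tool for its SVI analog (Lemma \ref{lem:ra-stab-gen-vi}). The base case is \ref{prop:p2-vi} at $t=1$, which holds trivially since $\ex{}{\|z_1^* - z_0^*\|^2}\le \rad^2$ by the diameter bound. At each level $t$, I first derive \ref{prop:p1-vi} using the $\weakalg$ hypothesis together with a stability-based generalization step, then use \ref{prop:p1-vi} at $t-1$ to derive \ref{prop:p2-vi} at $t$. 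In both properties, the Jensen inequality $\ex{}{\|\cdot\|}^2 \le \ex{}{\|\cdot\|^2}$ reduces the claim to bounding the expected squared norm.

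\paragraph{Deriving \ref{prop:p1-vi}.} I import the SSP-style auxiliary quantities
$E_t := \ex{}{\|\bar z_t - z^*_{S,t}\|^2}$ and $F_t := \frac{\kappa}{2^t\lambda}\ex{}{\relacc^{(t)}_\cD(z^*_{S,t})}$. Because the regularization sum $\sum_{k=0}^{t-1} 2^{k+1}\lambda$ is at least $2^t\lambda$, $G_\cD^{(t)}$ is $(2^t\lambda/\kappa)$-strongly monotone, so Fact \ref{fact:rel-acc-bounds-distance-vi} combined with Young's inequality yields $\ex{}{\|\bar z_t - z_t^*\|^2} \le 2(E_t+F_t)$; the bound on $E_t$ is exactly the theorem's hypothesis. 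For $F_t$, I condition on $\cF_{t-1}$ so that the regularizer $\sum_{k=0}^{t-1} 2^{k+1}\lambda\rho(\cdot - \bar z_k)$ becomes deterministic, and apply Lemma \ref{lem:ra-stab-gen-vi} with $g_1 = g$ ($\lip$-bounded and $\smooth$-Lipschitz) and $g_2$ equal to the regularizer. By Lemma \ref{lem:sc-op-stability}, the empirical equilibrium $z^*_{S,t}$ is $\frac{2\lip\kappa}{2^t\lambda n'}$-UAS, and the empirical equilibrium condition at $z=z_t^*$ gives $H_S^{(t)}(z^*_{S,t})\le 0$; combining these leaves $F_t = O\!\left(\frac{(\smooth\rad+\lip)\lip\kappa^2}{2^{2t}\lambda^2 n'}\right)$, which the assumed lower bound on $\lambda$ reduces to $O(\rad^2/(2^{2t}\kappa^2))$, as required.

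\paragraph{Deriving \ref{prop:p2-vi}, and the main obstacle.} Applying Fact \ref{fact:rel-acc-bounds-distance-vi} to $G_\cD^{(t)}$ yields $\frac{2^t\lambda}{\kappa}\|z_t^* - z_{t-1}^*\|^2 \le 2\ip{G_\cD^{(t)}(z_{t-1}^*)}{z_{t-1}^* - z_t^*}$. Splitting $G_\cD^{(t)}(z_{t-1}^*) = G_\cD^{(t-1)}(z_{t-1}^*) + 2^t\lambda\rho(z_{t-1}^* - \bar z_{t-1})$, the first piece contributes nonpositively by the equilibrium condition for $z_{t-1}^*$ applied at $z = z_t^*$, while the second is bounded by $2^t\lambda\|z_{t-1}^* - \bar z_{t-1}\|\cdot\|z_{t-1}^* - z_t^*\|$ via H\"older together with the assumption $\dualns{\rho(z)}\le\|z\|$. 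Cancelling a common factor of $\|z_{t-1}^* - z_t^*\|$ yields the \emph{pointwise} bound $\|z_t^* - z_{t-1}^*\| \le \kappa\|z_{t-1}^* - \bar z_{t-1}\|$; squaring and then taking expectation, then invoking \ref{prop:p1-vi} at $t-1$, gives $\kappa^2\cdot \rad^2/(2^{2(t-1)}\kappa^2) = \rad^2/2^{2(t-1)}$ without losing any Cauchy--Schwarz factor. The main obstacle is the $F_t$ bound: in non-Euclidean geometries $\rho$ need not be Lipschitz, so one cannot directly Lipschitz-bound $h(z;x) = \ip{g^{(t)}(z;x)}{z-z_t^*}$; Lemma \ref{lem:ra-stab-gen-vi} is designed to isolate the Lipschitz contribution to the data-dependent part only, but invoking it requires conditioning on $\cF_{t-1}$ so that the iterate-dependent regularizer is data-independent and the moving target $z_t^*$ can be treated as fixed.
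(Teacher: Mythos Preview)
Your proposal is correct and follows essentially the same route as the paper: the same induction scheme, the same auxiliary quantities $E_t,F_t$, the same use of Fact \ref{fact:rel-acc-bounds-distance-vi} and of Lemma \ref{lem:ra-stab-gen-vi} after conditioning on $\cF_{t-1}$ (with the empirical equilibrium's UAS from Lemma \ref{lem:sc-op-stability}), and the same equilibrium-condition argument for \ref{prop:p2-vi}. The only cosmetic difference is how you finish \ref{prop:p2-vi}: you cancel $\|z_t^*-z_{t-1}^*\|$ pointwise after H\"older to get $\|z_t^*-z_{t-1}^*\|\le O(\kappa)\|z_{t-1}^*-\bar z_{t-1}\|$ and then square, whereas the paper applies Young's inequality to the cross term and rearranges the expected squared norm---both yield the same $\kappa^2$ dependence up to an absolute constant (your stated constant is off by a factor of $2$, but this is harmless).
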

\begin{proof}
We will prove both properties via induction on $\rad_1,...,\rad_T$. Specifically, for each $t\in[T]$ we will introduce two terms $E_t$ and $F_t$, and show that 
these terms are bounded 
if the bound on $\rad_{t}$ holds and that $\rad_t$ holds if
$E_{t-1}$ and $F_{t-1}$ are bounded. 
Property \ref{prop:p1-vi} is then established as a result of the fact that $\ex{}{\norm{\bar{z}_t - z_{t}^*}^2} \leq 2(E_t+F_t)$.
Note that $\rad_1$ holds as the base case because 
$\ex{}{\norm{z_{1}^* - z_0^*}^2} \leq \rad^2$.

\paragraph{Property \ref{prop:p1-vi}:} 
We here prove that if $\rad_t$ is sufficiently bounded, then $E_t$ and $F_t$ are bounded where 
for $t\in [T]$ we define
\begin{align} \label{eq:EF-vi}
    E_t = \ex{}{\norm{\bar{z}_t - z^*_{S,t}}^2},
    && F_t = \frac{\kappa}{2^t\lambda}\ex{}{\relacc_\cD^{(t)}\br{z^*_{S,t}}}.
\end{align}
Additionally, this will establish property \ref{prop:p1-vi} because for any $t\in[T]$ it holds that, 
\begin{align}
    \ex{}{\norm{\bar{z}_t - z_{t}^*}^2} \nonumber 
    &\leq  2\Bigg(\ex{}{\norm{\bar{z}_t - z^*_{S,t}}^2} + \ex{}{\norm{z^*_{S,t} - z^*_{t}}^2}\Bigg) \nonumber \\
    &\leq  2\Bigg(\underbrace{\ex{}{\norm{\bar{z}_t - z^*_{S,t}}^2}}_{E_t} + \underbrace{\frac{\kappa}{2^t\lambda}\ex{}{\relacc_\cD^{(t)}\br{z^*_{S,t}}}}_{F_t}\Bigg). \label{eq:E-F-G-vi}
\end{align}
The second inequality comes from the strong monotonicity of the operator (see Fact \ref{fact:rel-acc-bounds-distance-vi}). 

Since $E_t$ is bounded by the assumption made in the statement of Theorem \ref{thm:vi-generalized-rr-convergence}, we focus on bounding $F_t$.
We have 
\begin{align*}
    \frac{\kappa}{2^t\lambda}\ex{}{\relacc_\cD^{(t)}\br{z^*_{S,t}}}
    &= \frac{\kappa}{2^t\lambda}\,\,\ex{}{\ex{}{\relacc_\cD^{(t)}\br{z^*_{S,t}} \Big| \cF_{t-1}}} \\
    &\leq  \frac{\kappa}{2^{t}\lambda}\Big(\ex{}{\ex{}{\erelacc^{(t)}\br{z^*_{S,t}}\Big| \cF_{t-1}}}+\frac{\kappa\vilip^2}{2^t\lambda n'}\Big) \\
    &= \frac{\kappa}{2^{t}\lambda}\Big(\ex{}{\ex{}{\ip{G_S^{(t)}(z_{S,t}^*)}{z_{S,t}^*-z^*_t}\Big| \cF_{t-1}}}+\frac{\kappa\vilip^2}{2^t\lambda n'}\Big) \\
    &\leq \frac{\kappa^2\vilip^2}{2^{2t}\lambda^2 n'} 
    \leq \frac{\rad^2}{2304 \cdot 2^{2t}\kappa^2}. 
\end{align*}
The first inequality comes from the fact that stability implies generalization for $H^{(t)}$, Lemma \ref{lem:ra-stab-gen-vi}. Note the algorithm which outputs this exact equilibrium point is $\frac{\lip}{2^t\lambda n'}$ uniform argument stable (see Lemma \ref{lem:sc-op-stability}/Assumption \ref{assm:operator}).
The second inequality comes from the fact that $z^*_{S,t}$ is the exact empirical equilibrium point of the regularized objective, and so for any $z\in\cZ$, $\ip{G_S^{(t)}(z_{S,t}^*)}{z_{S,t}^*-z} \leq 0$. The final inequality uses the setting of $\lambda$.

We thus have a final bound $2(E_t + F_t) \leq \frac{\rad^2}{2^{2t}}$.

\paragraph{Property \ref{prop:p2-vi}:}
Now assume $B_{t-1}$ holds. 
We have 
\old{
\ifarxiv \vfill \fi
{\small
\begin{align}
    \ex{}{\norm{[w^*_t,\theta^*_t]-[\bar{w}_{t-1},\bar{\theta}_{t-1}]}^2} 
    &\leq 2\ex{}{\norm{[w^*_{t},\theta^*_{t}] - [\widetilde{w}_{t-1},\widetilde{\theta}_{t-1}]}^2} + 2\ex{}{\norm{[\widetilde{w}_{t-1},\widetilde{w}_{t-1}] - [\bar{w}_{t-1},\bar{\theta}_{t-1}]}}^2 \nonumber \\
    &\leq 2\ex{}{\norm{[w^*_{t},\theta^*_{t}] - [\widetilde{w}_{t-1},\widetilde{\theta}_{t-1}]}^2} + 4E_{t-1} + 4F_{t-1}.\label{eq:proptwo-triangle-bound} %
\end{align}} \ifarxiv \vfill \fi 
\ifarxiv \noindent \fi Above $E_{t-1}$ and $F_{t-1}$ are as defined in \eqref{eq:EFG}.
We bound the remaining squared distance term in the following.  First, note that the primal function $F^{(t)}(\cdot,\theta_t^*)$ is strongly convex and $\forall w\in\cW$ it holds that
$\ip{\nabla_w F_{\cal D}^{(t)}(w_t^*,\theta_t^*)}{w_t^* - w} \leq 0$.  
Similar facts hold for %
$-F^{(t)}(w_t^*,\cdot)$. Thus we have
\ifarxiv \vfill \fi
\begin{align*}
    &\ex{}{\norm{[w^*_{t},\theta^*_{t}] - [\widetilde{w}_{t-1},\widetilde{\theta}_{t-1}]}^2} =\ex{}{\norm{\widetilde{w}_{t-1}-w^*_{t}}^2+ \|\theta^*_{t} -\widetilde{\theta}_{t-1}\|^2}  \\
    &\leq \ex{}{\frac{1}{2^t\lambda}\br{F_{\cD}^{(t)}(\widetilde{w}_{t-1},\theta_t^*) - F_{\cD}^{(t)}(w^*_{t},\theta_t^*) +  F_{\cD}^{(t)}(w^*_{t},\theta_t^*) - F_{\cD}^{(t)}(w_t^*, \widetilde{\theta}_{t-1})}} \\
    &= \mathbb{E}\Big[\frac{1}{2^t\lambda}\br{F_{\cD}^{(t-1)}(\widetilde{w}_{t-1},\theta_t^*) - F_{\cD}^{(t-1)}(w_t^*, \widetilde{\theta}_{t-1})} + \norm{\widetilde{w}_{t-1} - \bar{w}_{t-1}}^2 - \norm{\theta^*_t - \bar{\theta}_{t-1}}^2 \\
    &\quad -\norm{w^*_{t} - \bar{w}_{t-1}}^2 + \|\widetilde{\theta}_{t-1} - \bar{\theta}_{t-1}\|^2 \Big] \\
    &\leq \ex{}{\frac{1}{2^t\lambda}\br{F_{\cD}^{(t-1)}(\widetilde{w}_{t-1},\theta_t^*) - F_{\cD}^{(t-1)}(w_t^*, \widetilde{\theta}_{t-1})} + \norm{[\widetilde{w}_{t-1},\widetilde{\theta}_{t-1}] - [\bar{w}_{t-1},\bar{\theta}_{t-1}]}^2} \\
    &\leq \ex{}{\frac{1}{2^t\lambda}\br{F_{\cD}^{(t-1)}(\widetilde{w}_{t-1},\theta_t^*) - F_{\cD}^{(t-1)}(w_t^*, \widetilde{\theta}_{t-1})}} + 2E_{t-1} + 2F_{t-1} \\
    &\leq \ex{}{\frac{1}{2 \cdot 2^{t-1}\lambda}\br{\gapfunc^{(t-1)}(\widetilde{w}_{t-1},\widetilde{\theta}_{t-1})}} + 2E_{t-1} + 2F_{t-1} \\
    &\leq \frac{1}{2}G_{t-1} + 2E_{t-1} + 2F_{t-1}.
\end{align*} 
}
\begin{align*}
    \ex{}{\norm{z^*_{t} - z^*_{t-1}}^2} 
    &\leq \ex{}{\frac{\kappa}{2^t \lambda}\ip{G_{\cD}^{(t)}(z^*_{t-1})}{z^*_{t-1}-z_t^*}} \\
    &= \ex{}{\frac{\kappa}{2^t \lambda}\ip{G_{\cD}^{(t-1)}(z^*_{t-1}) + 2^t\lambda\rho(z^*_{t-1}-\bar{z}_{t-1})}{z^*_{t-1}-z_t^*}} \\
    &= \ex{}{\frac{\kappa}{2^t \lambda}\ip{G_{\cD}^{(t-1)}(z^*_{t-1})}{z^*_{t-1}-z_t^*} + \kappa\ip{\rho(z^*_{t-1}-\bar{z}_{t-1})}{z^*_{t-1}-z_t^*}} \\
    &\overset{(i)}{\leq} \ex{}{\frac{\kappa^2}{2}\dualns{\rho(z^*_{t-1}-\bar{z}_{t-1})}^2 + \frac{1}{2}\|z^*_{t-1}-z^*_t\|^2} \\
    &\overset{(ii)}{\leq} \ex{}{\frac{\kappa^2}{2}\normns{z^*_{t-1}-\bar{z}_{t-1}}^2 + \frac{1}{2}\|z^*_{t-1}-z^*_t\|^2}. \\
\end{align*} 
Inequality $(i)$ above comes from Young's inequality and the fact that $z^*_{t-1}$ is the equilibrium point w.r.t. $G_{\cD}^{(t-1)}$. Inequality $(ii)$ comes from Fact \ref{fact:grad-dual-relation}/Assumption \ref{assm:operator}. After re-arranging we can continue as follows:
\begin{align*}
 \ex{}{\norm{z^*_{t} - z^*_{t-1}}^2} \leq \kappa^2\ex{}{\|z^*_{t-1}-\bar{z}_{t-1}\|^2}  
 \leq \kappa^2(E_{t-1} + F_{t-1}) \leq \frac{\rad^2}{2^{2t}}.
\end{align*}
\end{proof}

We now turn to analyzing the utility of the algorithm to complete the proof. 
\begin{proof}[Proof of Theorem \ref{thm:vi-generalized-rr-convergence}]
Using the fact that $\vigapfunc$ is $\lip$-Lipschitz and property \ref{prop:p1-vi}, we have 
\begin{align}
    \ex{}{\vigapfunc(\bar{z}_T) - \vigapfunc(z_{T}^*)} 
    &\leq  \lip\ex{}{\norm{\bar{z}_T - z_{T}^*}} \nonumber \\
    &\leq \frac{\rad\lip}{2^T}
    \leq \rad^2\lambda. \label{eq:error-to-final-min-vi}
\end{align}
Note that because the above is a statement with respect to the unregularized gap function, we do not have to worry about whether or not the regularization term is smooth.

What remains is showing $\ex{}{\vigapfunc(z_{T}^*)}=O(\log(n)\rad^2\lambda)$. %
By the definition of $G_{\cD}^{(T)}$ we have 
\ifarxiv \vfill \fi
\begin{align*}
    G_{\cD}(z)
    &= G_{\cD}^{(T)}(z) - 2\lambda \sum\limits_{t=0}^{T-1} 2^{t+1}\rho(z - \bar{z}_t) 
\end{align*}
\ifarxiv \vfill \noindent \fi
Let $z' = \argmax_{z'\in\cZ}\bc{\ip{G_{\cD}(z')}{z_T^*-z'}}$. 
We obtain the following bound on the $ \vigapfunc(w^*_T,\theta^*_T)$. 
\ifarxiv \vfill \fi
\begin{align*}
   \vigapfunc(z^*_T) 
   &=
   \ip{G_{\cD}^{(T)}(z')}{z^*_T - z'} 
   + \ip{2\lambda \sum\limits_{t=0}^{T-1} 2^{t+1}\rho(z' -\bar{z}_t)}{ z' - z^*_T} \\
   &\overset{(i)}\leq
   \ip{2\lambda \sum\limits_{t=0}^{T-1} 2^{t+1}\rho(z' -\bar{z}_t)}{ z' - z^*_T} \\
   &\overset{(ii)}{\leq} \ip{2\lambda \sum\limits_{t=0}^{T-1} 2^{t+1}\rho(z_{T}^* - \bar{z}_t)}{z^*_T - z'} \\
   &\overset{(iii)}{\leq} 2\rad \lambda \sum\limits_{t=0}^{T-1} 2^{t+1}\dualns{\rho(z_{T}^* - \bar{z}_t)} \\
   &\overset{(iv)}{\leq} 2\rad \lambda \sum\limits_{t=0}^{T-1} 2^{t+1}\norm{z_{T}^* - \bar{z}_t}.
\end{align*}
\ifarxiv \vfill \noindent \fi
Above, $(i)$ comes from the fact that $z_T^*$ is the equilibrium point of $G_{\cD}^{(T)}$. Inequality $(ii)$ uses monotonicity of $\rho$, i.e. $0 \leq \ip{\rho(z_T^*-\bar{z}_T)-\rho(z'-\bar{z}_T)}{z_T^*-z'}$. Inequality $(iii)$ comes from Holder's inequality and a triangle inequality. Finally, $(iv)$ comes from Assumption \ref{assm:operator}.

Taking the expectation on both sides of the above we have the following derivation,
\begin{align}
    \ex{}{\vigapfunc(z_{T}^*)} 
    &\leq 2\rad\ex{}{\lambda \sum\limits_{t=0}^{T-1} 2^{t+1}\norm{z_{T}^* - \bar{z}_t}} \nonumber \\
    &\overset{(i)}{\leq} 4\rad\ex{}{\lambda \sum\limits_{t=0}^{T-1} 2^{t} \br{\norm{z_{t}^* - \bar{z}_t} + \sum\limits_{r=t}^{T-1}\norm{z_{r+1}^*  - z_{r}^*}}} \nonumber \\
    &= 4\rad\ex{}{\lambda \sum\limits_{t=0}^{T-1} 2^{t} \norm{z_{t}^* - \bar{z}_t} + \lambda\sum\limits_{t=0}^{T-1} 2^{t}\sum\limits_{r=t}^{T-1} \norm{z_{r+1}^*  - z_{r}^*}} \nonumber \\
    &\overset{(ii)}{=} 4\rad\ex{}{\lambda \sum\limits_{t=0}^{T-1} 2^{t} \norm{z_{t}^* - \bar{z}_t} + \lambda\sum\limits_{r=0}^{T-1}\sum\limits_{t=0}^{r-1} 2^{t} \norm{z_{r+1}^*  - z_{r}^*}} \nonumber \\
    &= 4\rad\ex{}{\lambda \sum\limits_{t=0}^{T-1} 2^{t} \norm{z_{t}^* - \bar{z}_t} + \lambda\sum\limits_{r=0}^{T-1}\norm{z_{r+1}^*  - z_{r}^*}\sum\limits_{t=0}^{r-1} 2^{t}} \nonumber \\
    &\overset{(iii)}{\leq} 4\rad\br{\lambda \sum\limits_{t=0}^{T-1} 2^{t}\br{\frac{\rad}{2^{t}}} + \lambda\sum\limits_{r=1}^{T-1}\br{\frac{\rad
    }{2^{r}}}\sum\limits_{t=0}^{r-1} 2^{t}} \nonumber \\
    &\leq 4\rad\br{ \lambda \sum\limits_{t=0}^{T-1} 2^{t}\br{\frac{\rad}{2^{t}}} + \lambda\sum\limits_{r=1}^{T-1}\br{\frac{\rad}{2^{r-1}}  }\cdot (2^{r}-1)} \nonumber \\
    &= 4\lambda \sum\limits_{t=0}^{T-1} \rad^2 + 8\lambda\sum\limits_{r=1}^{T-1} \rad^2 \nonumber \\
    &\leq 12T \lambda \rad^2 
    \label{eq:final-gap-bound-vi}
\end{align}%
Above, $(i)$ and the following inequality both come from the triangle inequality. Equality $(ii)$ is obtained by rearranging the sums. Inequality $(iii)$ %
comes from applying properties \ref{prop:p1-vi} and \ref{prop:p2-vi} proved above. The last equality comes from the setting of $\lambda$ and $T$.

Now using this result in conjunction with Eqn. \eqref{eq:error-to-final-min-vi} 
we have
\begin{align*}
    \vigap(\cR_{\text{SVI}}) = \sqrt{2}\lambda\rad^2 + 12T\lambda\rad^2 = O\br{\log(n)\rad^2\lambda}.
\end{align*}
Above we use the fact that $T=\log(\frac{\lip}{\rad\lambda})$ and $\lambda \geq \frac{\lip}{\rad\sqrt{n'}}$, and thus $T=O(\log(n))$.
\end{proof}

Finally, we prove Theorem \ref{thm:recursive-regularization-vi} leveraging the relative stationarity assumption. 

\begin{proof}[Proof of Theorem \ref{thm:recursive-regularization-vi}]
First, observe that under the setting of $\lambda=\frac{48}{\rad}\br{\alphad\kappa^3 + \frac{\vilip\kappa^2}{\sqrt{n'}}}$ used in the theorem statement that $\log(n)\rad^2\lambda = O\br{\log(n)\rad\alphad\kappa^3 + \frac{\log^{3/2}(n)\rad\vilip\kappa^2}{\sqrt{n}}}$. Thus what remains is to show that the distance condition required by Theorem \ref{thm:vi-generalized-rr-convergence} holds. That is, we now show that if $\weakalg$ satisfies $\alphad$-relative stationarity, then 
for all $t\in [T]$ it holds that $\ex{}{\norm{\bar{z}_t - z^*_{S,t}}^2} \leq \frac{\rad^2}{12\cdot2^{2t}\kappa^2}$.

To prove this property, we must leverage the induction argument made by Lemma \ref{lem:phase-distance-bound-vi}. Specifically, to prove the condition holds for some $t\in[T]$, assume 
$\rad_t^2 = \ex{}{\norm{z_{t}^* - z^*_{t-1}]}}^2 \leq \frac{\rad^2}{2^{2(t-1)}}$ (recall the base case for $t=1$ trivially holds). As shown in the proof of Lemma \ref{lem:phase-distance-bound-vi}, this implies that the quantities
$E_t,F_t$ (as defined in \ref{eq:EF-vi}) are bounded by $\frac{\rad^2}{2304 \cdot 2^{2t}}$. We thus have
\begin{align} \label{eq:Et-bound-relative-acc-vi}
    \ex{}{\norm{\bar{z}_t - z^*_{S,t}}^2}
    &\overset{(i)}{\leq} \frac{\kappa\ex{}{\ip{G_S^{(t)}(\bar{z}_t)}{\bar{z}_t - z^*_{S,t}}}}{2^{t}\lambda} 
    \overset{(ii)}{\leq} \frac{2\kappa\alphad\rad}{2^{2t}\lambda} 
    \overset{(iii)}{\leq} \frac{\rad^2}{12 \cdot 2^{2t}\kappa^2},
\end{align}
where $B_t$ is as defined in property \ref{prop:p2-vi}.
Inequality $(i)$ comes from the strong monotonicity of $G_S^{(t)}$, Fact \ref{fact:rel-acc-bounds-distance-vi}.
Inequality $(iii)$ comes from the setting $\lambda \geq 48\alphad\kappa/\rad$.
Inequality $(ii)$ comes from the $\alphad$-relative stationarity assumption on $\weakalg$, which holds so long as the expected distance is sufficiently bounded and the operator is bounded. In this regard, note that
\begin{align*}
    \ex{}{\|z^*_{S,t}-\bar{z}_{t-1}\|} &\leq  \ex{}{\|z^*_{S,t}-z^*_t\|+\|z^*_t-z^*_{t-1}\|+\|z^*_{t-1}-\bar{z}_{t-1}\|} \\
    &\leq (\sqrt{F_t}+\rad_t+\sqrt{E_{t-1}}+\sqrt{F_{t-1}}) \leq \frac{2\rad}{2^t}.
\end{align*}
Further, each $g^{(t)}$ is $5\lip$-bounded. That is, observe 
\begin{align*}    \max\limits_{z\in\cZ}\dualns{g^{(t)}(z,x)} 
\leq \dualns{z} + \sum_{k=0}^{t-1}2^{k+1}\lambda\dualns{\rho(z-\bar{z}_t)} 
 \leq \lip + \sum_{k=0}^{t-1}\rad 2^{k+1}\lambda\leq \lip + 4\rad 2^{T}\lambda \leq 5\lip.
\end{align*} 
\end{proof}

\subsection{Near Linear Time Algorithm for SVIs in the $\ell_2$ Setting} \label{app:linear-time-l2-svi-result}
Because we assume the operator is Lipschitz, in the $\ell_2$ setting, we can leverage existing accelerated optimization techniques to achieve a near linear time version of $\weakalg$, in a similar fashion to \cite{ZTOH22, BGM23}. Specifically, the work \cite{PB16} gives the following result for strongly monotone variational inequalities when applying their accelerated SVRG algorithm \footnote{In the main body of \cite{PB16}, the authors discuss only saddle point problems. However, they prove their result more generally for monotone operators. See their discussion in Section 6 and Appendix A of their paper.}.
\begin{lemma}(Implicit in \cite[Theorem 3]{PB16}) \label{lem:accel-alg-guarantee}
Let $\smooth,\mu,K>0$ and $c$ a universal constant. Let $g:\cZ\times\cX\mapsto\re$ be monotone and $\smooth$-Lipschitz 
and $\rho:\cZ\mapsto\re$ a $\mu$-strongly monotone operator. Let $z_S^*$ be the equilibrium of $G_S(z) = \frac{1}{n}\sum_{x\in S} g(z;x) + \rho(z)$. There exists an algorithm, which in $O(n+\sqrt{n}K\frac{\smooth}{\mu})$ gradient evaluations find a point $\bar{z}$ such that $\ex{}{\|z^*-\bar{z}\|_2} = c \rad e^{-K}$.
\end{lemma}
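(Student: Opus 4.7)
The plan is to directly apply the accelerated stochastic variance-reduction algorithm of \cite{PB16} to the composite operator $G_S(z) + \rho(z)$ and then translate the resulting contraction bound into the stated expected-distance guarantee. First, I would verify the preconditions of \cite[Theorem~3]{PB16}. The operator $G_S + \rho$ is a finite sum of $n$ monotone, $\smooth$-Lipschitz components $g(\cdot;x_i)/n$ plus the $\mu$-strongly monotone regularizer $\rho$, so the aggregate is $\mu$-strongly monotone with per-component Lipschitz constant $\Theta(\smooth)$ (in the intended applications $\rho = \nabla(\frac{1}{2}\|\cdot\|_2^2)$ contributes only an $O(1)$ Lipschitz term that is absorbed into a universal constant). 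The unique equilibrium of $G_S + \rho$ is precisely the target point $z_S^*$.

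Next, I would invoke the accelerated SVRG-style linear convergence guarantee of \cite[Theorem~3]{PB16}: after an initial reference-point computation costing $O(n)$ gradient evaluations and $K$ outer epochs each costing $O(\sqrt{n \smooth / \mu})$ amortized stochastic gradient evaluations, the produced iterate $\bar z$ satisfies $\ex{}{\|\bar z - z_S^*\|_2^2} \leq C^2 \rad^2 e^{-2K}$ for a universal constant $C$. Totalling, the gradient-evaluation cost is $O(n + K\sqrt{n}\,\smooth/\mu)$, matching the complexity in the statement. Finally, Jensen's inequality delivers $\ex{}{\|\bar z - z_S^*\|_2} \leq \sqrt{\ex{}{\|\bar z - z_S^*\|_2^2}} \leq C \rad e^{-K}$, which is the claimed expected-distance bound with $c = C$.

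The main obstacle is purely bookkeeping. \cite{PB16} state their guarantee in the language of saddle-point proximal iterations with iteration-counted contraction rates and with a fixed target accuracy $\epsilon$, whereas here we need it restated for strongly-monotone variational inequalities in the expected (unsquared) $\ell_2$ distance and parametrized by the log-accuracy $K$. This translation is transparent since (i) the extension of the PB16 framework to strongly-monotone finite-sum VIs is already noted in their paper (their saddle-point analysis is carried out through the monotone operator viewpoint), and (ii) passing from squared to unsquared distance is handled by Jensen. No genuinely new analysis is needed beyond reparametrizing the accuracy target as $\epsilon = e^{-K}$ and rebalancing the additive cost of the single reference-point pass against the per-epoch accelerated cost.
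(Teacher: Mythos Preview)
The paper does not actually prove this lemma; it merely states it as implicit in \cite[Theorem~3]{PB16}, with a footnote pointing out that the analysis in \cite{PB16} is carried out at the level of monotone operators (their Section~6 and Appendix~A), not just saddle points. Your proposal is therefore not competing against any argument in the paper --- it is a reasonable elaboration of exactly the extraction the paper leaves to the reader: check that $G_S+\rho$ fits the strongly-monotone finite-sum template of \cite{PB16}, read off the linear contraction in squared distance, and pass to the unsquared distance via Jensen. This matches the paper's intent.

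One small bookkeeping slip: you write the per-epoch cost as $O(\sqrt{n\smooth/\mu})$ but then total it to $O(n + K\sqrt{n}\,\smooth/\mu)$; the latter is what the lemma (and \cite{PB16}'s accelerated variant) actually gives, so the per-epoch cost should be $O(\sqrt{n}\,\smooth/\mu)$, not $O(\sqrt{n\smooth/\mu})$. This is purely cosmetic and does not affect the validity of your plan.
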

We now construct $\weakalg$ in the following way. At each round $t\in[T]$, we use the accelerated algorithm mentioned 
above to find a point $\hat{z}$ such that $\ex{}{\|\hat{z}-z_{S,t}^*\|_2} \leq (\frac{\delta\lip}{52^t\lambda n'})$, where $z_{S,t}^*$ is the equilibrium point of $G_S^{(t)}(z) = \frac{1}{n'}\sum_{x\in S}g^{(t)}(z;x)$. %
We then have $\weakalg$ output the point $\bar{z}_t = \hat{z}_t + \xi_t$, where $\xi_t \sim \cN(0,\mathbb{I}_d \sigma_t^2)$ and $\sigma_t = \frac{8\lip\sqrt{2/\delta}}{2^t\lambda n' \epsilon}$. Using this construction, we can obtain the following result.
\begin{theorem} \label{thm:l2-minimax-alg}
Let $\weakalg$ be as described above. Then Algorithm \ref{alg:recursive-regularization} is $(\epsilon,\delta)$-DP and when run with $\lambda = \frac{48}{\rad}\br{\frac{\lip}{\sqrt{n'}} + \frac{\lip\sqrt{d\log(2/\delta)}}{n'\epsilon}}$ satisfies
\begin{align*}
    \gap(\cR_{\text{SVI}}) = O\br{\frac{\log^{3/2}(n)\rad\lip}{\sqrt{n}} + \frac{\log^2(n)\rad\lip\sqrt{d\log(1/\delta)}}{n\epsilon}},
\end{align*}
and runs in at most $O(n + \smooth n\log(n/\delta))$ gradient evaluations.
\end{theorem}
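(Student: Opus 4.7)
The plan is to invoke Theorem \ref{thm:vi-generalized-rr-convergence} directly by verifying its hypothesis on the expected squared distance $\ex{}{\|\bar{z}_t-z^\ast_{S,t}\|_2^2}$, rather than working through the $\alphad$-relative stationarity framework of Theorem \ref{thm:recursive-regularization-vi}. This is natural because Lemma \ref{lem:accel-alg-guarantee} hands us a direct $\ell_2$ distance guarantee for SVRG, and the additional Gaussian perturbation contributes a cleanly controlled second-moment term. The proof decomposes into three pieces: privacy, verification of the distance hypothesis, and the gradient complexity bookkeeping.

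For privacy, since the rounds of Algorithm \ref{alg:recursive-regularization-general} operate on disjoint subsets $S_1,\ldots,S_T$, parallel composition (with post-processing absorbing the dependence of round $t$ on $\bar{z}_{t-1}$) reduces the argument to per-round $(\epsilon,\delta)$-DP. At round $t$, the total regularization coefficient for $G_S^{(t)}$ is $\sum_{k=1}^t 2^k\lambda = \Theta(2^t\lambda)$ and $\rho=\nabla(\tfrac12\|\cdot\|_2^2)$ is $1$-strongly monotone, so by Lemma \ref{lem:sc-op-stability} the \emph{exact} empirical equilibrium $z_{S,t}^\ast$ is $O(\lip/(2^t\lambda n'))$-UAS in $\ell_2$. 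The SVRG approximation $\hat{z}_t$ adds only a further $\tfrac{\delta\lip}{5\cdot 2^t\lambda n'}$ in expectation, which is absorbed into the same order. Adding Gaussian noise at the calibrated scale $\sigma_t = \Theta(\lip\sqrt{\log(1/\delta)}/(2^t\lambda n'\epsilon))$ then yields $(\epsilon,\delta)$-DP per round via the Gaussian mechanism.

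For utility, we verify $\ex{}{\|\bar{z}_t-z^\ast_{S,t}\|_2^2}\le \rad^2/(12\cdot 2^{2t})$ (using $\kappa=1$ in the $\ell_2$ setting). Writing $\bar{z}_t=\Pi_\cZ(\hat{z}_t+\xi_t)$ and using non-expansiveness of projection plus a triangle inequality, this expectation is bounded by $2\ex{}{\|\hat{z}_t-z_{S,t}^\ast\|_2^2}+2\ex{}{\|\xi_t\|_2^2}$. The first term is negligible by construction of the SVRG target accuracy, and the second equals $d\sigma_t^2 = O(d\lip^2\log(1/\delta)/(2^{2t}\lambda^2(n')^2\epsilon^2))$. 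This is $\le \rad^2/(12\cdot 2^{2t})$ precisely when $\lambda\gtrsim \lip\sqrt{d\log(1/\delta)}/(\rad n'\epsilon)$, which is the second term in the prescribed $\lambda$; the first term $\lip/(\rad\sqrt{n'})$ handles the hypothesis $\lambda\ge 48\vilip\kappa^2/(\rad\sqrt{n'})$ needed by Theorem \ref{thm:vi-generalized-rr-convergence}. Applying that theorem yields $\vigap(\cR_{\text{SVI}}) = O(\log(n)\rad^2\lambda)$, and substituting the chosen $\lambda$ together with $n'=n/\log_2(n)$ produces the stated rate.

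For runtime, at round $t$ we run accelerated SVRG on $n'$ samples against an operator with strong monotonicity $\Theta(2^t\lambda)$ and Lipschitz constant $\smooth+O(2^t\lambda)$, to target accuracy $\tfrac{\delta\lip}{5\cdot 2^t\lambda n'}$. By Lemma \ref{lem:accel-alg-guarantee} this costs $O(n' + \sqrt{n'}\log(n/\delta)\cdot(1+\smooth/(2^t\lambda)))$ gradient evaluations, and summing the resulting geometric series over $t\in[T]$ with $T=O(\log n)$ gives the claimed $O(n+\smooth n\log(n/\delta))$ after substituting the chosen $\lambda$. The main obstacle I anticipate is formally converting the in-expectation SVRG accuracy bound of Lemma \ref{lem:accel-alg-guarantee} into a worst-case (or very high-probability) $\ell_2$-sensitivity bound usable by the Gaussian mechanism; this likely requires either running SVRG for an extra $\log(1/\delta)$ factor of iterations to push the tail below $\delta$, or a coupling argument that synchronizes the SVRG randomness across neighboring datasets.
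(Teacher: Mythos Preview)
Your plan is essentially the paper's proof: invoke Theorem \ref{thm:vi-generalized-rr-convergence} directly, verify the per-round squared-distance hypothesis by splitting $\bar z_t-z_{S,t}^\ast$ into the SVRG error and the Gaussian perturbation, and handle privacy via stability of the regularized equilibrium plus the Gaussian mechanism. The utility and runtime parts match (the paper even uses the exact decomposition $\E\|\hat z_t+\xi_t-z_{S,t}^\ast\|_2^2=\E\|\hat z_t-z_{S,t}^\ast\|_2^2+\E\|\xi_t\|_2^2$ rather than your triangle inequality, but that is cosmetic).

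The one place you leave open---converting the in-expectation SVRG guarantee into a sensitivity bound---is exactly where the paper's construction has already done the work for you, and the resolution is simpler than either of your suggestions. Notice that the prescribed SVRG target accuracy is $\tfrac{\delta\lip}{5\cdot 2^t\lambda n'}$, i.e.\ it already carries a factor of $\delta$. A single application of Markov's inequality to $\|\hat z_t-z_{S,t}^\ast\|_2$ then gives $\|\hat z_t-z_{S,t}^\ast\|_2\le \tfrac{\lip}{5\cdot 2^t\lambda n'}$ with probability at least $1-\delta$; combined with the $\tfrac{2\lip}{2^t\lambda n'}$ stability of $z_{S,t}^\ast$ from Lemma \ref{lem:sc-op-stability}, a triangle inequality yields an $O(\lip/(2^t\lambda n'))$ $\ell_2$-sensitivity for $\hat z_t$ on a $(1-\delta)$ event. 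The Gaussian noise at scale $\sigma_t=\Theta(\lip\sqrt{\log(1/\delta)}/(2^t\lambda n'\epsilon))$ then gives $(\epsilon,\delta)$-DP per round, with the Markov failure probability absorbed into the $\delta$ budget. No coupling of SVRG randomness or extra $\log(1/\delta)$ passes are needed---the $\delta$ in the target accuracy (and hence the $\log(1/\delta)$ inside $K$) is precisely what makes Markov sufficient.
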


\begin{proof}
\textit{Privacy Guarantee} We show that $\weakalg$ is $(\epsilon,\delta)$ at any iteration $t\in[T]$ using the stability properties of the regularized operator. Specifically, the exact equilibrium to $G^{t}_S$. Now because $\weakalg$ guarantees $\ex{}{\|\hat{z}-z_{S,t}^*\|_2} \leq (\frac{\delta\lip}{52^t\lambda n'})$, an application of Markov's inequality implies that with probability at least $1-\delta$ that $\|\hat{z}_t-z_{S,t}^*\|_2 = \br{\frac{1}{\delta}\br{c\rad e^{-K}}} \leq \frac{\lip}{2^T\lambda n}$. Thus by a triangle inequality $\hat{z}_t$ is $(\frac{2\lip}{2^t\lambda n'})$-stable with probability at least $1-\delta$. The guarantees of the Gaussian mechanism thus imply $\weakalg$ is $(\epsilon,\delta)$-DP.

\textit{Utility Guarantee:} We prove the utility guarantee by leveraging Theorem \ref{thm:vi-generalized-rr-convergence}, which guarantees $\vigap(\cR)=O(\log(n)\rad^2\lambda)=O\br{\frac{\log^{3/2}(n)\rad\lip}{\sqrt{n}} + \frac{\log^2(n)\rad\lip\sqrt{d\log(1/\delta)}}{n\epsilon}}$, so long as for every $t\in[T]$ it holds that
$\ex{}{\|\bar{z}_t - z_{S,t}^*}\|_2^2 \leq \frac{\rad^2}{12\cdot 2^{2t}}$. Note here that under the choice of $\rho$ in Eqn. \eqref{eq:rho-choice}, we satisfy Assuption \ref{assm:operator} with $\kappa=1$. We thus finish the proof with the following analysis,
\begin{align*}
    \ex{}{\norm{\bar{z}_t - z_{S,t}^*}_2^2} 
    &= \ex{}{\|\xi_t\|_2^2 + \|\hat{z}_t - z_{S,t}^*\|_2^2} \\
    &\leq d\sigma_t^2 + \br{\frac{\delta}{5}\cdot\frac{\lip}{2^t \lambda n'}}^2 \\
    &\leq \frac{64 d \lip^2\log(2/\delta)}{2^{2t}\lambda^2(n')^2\epsilon^2} + \frac{\rad^2}{25 \cdot 2^{2t}} 
    \leq \frac{\rad^2}{12 \cdot 2^{2t}}.
\end{align*}

\textit{Running Time:}
By the guarantees of Lemma \ref{lem:accel-alg-guarantee}, we can achieve the condition 
$\ex{}{\|\hat{z}-z_{S,t}^*\|_2} \leq (\frac{\delta\lip}{52^t\lambda n'})$
made in the description of $\weakalg$ by setting $K = \log\big(\frac{c\rad}{\delta} \cdot \frac{2^T\lambda n}{\lip}\big)$. Recall $T=\log_2(\frac{\lip}{\kappa \rad\lambda})\leq n$. Thus the overall running time is $O(n + \frac{\smooth}{\mu}K\sqrt{n})=O(n + \smooth n\log(n/\delta))$.

\end{proof}

\section{Lower Bound for SVIs} \label{app:svi-lower-bound}
The lower bound for SVIs in the $\ell_2$ setting was established in \cite{BG22}. Their technique can easily be extended to other geometries. Specifically, the lower bound comes from two observations. First, for linear losses, the strong VI-gap is equal to the excess population risk when the operator in question is the gradient. Second, the nearly tight lower bound constructions for DP stochastic minimization problems use linear losses.

We establish the first fact more formally here.
\begin{lemma}
Let $f(z;x)=\ip{z}{x}$ and define the operator $g(z;x)=\nabla f(z;x) = x$. Then $\vigap(z) = F_{\cD}(z)-\min_{u\in\cZ}\bc{F_{\cD}(u)}$. That is, the strong VI-gap w.r.t. $g$ is equal to the excess population risk w.r.t. the $f$.
\end{lemma}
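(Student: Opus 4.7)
The plan is a direct calculation that exploits the fact that $g(z;x)=x$ is independent of $z$. First I would observe that the population operator $G_{\cD}(u) = \mathbb{E}_{x\sim\cD}[x]$ is a constant vector in $u$; denote it $\bar{x}$. Likewise, $F_{\cD}(z) = \mathbb{E}_{x\sim\cD}[\ip{z}{x}] = \ip{z}{\bar{x}}$ by linearity of expectation and of the inner product.

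Next I would plug these into the definition of the strong VI-gap for a deterministic point $z$ (i.e., the max inside the expectation in \eqref{eqn:vi_gap}):
\begin{align*}
\vigap(z) \;=\; \max_{u\in\cZ}\bc{\ip{G_{\cD}(u)}{z-u}} \;=\; \max_{u\in\cZ}\bc{\ip{\bar{x}}{z-u}} \;=\; \ip{\bar{x}}{z} - \min_{u\in\cZ}\bc{\ip{\bar{x}}{u}}.
\end{align*}
The first equality uses that $G_{\cD}(u)=\bar x$ does not depend on $u$, so it can be pulled out of the maximization; the second splits the inner product and turns the maximum of a negated linear functional into a minimum.

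Finally I would identify both terms with $F_{\cD}$: the first term equals $F_{\cD}(z)$, and the second equals $\min_{u\in\cZ} F_{\cD}(u)$, yielding $\vigap(z) = F_{\cD}(z) - \min_{u\in\cZ}\{F_{\cD}(u)\}$, which is exactly the excess population risk. There is no real obstacle here; the lemma is essentially a one-line consequence of the linearity of $f$ making $g$ constant in $z$, which collapses the VI-gap maximum into a minimum of the population loss.
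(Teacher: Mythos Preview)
Your proposal is correct and follows essentially the same approach as the paper: both observe that $G_{\cD}(u)=\mathbb{E}_{x\sim\cD}[x]$ is constant in $u$, split the inner product $\ip{\bar x}{z-u}$ into $\ip{\bar x}{z}-\ip{\bar x}{u}$, and convert the maximum over $-\ip{\bar x}{u}$ into $-\min_{u}\ip{\bar x}{u}=-\min_u F_{\cD}(u)$. The only difference is cosmetic---you name the constant $\bar x$ explicitly, while the paper writes $\ex{x\sim\cD}{x}$ throughout.
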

\begin{proof}
We have
\begin{align*}
\vigap(z) &= \max_{u}\bc{\ip{\ex{x\sim\cD}{x}}{z - u}} \\
&= \ip{\ex{x\sim\cD}{x}}{z} + \max_{u}\bc{\ip{\ex{x\sim\cD}{x}}{- u}} \\
&= F_{\cD}(z) - \min_{u\in\cZ}\bc{F_{\cD}(u)}.
\end{align*}
\end{proof}

We now restate the lower bound result of \cite{bassily2021non} for non-Euclidean setups.
\begin{theorem}(\cite[Theorem 7.1]{bassily2021non})
Let $p\in(1,2)$ and $p^*=\frac{p}{p-1}$ and $\cZ=\ball_{\|\cdot\|_p}(1)$. Let $\epsilon>0$ and $0 < \delta < \frac{1}{n^{1+\Omega(1)}}$ and let $f(z;x)=\ip{z}{x}$. For any $(\epsilon,\delta)$-DP algorithm $\cA$, there exists a distribution $\cD$ over $\cX=\ball_{\|\cdot\|_{p^*}}(1)$ such that 
\begin{align*}
    \ex{S\sim\cD^n,\cA}{F{\cD}(\cA(S) - \min_{z\in\cZ}\bc{F_{\cD}(z)}} = \tilde{\Omega}\br{\max\bc{\frac{1}{\sqrt{n}}, \frac{(p-1)\sqrt{d}}{n\epsilon}}}.
\end{align*}
\end{theorem}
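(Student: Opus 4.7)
The plan is to combine two lower-bound mechanisms over a single family of hard distributions: a statistical one yielding $\Omega(1/\sqrt n)$ and a privacy-driven one yielding $\tilde\Omega((p-1)\sqrt d/(n\epsilon))$. The key simplification, enabled by linearity of $f(z;x) = \langle z,x\rangle$, is that the population objective is $F_{\mathcal D}(z) = \langle z,\mu\rangle$ with $\mu = \mathbb{E}_{x\sim\mathcal D}[x]$, so that $\min_{z\in B_{\|\cdot\|_p}(1)}F_{\mathcal D}(z) = -\|\mu\|_{p^*}$ and excess population risk equals $\|\mu\|_{p^*} + \langle \hat z,\mu\rangle$. Thus the SCO problem reduces to a mean-estimation problem for $\mu$ in the dual $\ell_{p^*}$ geometry, where moreover one only needs to recover coordinate signs accurately.

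The hard family I would use takes each coordinate $x_j$ independent and supported on $\{-c_p,+c_p\}$ with $c_p = d^{-1/p^*}$, so that $\|x\|_{p^*}=1$ a.s. The distribution is parametrized by $\beta\in\{-\alpha,+\alpha\}^d$ with $\mathbb{E}[x_j] = c_p\beta_j$. The population minimizer is then $z^* = -\mathrm{sign}(\beta)/d^{1/p}$, and a direct calculation using $1/p+1/p^*=1$ shows that the excess risk at $\hat z$ is proportional to $\alpha$ times the fraction of sign disagreements between $\hat z$ and $z^*$, after accounting for the constraint $\hat z \in B_{\|\cdot\|_p}(1)$. The problem therefore reduces to recovering coordinate signs of a product-Rademacher distribution.

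The statistical component follows by Le Cam / Assouad applied to neighboring $\beta$'s: at $\alpha = \Theta(1/\sqrt n)$, distributions that differ in a single sign have total variation $O(\alpha\sqrt n) = O(1)$, forcing a constant fraction of sign errors in expectation, hence excess risk $\Omega(1/\sqrt n)$. For the privacy component, I would invoke the standard fingerprinting-codes lower bound for $(\epsilon,\delta)$-DP one-way marginals (in the regime $\delta < 1/n^{1+\Omega(1)}$): any DP algorithm that recovers $\Omega(d)$ signs of a product-Rademacher mean within accuracy $\alpha$ requires $n = \tilde\Omega(\sqrt d/(\alpha\epsilon))$. Inverting gives a forced accuracy floor of $\alpha = \tilde\Omega(\sqrt d/(n\epsilon))$ and correspondingly a lower bound of $\tilde\Omega(\sqrt d/(n\epsilon))$ on the excess risk before the geometric factor enters.

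The main obstacle is pinning down the $(p-1)$ factor, which has to emerge from the interplay between the $\ell_p$ constraint and the fingerprinting construction. Intuitively, the scaling $c_p = d^{-1/p^*}$ and the normalization $z^*/d^{1/p}$ combine to make the effective per-coordinate gap in the linear loss scale as $d^{-1}$, while the strong convexity modulus $p-1$ of $\tfrac12\|\cdot\|_p^2$ governs how much smaller the excess risk can be compared to the $\ell_{p^*}$ mean-estimation error. The careful bookkeeping — showing that an $\alpha$-error in the $\ell_{p^*}$ sense translates into excess risk at least $\Omega((p-1)\alpha)$, and no better — is what produces the advertised factor; this typically requires choosing a non-uniform bias pattern $\beta$ (rather than pure $\pm\alpha$) so that the fingerprinting lower bound and the geometric rescaling can be composed without losing constants in $p$, mirroring the construction in \cite{bassily2021non}.
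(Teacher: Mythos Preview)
The paper does not prove this theorem: it is merely restated verbatim from \cite{bassily2021non} as a black box, with no argument given beyond the citation. So there is no ``paper's own proof'' to compare against.

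Your sketch is a reasonable reconstruction of how such lower bounds are typically obtained, and the reduction you identify --- linearity of $f$ turns excess risk into $\|\mu\|_{p^*}+\langle\hat z,\mu\rangle$, so the problem becomes sign recovery of a biased product-Rademacher mean --- is correct and is indeed the skeleton of the argument in \cite{bassily2021non}. The statistical $\Omega(1/\sqrt n)$ term via Assouad and the privacy term via fingerprinting/one-way marginals are the right ingredients.

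The one place where your outline is genuinely incomplete (and you say so yourself) is the origin of the $(p-1)$ factor. Your heuristic that it comes from the strong-convexity modulus of $\tfrac12\|\cdot\|_p^2$ is not quite the mechanism in \cite{bassily2021non}: there, the factor arises from a direct comparison of norms in the reduction from excess risk to $\ell_\infty$ mean-estimation accuracy. Concretely, one shows that if $\hat z$ has small excess risk then the vector $\hat\mu$ obtained by an appropriate coordinatewise transform of $\hat z$ estimates $\mu$ to $\ell_\infty$ accuracy $O(\alpha)$, and the transformation from the $\ell_p$-constrained $\hat z$ to $\hat\mu$ is where a factor of roughly $p^*\sim 1/(p-1)$ appears (equivalently, the excess-risk lower bound picks up $1/p^* \sim p-1$). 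Your suggestion of a non-uniform bias pattern is not needed; the uniform $\pm\alpha$ construction already suffices once the norm comparison is done carefully. Apart from this point, the proposal is on the right track as a summary of the cited result.
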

Note that a linear dependence on $\rad\lip$ can be obtained using classic rescaling arguments.
The two above results (and the result of \cite{BG22}) then imply that the strong VI-gap rate we obtain, $\tilde{O}\br{\rad\lip\max\bc{\frac{1}{\sqrt{n}}, \frac{(p-1)\sqrt{d}}{n\epsilon}}}$, is near optimal for $p\in(1,2]$ when $p-1=\Omega(1)$.

\end{document}